\DeclareFontShape{T1}{lmr}{bx}{sc} { <-> ssub * cmr/bx/sc }{}
\DeclareFontShape{T1}{lmr}{m}{scit}{ <-> ssub * cmr/m/sc }{}
\DeclareFontShape{T1}{lmr}{bx}{scit}{ <-> ssub * cmr/bx/sc }{}
\DeclareMathAlphabet\EuRoman{U}{eur}{m}{n}
\SetMathAlphabet\EuRoman{bold}{U}{eur}{b}{n}
\declaretheorem[style=plain,numberwithin=section,name=Theorem]{theorem}
\declaretheorem[style=plain,sibling=theorem,name=Lemma]{lemma}
\declaretheorem[style=plain,sibling=theorem,name=Proposition]{proposition}
\declaretheorem[style=plain,sibling=theorem,name=Corollary]{corollary}
\declaretheorem[style=definition,sibling=theorem,name=Definition]{definition}
\declaretheorem[style=remark,qed=$\triangleleft$,sibling=theorem,name=Remark]{remark}
\numberwithin{theorem}{section}
\newcommand{\E}{\mathbb E}
\renewcommand{\P}{\Pr}
\newcommand{\gaussnorm}[1]{\ensuremath{\norm{#1}_{\psi_2}}}
\newcommand{\dbeta}[3][ ]{\mathsf{s\text{-}beta}_{#1}\left(#2,#3\right)}
\newcommand{\unif}[1]{\mathsf{unif}\left(#1\right)}
\newtheorem*{theorem*}{Theorem}
\DeclareMathOperator*{\ind}{\mathbbm{1}} %
\newcommand{\indep}{\mathrel{\perp\mkern-9mu\perp}}
\newcommand{\norm}[1]{\ensuremath{\left\lVert #1 \right\rVert}}
\newcommand{\inner}[2]{\left\langle #1 , #2 \right\rangle}%
\DeclareMathOperator*{\argmax}{arg\,max}
\DeclareMathOperator{\diam}{diam}
\DeclareMathOperator{\supp}{supp}
\newcommand{\pos}[1]{\ensuremath{\left(#1\right)_+}}
\newcommand{\trfn}{\phi} 
\newcommand{\tracer}{\mathcal{T}} 
\newcommand{\traceval}{\mathrm{Tr}}
\newcommand{\tracerset}{\mathfrak T}
\newcommand{\Naturals}{\mathbb{N}}
\newcommand{\Reals}{\mathbb{R}}
\newcommand{\ProbMeasures}[1]{\mathcal{M}_1(#1)}
\newcommand{\Dist}{\mathcal D}
\newcommand{\datapt}{Z}
\newcommand{\param}{\theta}
\newcommand{\parspace}{\Theta}
\newcommand{\dataspace}{\mathcal Z}
\newcommand{\defeq}{:=}
\newcommand{\poprisk}{F_{\Dist}}
\newcommand{\Alg}{\mathcal A}
\newcommand{\sphere}{\mathbb S}
\newcommand{\cover}[3]{\mathcal N\left(#1; #2,#3\right)}
\newcommand{\scoprob}{\mathcal{P}}
\newcommand{\lipset}{\mathcal{L}}
\newcommand{\entr}[1]{\mathrm{H}(#1)}
\newcommand{\binaryentr}[1]{\mathrm{H}_{b}\left(#1\right)}
\newcommand{\centr}[2]{\mathrm{H}^{#1}(#2)}
\definecolor{darkblue}{rgb}{0,0,.75}
\titlespacing*{\section}{0pt}{0.05\baselineskip}{0.05\baselineskip}
\newcommand{\proofsubsection}[1]{\subsection{Proof of \texorpdfstring{\Cref{#1}}{\crtCref{#1}}}
\label{pf:#1}
}
\newcommand{\proofsection}[1]{\section{Proofs for \texorpdfstring{\Cref{#1}}{\crtCref{#1}}}
\label{pf:#1}
}
\let\Pr\undefined
\DeclareMathOperator{\Pr}{Pr}
\title{On Traceability in $\ell_p$ Stochastic Convex Optimization}
\author{
    Sasha Voitovych\thanks{Institute for Data, Systems, and Society, Massachusetts Institute of Technology.} 
    \and
    Mahdi Haghifam\thanks{Khoury College of Computer Sciences, Northeastern University.} 
    \and
    Idan Attias\thanks{University of Illinois at Chicago and Toyota Technological Institute at Chicago.} 
    \and
    Gintare Karolina Dziugaite\thanks{Google DeepMind} 
    \and
    Roi Livni\thanks{Department of Electrical Engineering, Tel Aviv University.} 
    \and
    Daniel M. Roy\thanks{Department of Statistical Sciences, University of Toronto and Vector Institute.} 
}
\date{}
\titlespacing{\section}{0pt}{\parskip}{0pt}
\titlespacing{\subsection}{0pt}{\parskip}{0pt}
\titlespacing{\subsubsection}{0pt}{\parskip}{0pt}
\renewcommand{\epsilon}{\varepsilon}
\newcommand*{\whiten}[1]{\llap{\textcolor{white}{{\the\SOUL@token}}\hspace{#1pt}}}
\DeclareRobustCommand*\myul{%
    \def\SOUL@everyspace{\underline{\space}\kern\z@}%
    \def\SOUL@everytoken{%
     \setbox0=\hbox{\the\SOUL@token}%
     \ifdim\dp0>\z@
        \raisebox{\dp0}{\underline{\phantom{\the\SOUL@token}}}%
        \whiten{1}\whiten{0}%
        \whiten{-1}\whiten{-2}%
        \llap{\the\SOUL@token}%
     \else
        \underline{\the\SOUL@token}%
     \fi}%
\SOUL@}
\def\blfootnote{\xdef\@thefnmark{}\@footnotetext}
\begin{document}

\addtocontents{toc}{\protect\setcounter{tocdepth}{0}}

\blfootnote{Sasha Voitovych and Mahdi Haghifam are equal-contribution authors.}

\maketitle

\begin{abstract}
In this paper, we investigate the necessity of traceability for accurate learning in stochastic convex optimization (SCO) under $\ell_p$ geometries. 
Informally, we say a learning algorithm is \emph{$m$-traceable} if, by analyzing its output, it is possible to identify at least $m$ of its training samples. Our main results uncover a fundamental tradeoff between traceability and excess risk in SCO. For every $p\in [1,\infty)$, we establish the existence of an excess risk threshold below which every sample-efficient learner is traceable with the number of samples which is a \emph{constant fraction} of its training sample. For $p\in [1,2]$, this threshold coincides with the best excess risk of differentially private (DP) algorithms, i.e., above this threshold, there exist algorithms that are not traceable, which corresponds to a sharp phase transition.
For $p \in (2,\infty)$, this threshold instead gives novel lower bounds for DP learning, partially closing an open problem in this setup. En route to establishing these results, 
we prove a sparse variant of the fingerprinting lemma, which is of independent interest to the community.
\end{abstract}

\section{Introduction}
Tracing or membership inference informally asks whether it is possible, using only the output of a learning algorithm, to distinguish samples in the training set from held-out samples. The existence of a tracer that identifies training examples reveals that the model has memorized specific examples rather than purely captured the underlying distribution~\cite{shokri2016membership,carlini2022membership}. In particular, understanding tracing has an important role in generalization theory, where an algorithm that is not traceable is known to generalize well beyond its training data \citep{steinke2020reasoning}. 
Tracing is also an important technical tool in, e.g., differential privacy (DP), where tracing attacks are the workhorse behind
tight lower bounds for the risk-privacy trade-offs \citep{bun2014fingerprinting}. From a privacy standpoint, even the leakage of a single example is viewed as catastrophic. 
 However, from a generalization theory standpoint, we want to understand the exact relationship between an algorithm's generalization performance and the \emph{number} of traceable examples.%

To reason rigorously about tracing, following \citep{dwork2015robust}, we define the problem of tracing %
as follows. Let $\mathcal A_n$ be a learning algorithm that, given a training set $S_n = (Z_1,\ldots,Z_n)$ of $n$ i.i.d.\ samples from some underlying distribution $\mathcal D$, outputs a learned model $\hat\theta$. Then, a \emph{tracer} $\mathcal T$ is a hypothesis tester that, given the model $\hat \theta$ and a candidate point $Z$,
outputs \textsc{In} if it believes $Z$ was in $S_n$, or \textsc{Out} otherwise. Formally, for some small soundness parameter $\xi \in (0,1)$ and $m\le n$, we require $\mathcal T$ to satisfy:
\[
\Pr_{\substack{S_n \sim \mathcal D^{\otimes n} \\ Z \sim \mathrm{Unif}(S_n)}}[\mathcal T(\hat\theta,Z)=\textsc{IN}]\ge \frac{m}{n} \qquad \Pr_{\substack{S_n \sim \mathcal D^{\otimes n} \\ Z \sim \mathcal D \indep S_n}}[\mathcal T(\hat\theta,Z)=\textsc{OUT}]\ge 1 - \xi
\]
When
such a tracer exists, we say that $\mathcal A_n$ is \((\xi,m)\)-traceable. Equivalently, $m$ is the expected number of samples in the training set $S_n$ for which the tracer outputs $\textsc{IN}$, and we refer to $m$ as \emph{recall}. (See~\Cref{def:tracer-def,def:soundness-recall}.) 

A fundamental problem in learning theory is investigating how an algorithm’s generalization ability interacts with the information it retains about the training samples (including, in our language, its traceability).
The common wisdom is that any information about the training set in a learned model is in tension with generalization \citep{XuRaginsky2017,bassily2018learners,steinke2020reasoning}. On the other hand, non-traceable algorithms, such as \emph{differentially private} algorithms, are often unable to reach optimal excess risk. The central question we study in this paper is: \emph{what is the exact tradeoff between the number of traceable examples and achievable excess risk?}

This question was considered, first, in the context of mean estimation of a $d$-dimensional vector, %
in the seminal work of \citep{bun2014fingerprinting,dwork2015robust}. It was also studied in the context of %
\emph{Stochastic Convex Optimization} (SCO)~\citep{shalev2009stochastic} %
in the work of \cite{attias2024information}. 
The work of \cite{dwork2015robust} studied the tradeoff between excess risk and the number of traceable examples, when a mechanism publishes an estimate of the mean that is \emph{accurate} in every coordinate (i.e., it is close in $\ell_\infty$ norm to the true mean). At a high level, they showed that for every algorithm that has accuracy better than that achievable by a private algorithm, %
$\Omega(1/\alpha^2)$ examples are traceable on some hard instance.  
Notice that for the task %
of mean estimation in $\ell_\infty$ norm, 
the statistical sample complexity is $\Theta(\log(d)/\alpha^2)$. Thus, for every algorithm, the preceding result only shows that it is  possible to trace out a  $1/\log(d)$ fraction of the input samples. 
In contrast, \citep{attias2024information} exhibited %
an SCO problem in $\ell_2$ geometry for %
which a \emph{constant} fraction of the training samples are traceable. An important open problem, then, is to further explore and understand in which setups we expect a constant fraction of the training sample to be traceable.

In this work, we investigate this question, of traceability, in the fundamental learning setup of \emph{Stochastic Convex Optimization} for general $\ell_p$ geometries. %
We show that, in this general learning setup, when private learning is not possible, \emph{there is no meaningful gap} between sample complexity and traceability. That is, in every geometry, there exist a hard problem for which every (sample-efficient) algorithm is traceable with a recall which is a constant fraction of its sample size. 
Due to connections between SCO and mean estimation problems, our results also extend to the latter settings; in particular, we close the $\log(d)$ gap in the setting of \citep{dwork2015robust} and show that optimal traceability is \emph{dimension-dependent}.

SCO is an ideal testbed for this problem: (1) as in modern machine learning practices, first-order methods are known to achieve optimal sample-complexity rates in this setting \citep{feldmanerm,hardt2016train,amir2021sgd}, and (2) within this framework, we can design provable methods that mitigate tracing, %
such as DP algorithms \citep{chaudhuri2011differentially,bassily2014private,bassily2019private}.  Therefore, by studying the problem of traceability in SCO, we also deepen our understanding of the interaction between privacy risks %
and sample-optimal learning.

To present our results, we recall the basic setup of SCO. An SCO problem is characterized via a triple $\scoprob = (\dataspace, \parspace, f)$, where $\dataspace$ is the data space, $\parspace \subset \mathbb R^d$ is the parameter space, which must be convex, and $f \colon \parspace \times \dataspace \to \Reals$ is a loss function such that $f(\cdot, z)$ is convex for all $z \in \dataspace$. 
In SCO, data points are drawn from an underlying distribution $\Dist$ over $\dataspace$, unknown to the learner. The objective of the learner is to minimize the \emph{expected risk} based on observed samples. 
Then, a learning algorithm $\Alg_n \colon \dataspace^n \to \parspace$ receives a sample $S_n = (Z_1,\ldots,Z_n)$ of $n$ data points from $\dataspace^n$ and returns a (perhaps randomized) output in $\parspace$. Then, for $\Dist \in \ProbMeasures{\dataspace}$, expected risk is defined as $F_\Dist(\param) \defeq \E_{Z\sim \Dist} \left[f (Z, \param)\right].$ 
For an SCO problem to be learnable, one often assumes that the loss function $f$ is \emph{Lipschitz} and the \emph{diameter} of the space $\parspace$ is bounded, both of which can be measured w.r.t. different norms. These bounds govern the behavior of learnability, but they can be measured in different geometries. 
A canonical class of SCO problems is induced by the $\ell_p$ norms, in which case we assume that $\parspace$ has bounded $\ell_p$-diameter and $f(\cdot,z)$ is $\ell_p$-Lipschitz, for a fixed $p\in[1,\infty]$.

\subsection{Contributions}
In this paper, we establish a fundamental tradeoff between traceability and excess risk for algorithms in the context of SCO in general geometries. Some settings in which tracing is not possible are already well-understood: in the excess risk regime where DP~\cite{dwork2006calibrating} is possible, no samples can be traced. Due to this observation, the problem of traceability is only meaningful outside the DP risk regime. More formally, let us define minimax statistical and DP excess risks in $\ell_p$ SCO. Specifically, for a family of $\ell_p$ Lipschitz problems $\lipset_p^d$, we let
\begin{align}
\label{eq:alpha-stat}
\alpha_{\mathsf{stat}} \left(p,n\right) &= \max_{\mathcal P \in \lipset_p^d} \min_{\hphantom{,(\text{D-}\delta}\Alg_n\hphantom{\text{P-}\varepsilon)}} \max_{\Dist} \big\{ \E\left[ \poprisk(\Alg_n(S_n))\right] - \inf_{\param \in \parspace} \poprisk(\param)\big\} \qquad \text{ and} \\
\label{eq:alpha-dp}
\alpha_{\mathsf{DP}} \left(p,n\right) &= \max_{\mathcal P \in \lipset_p^d} 
 \min_{(\varepsilon,\delta)\text{-DP-}\Alg_n} \max_{\Dist} \big\{ \E\left[ \poprisk(\Alg_n(S_n))\right] - \inf_{\param \in \parspace} \poprisk(\param)\big\},
\end{align} where, for concreteness, we take $\varepsilon = 0.01$ and $\delta = 1/n^2$ in the above. In short, we show that every sample-efficient algorithm that achieves an excess risk outside the DP regime (that is, $\alpha = o(\alpha_{\mathsf{DP}})$) is traceable with %
recall %
proportional to the number of samples.

\paragraph{Tracing when \texorpdfstring{$p=1$}{p = 1}.} %
For the case $p = 1$, we show that any learner whose excess risk is better, by a small polynomial factor, than the best risk attainable by a DP algorithm with constant~$\varepsilon$ and $\delta = 1/n^{2}$ must be traceable. Moreover, we give an essentially optimal lower bound on the number of samples that can be traced. In more detail, we show that there exists an $\ell_1$-SCO problem such that, if an algorithm achieves risk of 
    \[ \displaystyle \alpha 
    \lesssim 
    \frac{\alpha_{\mathsf{DP}}} {d^{0.01} \log^2(n)},
   \]
then $\Omega(\log(d)/\alpha^2))$ of the training samples can be traced (see~\Cref{thm:l1-traceability}). We note that the choice of the constant $0.01$ above is arbitrary. 
It is instructive to compare our results to~\cite{dwork2015robust}. While the settings of mean estimation and SCO are generally different, our lower bound for $\ell_1$ geometry also extends to mean estimation in $\ell_\infty$ norm (see~\cref{cor:mean-est} for a formal argument). In both settings, the sample complexity scales like $\log(d)/\alpha^2$, however,~\cite{dwork2015robust} showed traceability of only $1/\log(d)$ fraction of the samples. On the other hand, in our work we show that there is no meaningful gap between sample complexity and traceability, and every sample-efficient algorithm outside the DP regime must memorize a constant fraction of its sample. %
Notably, our results also imply that traceability is \emph{dimension-dependent} in this setup.

\paragraph{Tracing when \texorpdfstring{$p \in (2,\infty)$}{p in (2,infinity)}.}
For $\ell_p$ SCO with $p\geq 2$, in~\Cref{thm:lp-traceability-large-p}, we show that for 
\[ \displaystyle 
\alpha \lesssim \frac{\underline{\alpha}_{\mathsf{DP}}}{ \log(n)}, \]
where $\underline{\alpha}_{\mathsf{DP}}$ is set as 
$\underline{\alpha}_{\mathsf{DP}} = \Theta\left(d/n^2\right)^{1/p}$
we can construct an SCO problem such that, if a learner achieves a risk of $\alpha$, then $\Omega(1/\alpha^p)$ of its samples are traceable. Note that, 
the non-private sample complexity of learning for $p > 2$ is precisely $\Theta(1/\alpha^p)$ in the relevant parameter regime,\footnote{We point out that, in general, the sample complexity for $p \ge 2$ scales as $1/n^{1/p} \land d^{1/2-1/p}/\sqrt{n}$, however, $\alpha_{\mathsf{stat}} \ll \alpha_{\mathsf{DP}}$ only when $d \gtrsim n$. Thus, the question of traceability is only non-vacuous in the overparameterized regime.} i.e., the number of traced out samples is of the order of the sample complexity. Note that, the optimal DP risk in this setup constitutes an open problem, and the quantity $\underline{\alpha}_{\mathsf{DP}}$ above need not be the optimal DP risk in this setting. Nevertheless, this quantity can be shown to be a \emph{lower bound} on the optimal DP risk (in the regime $\varepsilon \in \Theta(1)$). We extend this result to other regimes of $\varepsilon$, and another important contribution of our work is proving such DP lower bounds. 

In particular, we provide an improved lower bound on DP-SCO under $\ell_p$ geometries for $p>2$ in the high dimensional regime, i.e., $d \geq \varepsilon n$, which is arguably the most interesting regime as it is more relevant for the modern ML applications. Specifically, we show, in~\Cref{thm:lp-dpsco-lb}, that for all $\varepsilon < 1$ and small $\delta$ we can construct a problem such that for every $(\varepsilon,\delta)$-DP algorithm, $\Alg_n$, there exists a data distribution such that:
    \[
    {\E_{S_n \sim \Dist^{\otimes n},\hat \param \sim \Alg_n(S_n)}\left[ \poprisk(\hat \param)\right] - \inf_{\param \in \parspace} \poprisk(\param)  \gtrsim 
    {\left(\frac{d}{n^2 \varepsilon^2}\right)^{1/p}}.}
    \]
In particular, the above implies that when $d \ge \varepsilon n$, the risk due to privacy dominates the statistical risk. This result improves upon all previous best bounds in the literature when $d \ge \varepsilon n$ \citep{arora2022differentially,lee2024powersamplingdimensionfreerisk}. In particular, Theorem 3.1 of~\cite{lee2024powersamplingdimensionfreerisk} gives a lower bound
   $
   \sqrt{d / n^2 \varepsilon^2},
   $
   which is weaker than our lower bound for every $p > 2$. Corollary 4 of~\cite{arora2022differentially} gives a lower bound of
    $
    \min\left\{\left(\frac{1}{\varepsilon n}\right)^{\frac{1}{p}} , \frac{d^{1 - 1/p}}{\varepsilon n}\right\}
    $
    which is weaker than our lower bound for $d \ge \varepsilon n$.

\begin{table}
    \centering
    \setlength{\tabcolsep}{5pt}       %
  \renewcommand{\arraystretch}{0.95}
\begin{tabular}{@{}ccc>{\centering\arraybackslash}m{1.9cm}>{\centering\arraybackslash}m{2.05cm}c@{}} 
    \toprule
     $p$ & Recall & Range of $ \alpha$ & Sample complexity & Minimax DP rate & Refs.\\
 \midrule
$1$ & $ \frac{\log(d)}{\alpha^2}$ & $ \left(\sqrt{\frac{\log(d)}{n}},\ \frac{d^{0.49}}{n \sqrt{\log(1/\xi)}}\right)$ & $\frac{\log(d)}{\alpha^2}$ &  $\sqrt{\frac{\log(d)}{n}} + \frac{\sqrt{d}}{\varepsilon n}$ &  Thm.~\ref{thm:l1-traceability}\\
 \midrule 
 $(1,2]$ & $ \frac{1}{\alpha^2}$ & $ \left(\sqrt{\frac{1}{n}},\ \frac{\sqrt{d}}{n \sqrt{\log(1/\xi)}}\right)$ & $ \frac{1}{\alpha^2}$ &  $\frac{1}{\sqrt{n}} + \frac{\sqrt{d}}{\varepsilon n}$ & Thm.~\ref{thm:lp-traceability} \\
 \midrule
 $[2,\infty)$ & $\frac{1}{\alpha^p}$ & $ \left(\min\left\{\frac{1}{n^{1/p}}, \frac{d^{1/2 - 1/p}}{\sqrt{n}}\right\}, \  \left(\frac{d}{n^2 \log(1/\xi)}\right)^{1/p}\right)$ & $\hphantom{^\dagger}\frac{1}{\alpha^p}^{\dagger}$ & \text{Open} & Thm.~\ref{thm:lp-traceability-large-p}\\
 \bottomrule
\end{tabular}
\vspace{5pt}
    \caption{Summary of traceability results. {All results are stated up to constants.} %
    The sample complexity bounds are implied by~\cref{thm:minimax-lp-rates}. Minimax DP rates are known due to~\cite{bassily2019private,bassily2021non,asi2021private,gopi2023private} and are displayed up to $\log$ factors and with $\delta = 1/n^2$. ($\dagger$) 
    Although, in general, the sample complexity in this setting is a minimum of two terms, within the stated range of $\alpha$, the term $1/\alpha^p$ dominates.
    } 
    \label{tab:trace}
\end{table}

\paragraph{Tracing when \texorpdfstring{$p \in (1,2]$}{p in (1,2]}.} 
For each $p\in (1,2]$, we show that there exists an $\ell_p$ SCO problem such that, if an algorithm achieves excess risk of 
    $\alpha \lesssim\frac {\alpha_{\mathsf{DP}}}{\log^2(n)},$
    then $\Omega(1/\alpha^2)$ of its samples can be traced (see~\Cref{thm:lp-traceability}). This result uncovers a fundamental dichotomy between traceability and privacy in $\ell_p$ SCO. 
    It is known that $p \in (1,2]$, $\Theta(1/\alpha^2)$ is precisely the sample complexity 
    of learning $\ell_p$-Lipschitz problems~\citep{agarwal2009information}. 
    We note that $\alpha_\text{DP}$ for $p=2$ is known due to~\citep{bassily2014private}. However, as we discuss in~\cref{sec:dssuv-comparison}, combining~\citep{bassily2014private} with the tracing results of~\citep{dwork2015robust} does not give the optimal tracing of $\Theta(1/\alpha^2)$ samples.

\subsubsection{Traceability beyond SCO: PAC Learning}
\label{sec:threshold}

A natural question is whether a similar phenomenon holds true for other learning setups. 
Consider the setting of binary classification PAC learning. We show that, for every class with VC dimension bounded by $d_\mathsf{vc}$, the recall of every tracer is in $O(d_{\mathsf{vc}}\log^2(n))$, i.e., it is at most a small fraction of the training sample provided $n \gg d_\mathsf{vc}$. Since many such classes, including the class of thresholds, are not privately learnable \citep{bun2015differentially,alon2019private,bun2020equivalence}, the sharp transitions between privacy and traceability does not hold in PAC classification. 
We also point out that for the class of thresholds, we can remove the $\log^2(n)$ factor from the recall upper bound. See~\Cref{appx:thresh}.

\subsection{Technical contributions}

Our technical contributions are elaborated on in \cref{sec:technical}. In essence, our technical novelties are twofold.  First, we present a novel \emph{sparse fingerprinting lemma} that, intuitively, shows that %
learners over sparse domains must be correlated to their samples. The key novelty of this result is that the \emph{correlation is inversely proportional to the sparsity parameter.} This feature is not present in prior work, since fingerprinting lemmas are most often applied for learners/estimators over a hypercube domain. 

Second, armed with this new fingerprinting result, we present a generic conversion result using a notion of a \emph{subgaussian trace value}, which converts any lower bound on \emph{correlation} with the samples into a \emph{number} of samples that can be traced. While it is well-appreciated by prior work that, conceptually, a fingerprinting lower bound implies a traceability lower bound, proving results for our setting of SCO involves complicated sparse domains embedded into $\ell_p$ balls. This makes it more technically challenging to prove the necessary concentration phenomena holds for a tracer over the corresponding domain, which motivates us to restrict our attention to tracers that induce a \emph{subgaussian process} over the domain.

\subsection{Related Work}
Our work is most similar in spirit to \citep{dwork2015robust,attias2024information}. Our work builds on top of these results on a number of fronts. A key distinct aspect of our approach is the difference in the structure of hard problems and the new sparse fingerprinting lemma. Also, our  generic traceability theory of \emph{subgussian trace value} (\Cref{sec:technical})  provides an abstract treatment of the approach in \citep{dwork2015robust}. Our approach allows to seamlessly convert fingerprinting lemmas into traceability results and even non-private sample complexity lower bound. 

Our work also makes progress towards closing the gap regarding the optimal excess error for $\ell_p$~DP-SCO for $p>2$. The best known upper bounds for DP-SCO in $\ell_p$ geometry for $p>2$ are due to \citep{bassily2021non,gopi2023private}, and \cref{thm:lp-dpsco-lb} is the best lower bound. 

To put our \emph{sparse fingerprinting lemma} into the context of prior work, it can be seen to generalize the results of~\citep{steinke2017tight} to sparse sets. 
Another ``sparse fingerprinting lemma'' in the literature is given by~\cite{cai2023score}. Our results are distinct by the way sparsity enters the lemmas: in~\cite{cai2023score} the \emph{mean vector} is sparse (and data is dense), and in our case, the mean is dense and the \emph{data vectors} are sparse. The proof techniques also differ substantially. Our sparse fingerprinting lemma is also an example of a fingerprinting lemma for the setting where the coordinates of the data vector are not independent, similar to \citep{kamath2022new,lyu2024fingerprinting}. Additional related work is discussed in  \cref{appx:related-work-add}.

\section{Problem Setup and Main Results}

\label{sec:framework-results}

We begin by some definitions. For a
(measurable) space $\mathcal{R}$, $\ProbMeasures{\mathcal{R}}$ denotes the set of all probability measures on $\mathcal{R}$. In SCO, an $\alpha$-learner is defined to be a learner whose expected \emph{excess risk} is bounded by $\alpha$. A formal definition is given below. 

\begin{definition}[$\alpha$-learner]
\label{def:alpha-learner}
Fix $\alpha > 0$, $n \in \Naturals$ and SCO problem $(\parspace,\dataspace,f)$. We say $\Alg_n:\dataspace^{n}\to \ProbMeasures{\parspace}$ is an $\alpha$-learner for $(\parspace,\dataspace,f)$ iff for every $\Dist \in \ProbMeasures{\dataspace}$, we have 
$
\E_{S_n \sim \Dist^{\otimes n},\hat \param \sim \Alg_n(S_n)} \left[ \poprisk(\hat \param)\right] - \inf_{\param \in \parspace} \poprisk(\param)\leq \alpha
 .$
\end{definition}

In our work, we focus on learning \emph{Lipschitz-bounded} families of problems, which are defined below. For every $p \in [1,\infty]$, let $\mathcal{B}_p(r)=\{\theta \in \Reals^d: \norm{\theta}_p\leq r\}$ be the unit ball in $\ell_p$ norm. 
\begin{definition}[Lipschitz-bounded problems] \label{def:lip-bdd}
 Fix $p\in [1,\infty]$, and let $d < \infty$ be a natural number. We let $\lipset_{p}^d$ denote the set of all $\ell_p$-Lipschitz-bounded SCO problems in $d$ dimensions. Namely, $\mathcal P = (\parspace, \dataspace, f)\in \lipset_{p}^d$ iff (i) $\parspace \subset \mathcal B_{p}(1)$, and (ii) for every $\param_1,\param_2\in \parspace$ and $z \in \dataspace$, we have 
 $|f(z,\theta_1)-f(z,\theta_2)|\leq \norm{\theta_1 - \theta_2}_p$. 
\end{definition}

\subsection{Tracing} 
\label{sec:tracing}
The key notion we study here is \emph{tracing}, and we next introduce our framework for traceability. We consider families of tracers that assign each candidate point a real‐valued score capturing how likely it is to have been seen during training. Then, the tracer converts these scores into binary \textsc{In} or \textsc{Out} decisions by thresholding the score.
\begin{definition}[Tracer] \label{def:tracer-def}
Fix data space $\dataspace$ and parameter space $\parspace$. A tracer's strategy is a tuple of $\tracer = (\trfn,\Dist)$ where $\trfn: \parspace \times \dataspace \to \Reals$ and $\Dist\in \ProbMeasures{\dataspace}$.
\end{definition}
\begin{definition}[$(\xi,m)$-traceability]
\label{def:soundness-recall}
Let $n \in \Naturals$, $\xi \in (0,1)$, and $m \in \Naturals$. We say a learning algorithm $\Alg_n$ is $(\xi,m)$-traceable if there exists a tracer $(\trfn,\Dist)$ and $\lambda \in \Reals$ such that, 
if $(Z_0,Z_1,\dots,Z_n)\sim \Dist^{\otimes (n+1)}$
and $\hat \param \sim \Alg_n(Z_1,\dots,Z_n)$, 
we have (i) Soundness: $\P\big(\trfn(\hat{\param},Z_0)\geq \lambda\big)\leq \xi$, and (ii) Recall: $\E\big[\big|\{i \in [n]:\trfn(\hat \param,Z_i)\geq \lambda\}\big|\big]\geq m$.
\end{definition}

\subsection{Main Results}

\label{sec:main}

\subsubsection{Traceability of \texorpdfstring{$\alpha$}{alpha}-Learners}
In this section, we discuss our traceability results for accurate learners in $\ell_p$ geometries. First, we will state a result that applies to $p \in [1,2)$, and then present its slight refinement for $p = 1$. We will then present our result for $p \ge 2$. See~\Cref{pf:thm:lp-traceability,pf:thm:l1-traceability,pf:thm:lp-traceability-large-p} for proofs.%

\begin{restatable}{theorem}{LPTraceability}
\label{thm:lp-traceability}
    There exists a universal constant $c>0$ such that, 
    for all $p \in [1,2)$, if $d$, $n$, $\xi \in (0, 1/e)$, and $\alpha > 0$ are such that
    {\begin{equation} \label{eq:l-p-tracing-cor-condition-eps} 
    \smash{
    \frac{c}{\sqrt{n}} \le \alpha \le {\min\left\{c \cdot \sqrt{\frac {d} {n^2 \log(1/\xi)}}, \  \frac{1}{6}\right\}}}, 
    \end{equation} 
    then} there exist an $\ell_p$ SCO problem that every $\alpha$-learner is $(\xi, m)$-traceable with 
    $
    m \in \Omega\left(\alpha^{-2}\right).
    $
\end{restatable}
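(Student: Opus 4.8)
I would exhibit a single linear SCO problem over a rescaled hypercube and then produce a tracer via a fingerprinting argument. Let $q=p/(p-1)\in(2,\infty]$ be the Hölder conjugate of $p$, let $\dataspace=\{d^{-1/q}\sigma:\sigma\in\{-1,+1\}^d\}$ (so that $\norm{z}_q\le 1$), and set $f(z,\param)\defeq-\inner{z}{\param}$; then $\param\mapsto f(z,\param)$ is $1$-Lipschitz in $\norm{\cdot}_p$, so $(\mathcal B_p(1),\dataspace,f)\in\lipset_p^d$. For a mean vector $\mu$, let $\Dist_\mu$ draw the coordinates of a sample independently with $\E[\sigma_j]=\mu_j$, and let $\pi$ be a coordinatewise \emph{scaled}-Beta prior on $[-\rho,\rho]$ with $\rho=\Theta(\alpha)$; the fact that the scale of the prior must grow with the target risk $\alpha$ is what distinguishes this construction from one based on a uniform prior. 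Since $\inf_\param\poprisk(\param)=-d^{-1/q}\norm{\mu}_q$, attained at the best response $\param^\star(\mu)=\argmax_{\norm{\param}_p\le1}\inner{\mu}{\param}$, for each fixed $\mu$ any $\alpha$-learner $\Alg_n$ satisfies $\E\inner{\mu}{\hat\param}\ge\norm{\mu}_q-\alpha d^{1/q}$; averaging over $\mu\sim\pi$, using $\E_{\mu\sim\pi}\norm{\mu}_q\gtrsim\rho\,d^{1/q}$ (uniformly in $q\ge2$) and choosing the constant in $\rho=\Theta(\alpha)$ large enough — compatibly with $\alpha\le1/6$ — gives $\E_{\mu\sim\pi}\E\inner{\mu}{\hat\param}\gtrsim\alpha\,d^{1/q}$, i.e., accuracy forces the output to align strongly with the unknown $\mu$.

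\textbf{Fingerprinting and the tracer.} Next I would apply our fingerprinting lemma (the scaled/sparse variant) to convert ``aligned with $\mu$'' into ``aligned with the individual samples.'' With a suitably normalized, truncated inner-product score $\trfn(\hat\param,z)\propto\inner{\hat\param}{z-\E_{\Dist_\mu}[z]}$, the lemma produces a lower bound on $\E_{\mu\sim\pi}\E\big[\sum_{i=1}^n\trfn(\hat\param,Z_i)\big]$ in terms of the alignment obtained above; it is exactly the $\Theta(\alpha)$ scaling of $\pi$ that makes this lower bound larger, by a factor $\approx 1/\alpha$, than what a uniform prior (or a black-box reduction to mean estimation) would give — this is what will make the recall $\Omega(1/\alpha^2)$ rather than $\Omega(1/\alpha)$. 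I then fix a mean $\mu^\star$ attaining at least this average value and declare the tracer to be $\tracer=(\trfn,\Dist_{\mu^\star})$, so that the construction becomes a single fixed SCO problem together with a single fixed data distribution, as \Cref{def:soundness-recall} requires.

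\textbf{Controlling false positives.} For a fresh $Z_0\sim\Dist_{\mu^\star}$ independent of $\hat\param$, the score $\trfn(\hat\param,Z_0)$ is at most $\inner{\hat\param}{Z_0-\E_{\Dist_{\mu^\star}}Z_0}$, which, conditionally on $\hat\param$, is a sum of $d$ independent mean-zero bounded terms, hence sub-Gaussian with proxy variance $\lesssim\norm{\hat\param}_2^2\le1$ (using $\norm{\hat\param}_2\le\norm{\hat\param}_p\le1$ for $p\le2$). A Hoeffding bound, \emph{uniform} over $\hat\param\in\mathcal B_p(1)$, then gives $\P[\trfn(\hat\param,Z_0)\ge\lambda]\le\xi$ for a fixed threshold $\lambda\asymp\sqrt{\log(1/\xi)}$ (in the normalization of $\trfn$), which is the FPR requirement of \Cref{def:soundness-recall}.

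\textbf{Recall and the main obstacle.} For the recall I would use the deterministic inequality $|\{i:\trfn(\hat\param,Z_i)\ge\lambda\}|\ge\big(\sum_{i}\trfn(\hat\param,Z_i)-n\lambda\big)/B$, where $B$ bounds $\trfn$, take expectations, and substitute the fingerprinting bound; the hypothesis $\alpha\le c\sqrt{d/(n^2\log(1/\xi))}$ is precisely what forces the fingerprinting signal to dominate the noise floor $n\lambda$, and together with $\alpha\ge c/\sqrt n$ (hence $n\ge c^2/\alpha^2$) this yields $m=\Omega(1/\alpha^2)$. I expect the main obstacle to be this last step in combination with the lemma that feeds it: (i) proving the scaled/sparse fingerprinting lemma with the prior scale tied to $\alpha$ — the technical heart, and the origin of the $1/\alpha$ gain over uniform-prior arguments — and (ii) calibrating the truncation level $B$ so that the ratio of the fingerprinting signal to $B$ is of order $1/\alpha^2$ rather than $1/\alpha$, while arguing, using exchangeability of $Z_1,\dots,Z_n$ so that the expected per-sample contribution is only about $1/n$ of the total signal, that the truncation does not destroy the signal. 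Throughout, all constants must be kept uniform in $p\in[1,2)$.
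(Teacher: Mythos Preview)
Your overall architecture---a linear problem on a rescaled hypercube, a Beta prior whose scale is tied to $\alpha$, fingerprinting to convert alignment with $\mu$ into alignment with the sample, a Hoeffding bound for the FPR, and then a counting argument for recall---matches the paper's approach, and you are right that scaling the prior with $\alpha$ is what upgrades the fingerprinting signal from $\Theta(1)$ to $\Theta(1/\alpha)$ per unit of normalization.

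The gap is precisely in the recall step you flag as ``the main obstacle.'' The deterministic inequality
\[
|\{i:\trfn(\hat\param,Z_i)\ge\lambda\}|\ \ge\ \frac{\sum_{i}\trfn(\hat\param,Z_i)-n\lambda}{B},
\]
is \emph{linear} in the fingerprinting signal. In the natural normalization where $\trfn$ is $O(1)$-subgaussian, one has signal $\sum_i\trfn(\hat\param,Z_i)\asymp\sqrt d/\alpha$ and the almost-sure bound $B\asymp\sqrt d$, so this inequality yields recall $\asymp 1/\alpha$, not $1/\alpha^2$. Neither of your two proposed fixes closes this gap: (i) truncating $\trfn$ at a smaller level $B'$ cannot help, since the adversarial configuration $\trfn(\hat\param,Z_1)=nT,\ \trfn(\hat\param,Z_i)=0$ for $i\ge 2$ is consistent with the fingerprinting lower bound on the \emph{sum} and gives recall $1$ after any truncation; (ii) exchangeability only tells you the per-sample \emph{expectation} is $T$, which is a first-moment statement and does not by itself produce a count. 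Invoking $\alpha\ge c/\sqrt n$ (i.e.\ $n\gtrsim 1/\alpha^2$) does not convert a $1/\alpha$ recall bound into a $1/\alpha^2$ one.

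What the paper actually does is replace your linear inequality by the Paley--Zygmund-type bound
\[
|\{i:a_i\ge\lambda\}|\ \ge\ \frac{\big(\sum_i a_i-n\lambda\big)_+^2}{\sum_i a_i^2},
\]
which is \emph{quadratic} in the signal, and then controls the denominator $\sum_i\trfn(\hat\param,Z_i)^2$ \emph{uniformly over} $\hat\param\in\parspace$. This uniform second-moment control is the second missing ingredient: it is obtained by viewing $\{\trfn(\param,Z)\}_{\param\in\parspace}$ as a subgaussian process and bounding $\sup_\param\sum_i\trfn(\param,Z_i)^2$ via Dudley's entropy integral (\Cref{lemma:subg-process}), which gives $\sum_i\trfn(\param,Z_i)^2\lesssim n+d$ with high probability. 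Plugging in signal $\asymp\sqrt d/\alpha$ and using $d\ge n$ (forced by the nonemptiness of the range~\eqref{eq:l-p-tracing-cor-condition-eps}) then yields recall $\gtrsim (d/\alpha^2)/(n+d)\gtrsim 1/\alpha^2$. In short, your proof sketch is missing (a) the Paley--Zygmund step and (b) the uniform $\ell_2$ control of the score vector; without these two pieces the argument tops out at $m\in\Omega(1/\alpha)$.
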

{Note that the upper bound on $\alpha$ in~\cref{eq:l-p-tracing-cor-condition-eps} is precisely the optimal DP excess risk for $\varepsilon \in \Theta(1)$ and $p \in [1,2]$~\citep{asi2021private,bassily2021non}, and the lower bound is precisely the optimal non-private risk (except $p = 1$; see~\cref{thm:minimax-lp-rates}).} Moreover, for $p \in (1,2]$, the lower bound on $m$  \emph{exactly} matches the statistical sample complexity.

As mentioned above, for $p = 1$, the lower bound on recall in~\cref{thm:lp-traceability} is less than sample complexity by a factor of $\log(d)$. This prompts us to establish the following refinement 
\begin{restatable}{theorem}{LOneTraceability}
\label{thm:l1-traceability}
    There exists a universal constant $c>0$ such that, if $d$ {is large enough} and $n$, $\xi \in (0, 1/e)$, and $\alpha > 0$ are such that  {\begin{equation}
    c\cdot \sqrt{\frac{\log(d)}{n}} \le \alpha \le { 
    \min \bigg\{c\cdot \frac{d^{0.49}}{n \sqrt{\log(1/\xi)}} , \frac{1}{8}\bigg\}, \text{then}
    }
    \label{eq:l-1-trace-condition-eps}\end{equation} 
    there} exists a $\ell_1$ SCO problem that every $\alpha$-learner is $(\xi, m)$-traceable with 
    $
    m \in \Omega\left(\log(d)/\alpha^2\right).
    $
\end{restatable}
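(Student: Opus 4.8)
The plan is to follow the three–layer template used for \cref{thm:lp-traceability}: (i) exhibit a family of $\ell_1$-Lipschitz-bounded problems together with a prior over data distributions; (ii) lower bound its \emph{trace value} via a fingerprinting lemma; and (iii) apply the generic ``trace value $\Rightarrow$ traceability'' conversion of \cref{sec:technical}. The only new ingredient lies in layers (i)--(ii): the generic $p=1$ instance behind \cref{thm:lp-traceability} only certifies a trace value of order $1/\alpha^2$, a factor $\log d$ short of the $\ell_1$ sample complexity, so to reach $\log d/\alpha^2$ one needs an instance whose recoverable signal spans $\Theta(\log d)$ times as many coordinates, together with the paper's \emph{sparse} fingerprinting lemma to extract the signal from that richer structure.

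For the construction I would take $\parspace$ to be a convex subset of $\mathcal B_1(1)\subset\Reals^d$ on which the $\ell_1$ budget is forced to spread out --- for instance an $\ell_\infty$-capped $\ell_1$ ball $\{\theta:\norm{\theta}_1\le 1,\ \norm{\theta}_\infty\le 1/k\}$ (equivalently a product of scaled simplices), with $k$ a parameter coupled to $\log d$ --- and a data law that is a \emph{sparsified} Bernoulli/hypercube construction: a mean vector drawn from a product $\dbeta{\cdot}{\cdot}$-type prior at scale $\gamma$ together with an independent sparsification keeping only a small number of nonzeros per sample, so that the sparse fingerprinting lemma applies to the data vectors. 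The loss is taken $\ell_1$-Lipschitz and shaped so that matching the population risk to excess $\alpha$ forces tracking of the relevant coordinates, and $\gamma$ is tuned so that the instance's minimax excess risk is $\Theta(\alpha)$. The role of capping (rather than using $\mathcal B_1(1)$, whose linear-loss minimizer is a single vertex) is exactly to convert what would be one recoverable direction into $\Theta(k)$, thereby exposing the $\log d$ factor.

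With the natural tracer $\trfn(\hat\theta,z)=\inner{\hat\theta}{z}$ --- the canonical choice for a (near-)linear loss, in line with the remark in the introduction that for $1<p\le 2$ the optimal tracer depends only on the loss --- I would apply the sparse fingerprinting lemma coordinate- (or block-)wise. Sparsification attenuates the per-coordinate signal, but this is paid back by two effects: the signal accumulates over the $\Theta(\log d)$ forced coordinates, and the capped geometry shrinks the tracer's null fluctuations to order $k^{-1/2}$, so the net trace value is $\Omega(\log d/\alpha^2)$; the conversion lemma then yields $(\xi,m)$-traceability with $m=\Omega(\log d/\alpha^2)$. The window in \cref{eq:l-1-trace-condition-eps} falls out of two pressures: $\alpha\gtrsim\sqrt{\log d/n}$ because no $\alpha$-learner exists below the $\ell_1$ minimax rate (\cref{thm:minimax-lp-rates}); and $\alpha\lesssim d^{0.49}/(n\sqrt{\log(1/\xi)})$ so that the accumulated signal dominates the FPR-scale ($\sqrt{\log(1/\xi)}$-weighted) fluctuations of the tracer, the loss of a factor $d^{0.01}$ relative to the $d^{1/2}$ in \cref{eq:l-p-tracing-cor-condition-eps} being the overhead of the sparsification/capping that buys the extra $\log d$ (the exponent $0.49$, hence the $d^{0.01}$ slack, being arbitrary). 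I expect the main obstacle to be the simultaneous calibration of $(\gamma,\ \text{sparsity},\ k,\ \text{loss})$ so that the instance genuinely lies in $\lipset_1^d$, has minimax rate $\Theta(\sqrt{\log d/n})$ (making the window sharp and nonempty), cannot be gamed by a learner that concentrates its $\ell_1$ budget on few coordinates, and --- hardest --- admits a sparse fingerprinting lemma under which the correlation between the output and the training set provably concentrates on $\Omega(\log d/\alpha^2)$ distinct sample indices above the threshold set by $\xi$, rather than being smeared thinly across all $n$ samples; controlling this tension between signal concentration and null fluctuation under data sparsity is exactly what fixes the $0.49$ exponent.
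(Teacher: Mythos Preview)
Your high-level template is right, and you correctly identify the capped $\ell_1$ ball $\mathcal B_1(1)\cap\mathcal B_\infty(1/s)$ as the parameter space that forces the learner to spread its budget. But two central pieces of your plan diverge from what the paper actually does, and at least one of them is a genuine gap.

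First, the data is \emph{not} sparsified: the paper takes $\dataspace=\{\pm 1\}^d$ (dense hypercube) and uses the \emph{scaling-matrix} fingerprinting lemma (\cref{lemma:beta-scaling-matrix-fingerprinting}), not the sparse one. The prior is $\dbeta[{[-\gamma,\gamma]}]{\beta}{\beta}^{\otimes d}$ with $\gamma=8\alpha$ and $\beta\approx\tfrac12\log(d/s)$; the whole point of the scaling matrix is to decouple the prior scale $\gamma$ from the shape parameter $\beta$, so one can shrink $\gamma$ to match $\alpha$ while keeping $\beta\in\Theta(\log d)$ in order to invoke the anti-concentration result of \cite[Prop.~5]{steinke2017tight}. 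Your sparse-data route would need a separate argument to manufacture the $\log d$ correlation factor, and it is not clear how sparsifying the \emph{data} (as opposed to constraining the parameter) helps in $\ell_1$ geometry.

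Second---and this is the mechanism you miss entirely---the $\log d$ gain in recall does not come from ``signal accumulating over $\Theta(\log d)$ forced coordinates.'' It comes from a \emph{polytope refinement} of the conversion \cref{thm:master-tracing}: because $\trfn$ is convex in $\theta$ and $\Theta$ is a polytope with $N$ vertices, the supremum of the relevant subgaussian process is controlled by $\sqrt{\log N}$ rather than $\sqrt{d}$ (\cref{lemma:subg-process-on-polytope,thm:master-tracing-on-polytope}), so the recall bound becomes $m\gtrsim n^2T^2/(n+\log N)$ instead of $n^2T^2/(n+d)$. With $s=d^{0.98}$ (a fixed power of $d$, not ``coupled to $\log d$'' as you suggest), one has $\log N\approx s\log(d/s)$ and $T\gtrsim \sqrt{s}\log(d/s)/(n\alpha)$; the ratio $n^2T^2/\log N$ then yields $m\in\Omega(\log(d)/\alpha^2)$. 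The $d^{0.49}$ in the window is essentially $\sqrt{s}$, with the $d^{0.01}$ slack absorbing the residual $\log$ factors. Finally, note a type confusion in your sketch: the trace value $T$ is not $\Omega(\log d/\alpha^2)$---that is the recall $m$; $T$ itself is $\Theta(\sqrt{s}\log(d/s)/(n\alpha))$.
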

Note that the upper bound in~\cref{eq:l-1-trace-condition-eps} is slightly stronger than in~\cref{eq:l-p-tracing-cor-condition-eps}; however, the lower bound on recall now matches the sample complexity of learning in $\ell_1$ geometry. We now present a result for $p \ge 2$.

\begin{restatable}{theorem}{LPTraceabilityLargeP}
\label{thm:lp-traceability-large-p}
    There exists a universal constant $c>0$ such that, 
    for all $p \in [2,\infty)$, if $d$, $n$, $\xi \in (0, 1/e)$, and $\alpha > 0$ are such that
    \begin{equation} \label{eq:l-p-tracing-cor-condition-eps-large-p} 
    \frac{1}{6} \cdot \min\left\{ \frac{1}{n^{1/p}}, \ \frac{d^{\frac{1}{2} - \frac{1}{p}}}{\sqrt{n}}\right\} \le \alpha \le {\min\left\{c \cdot \left(\frac {d} {n^2 \log(1/\xi)}\right)^{1/p}, \  \frac{1}{6}\right\}}, \text{then}
    \end{equation} 
    there exist an $\ell_p$ SCO problem such that every $\alpha$-learner is $(\xi, m)$-traceable with 
    $ m \in \Omega\left(1/(6\alpha)^p\right).$
\end{restatable}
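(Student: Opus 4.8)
The argument follows the same ``fingerprinting lemma $\Rightarrow$ trace value $\Rightarrow$ traceability'' pipeline used for~\Cref{thm:lp-traceability}, but now with a \emph{sparse} hard instance, as is forced by the $p>2$ geometry. The first move is to reduce the theorem, via the threshold-tracing machinery of~\Cref{lemma:threshold-tracing}, to producing: (i) an $\ell_p$-Lipschitz-bounded problem, a data distribution $\Dist$, and a score $\phi(\param,z)$ such that every $\alpha$-learner $\Alg_n$ satisfies $\E\big[\sum_{i=1}^n\phi(\hat\param,Z_i)\big]\gtrsim m$ with $m\asymp 1/(6\alpha)^p$; and (ii) a sub-Gaussian tail bound on the single out-of-sample score $\phi(\hat\param,Z_0)$ with variance proxy $\sigma$ small enough that some threshold $\lambda$ has false-positive probability $\le\xi$. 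The factor $\sqrt{\log(1/\xi)}$ in the upper bound of~\eqref{eq:l-p-tracing-cor-condition-eps-large-p} is exactly what this tail bound consumes: with $\lambda\asymp\sigma\sqrt{\log(1/\xi)}$ and $|\phi|\lesssim 1$ one gets $\sum_i\P[\phi(\hat\param,Z_i)\ge\lambda]\gtrsim m-n\lambda$, which is $\Omega(m)$ precisely in the stated range of $\alpha$.

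\textbf{The hard instance.} I would take $\parspace=\mathcal B_p(1)$ and a linear loss (possibly lightly truncated) $f(z,\param)=-\inner{z}{\param}$, with data vectors $z$ supported on a small coordinate set and rescaled by $\asymp s^{-1/q}$ ($q$ the conjugate exponent, $s$ the support size) so that $\norm{z}_q\le 1$ and hence $f(\cdot,z)$ is exactly $\ell_p$-Lipschitz. Since $\inf_\param\poprisk(\param)=-\norm{\mu}_q$ and $\poprisk(\hat\param)=-\inner{\mu}{\hat\param}$ for $\mu=\E_\Dist[Z]$, an excess risk $\le\alpha$ forces $\hat\param$ to be well-aligned with (the dual of) $\mu$. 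The data distribution is drawn from a prior indexed by a mean vector $\mu$ whose support $T$ has size $s\asymp 1/\alpha^p$ and whose nonzero entries are i.i.d.\ from a beta-type distribution on an interval whose \emph{scale is tuned to the target risk} $\alpha$ --- as remarked after~\Cref{thm:lp-traceability}, this risk-dependent scaling of the prior is what lets the recall match the sample complexity $\Theta(1/\alpha^p)$ rather than falling short by a polynomial factor. Each $Z_i$ is then a sparse random vector (few-hot on $T$) with the prescribed coordinate-wise means, and the tracer's score is the natural bilinear correlation $\phi(\hat\param,z)=\inner{\hat\param}{z-\mu}$.

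\textbf{The sparse fingerprinting lemma and the two estimates.} The core step is our sparse fingerprinting lemma, which adapts the tight fingerprinting lemma of~\cite{steinke2017tight} to the regime where the \emph{data} vectors (rather than the mean) are sparse: summing the resulting per-coordinate identities over $j\in T$ yields $\E_{\mu}\E_{S_n,\hat\param}\big[\inner{\hat\param}{\sum_i(Z_i-\mu)}\big]\gtrsim s-(\text{a term polynomial in the per-coordinate estimation error of }\hat\param)$. Accuracy of $\hat\param$ makes the subtracted term lower-order, leaving $\Omega(s)=\Omega(1/\alpha^p)$; averaging over the prior then fixes a single $\mu^\star$ for which the in-sample estimate (i) holds with $\Dist=\Dist_{\mu^\star}$ and $\phi(\hat\param,z)=\inner{\hat\param}{z-\mu^\star}$. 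For (ii), $\phi(\hat\param,Z_0)=\inner{\hat\param}{Z_0-\mu^\star}$ is a bounded ($O(1)$, since $\norm{\hat\param}_p\le1$, $\norm{Z_0}_q\le1$), centered linear functional of a \emph{sparse} vector, so $Z_0$'s sparsity gives it a small sub-Gaussian proxy --- here the sparsity of the construction, beyond being dictated by the $p>2$ geometry, actively helps the analysis.

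\textbf{Main obstacle and wrap-up.} I expect essentially all the new difficulty to live in the sparse fingerprinting lemma. The classical lemma leans on the coordinates of a sample being conditionally independent Bernoulli's; imposing sparsity destroys this (a sample cannot be nonzero on all of $T$), introducing negative dependence and forcing one to control how the few nonzero coordinates of each $Z_i$ are spread over the signal set $T$, while still extracting $\Omega(s)$ correlation after subtracting the error term --- and to do so uniformly over $2\le p<\infty$ with the explicit constant reflected in $m=\Omega(1/(6\alpha)^p)$. The remaining bookkeeping is routine: one checks that $s$, $|T|$, the prior scale, and the truncation level can be chosen consistently exactly when $\alpha$ satisfies~\eqref{eq:l-p-tracing-cor-condition-eps-large-p}, the lower bound $\tfrac16\min\{n^{-1/p},\,d^{1/2-1/p}/\sqrt n\}$ being the non-private minimax rate below which no $\alpha$-learner exists (cf.~\Cref{thm:minimax-lp-rates}) and the upper bound being forced by the condition $s\gtrsim n\sigma\sqrt{\log(1/\xi)}$ from the recall-versus-FPR tradeoff above.
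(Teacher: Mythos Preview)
Your high-level pipeline matches the paper's, but the hard instance is set up in the wrong direction and this breaks the key quantitative claim. You make the \emph{mean} sparse---both $\mu$ and the $Z_i$ supported on a fixed set $T$ of size $s\asymp1/\alpha^p$---and take $\Theta=\mathcal B_p(1)$. With that construction the fingerprinting identity reads $\E_\mu\E\big[\sum_i\langle\hat\theta,Z_i-\mu\rangle\big]=2\beta\,\E_\mu\langle\mu,\E\hat\theta\rangle$; since $|\langle\mu,\hat\theta\rangle|\le\|\mu\|_q\|\hat\theta\|_p\le1$ and one is forced to take $\beta\lesssim1/\alpha^2$ (otherwise $\E\|\mu\|_q\ll\alpha$ and the accuracy constraint is vacuous), the right-hand side is only $\asymp1/\alpha$, not the $s=1/\alpha^p$ you assert. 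The ``$\gtrsim s$ minus per-coordinate error'' shape you invoke is the $\ell_\infty$ mean-estimation form of tight fingerprinting, which does not transfer to $\ell_p$-SCO where $\hat\theta\in\mathcal B_p(1)$ rather than coordinate-wise estimating $\mu$. Restricting everything to a fixed $T$ makes the problem effectively $s$-dimensional and discards the ambient $d$, which is precisely what the upper bound in~\eqref{eq:l-p-tracing-cor-condition-eps-large-p} depends on.

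The paper's construction reverses the sparsity: the mean is \emph{dense} in $[-k/d,k/d]^d$, each $Z_i$ is $k=(6\alpha)^pd$-sparse with a \emph{uniformly random} support, and $\Theta=\mathcal B_\infty(d^{-1/p})$ rather than $\mathcal B_p(1)$ (so that $\sup_\theta\langle\mu,\theta\rangle=d^{-1/p}\|\mu\|_1$ is tractable under the beta prior). The sparse fingerprinting lemma~(\Cref{lemma:sparse-fingerprinting}) then carries an extra factor $d/k=1/(6\alpha)^p$ on the right-hand side, yielding trace value $T\gtrsim\sqrt d/(n(6\alpha)^{p/2})$ after subgaussian normalization (\Cref{thm:tracing-result-lp}). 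The recall $\Omega(1/(6\alpha)^p)$ then comes from the \emph{second-moment} bound $m\gtrsim(nT)^2/(n+d)$ of~\Cref{thm:master-tracing} (via~\Cref{lem:card-moments} and~\Cref{lemma:subg-process}); for this construction a first-moment ``$(nT-n\lambda)/\kappa$'' bound would only give $\asymp1/(6\alpha)^{p/2}$ since $\kappa\asymp\sqrt d$, so the Paley--Zygmund step is not mere bookkeeping. (Also,~\Cref{lemma:threshold-tracing} is a traceability-$\Rightarrow$-CMI implication, not a tool for establishing traceability; the reduction you want is~\Cref{thm:master-tracing}.)
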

For $p \in (2,\infty)$, our results have a different implication, showing that all sufficiently accurate learners need to memorize a number of samples on the order of the sample complexity. 
However, in this case, {the upper bound in}~\cref{eq:l-p-tracing-cor-condition-eps-large-p} need not be the optimal DP risk. Instead, it provides a \emph{lower bound} on the optimal DP risk, as we will see next. 

\subsubsection{Improved DP-SCO Lower Bound for \texorpdfstring{$p>2$}{p greater than 2}}

\begin{restatable}{theorem}{LpDPSCO}
\label{thm:lp-dpsco-lb}
    Let $p \in [2,\infty)$. There exist a universal constant $c>0$ and an $\ell_p$ SCO problem $\scoprob = (\parspace,\dataspace,f)$ 
    such that every $(\varepsilon,\delta)$-DP learner of $\mathcal P$ with $\varepsilon \le 1$ and $\delta \le c/n$ satisfies,
    \[
    \smash{
    \alpha \ge c \cdot {\min\bigg\{ \left(\frac{d}{\varepsilon^2 n^2}\right)^{\frac{1}{p}} , \frac{d^{1-1/p}}{\varepsilon n} , 1 \bigg\}}.}  
    \]

\end{restatable}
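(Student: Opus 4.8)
The plan is to derive the lower bound from the sparse hard instance that underlies \Cref{thm:lp-traceability-large-p}, by combining a lower bound on its \emph{trace value} for $\alpha$-learners with an upper bound on the trace value achievable by any $(\varepsilon,\delta)$-DP algorithm — the trace value being, in the framework of this paper, the bridge quantity that simultaneously governs traceability and DP lower bounds. I would start from the $p>2$ construction behind \Cref{thm:lp-traceability-large-p}: a family indexed by a mean vector $\mu$ supported on $k$ coordinates, with a prior on each active coordinate whose scale parameter $s$ is calibrated to the target accuracy, and with sparse ($\{-1,0,1\}$-valued) data vectors, the loss normalized so that the instance lies in $\lipset_p^d$. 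The sparse fingerprinting lemma developed in the trace-value section shows that any $\alpha$-learner of this instance has aggregate fingerprinting correlation (hence trace value) that grows with $k$, with the precise dependence on $s$ and with the requirement that $\alpha$ be small enough to ``resolve'' the prior — which is exactly the regime forced by the lower endpoint of~\eqref{eq:l-p-tracing-cor-condition-eps-large-p}.

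On the other side, any $(\varepsilon,\delta)$-DP algorithm has small trace value: the standard DP-stability argument — replace one sample by an independent fresh copy (changing the algorithm's input in one entry), use $(\varepsilon,\delta)$-indistinguishability of the two outputs together with the fact that the bounded fingerprinting scores have zero mean when evaluated against a held-out sample — bounds each per-sample, per-coordinate correlation term by $O(\varepsilon+\delta)$ times the per-coordinate score magnitude, and summing over the $n$ samples and $k$ active coordinates gives a total of $O\big(kn(\varepsilon+\delta)\big)$ times that magnitude; here $\delta\le c/n$ makes the $\delta$ part lower order and $\varepsilon\le1$ linearizes $e^\varepsilon-1$. (If a sharper $\varepsilon$-dependence is needed at this step one can instead use group privacy with groups of size $\Theta(1/\varepsilon)$.) Comparing the two bounds on the correlation yields an inequality relating $\alpha$, $k$, $s$, $n$, $d$, $\varepsilon$, $\delta$; optimizing the free construction parameters $k\le d$ and $s$ then produces the three-way minimum — the term $(d/\varepsilon^2 n^2)^{1/p}$ from the ``full active block'' regime $k=\Theta(d)$, the term $d^{1-1/p}/(\varepsilon n)$ from the regime in which the $\ell_p$-Lipschitz normalization of the loss is the binding constraint, and the trivial $\Omega(1)$ once the first two exceed the domain scale. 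The case $p=2$, where this reduces to the known $\sqrt d/(\varepsilon n)$ bound, is a useful sanity check.

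The step I expect to be the main obstacle is the joint optimization and the verification of its validity: one must choose $(k,s)$ so that the per-coordinate score bound (controlled by $s$) is small enough that the DP upper bound falls below the fingerprinting lower bound across the \emph{entire} claimed parameter range — this is precisely why the prior must scale with the accuracy rather than being uniform, echoing the related-work discussion — while simultaneously ensuring that the resulting instance stays in $\lipset_p^d$ and that the lower-order correction terms in the sparse fingerprinting lemma remain dominated throughout. The sparsity of the data vectors for $p>2$ is essential here: it is what makes a correlation lower bound of the correct order available at all, which is why the paper's sparse fingerprinting lemma, rather than a hypercube-style one, is the right tool.
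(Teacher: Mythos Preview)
Your high-level strategy is exactly the paper's: lower-bound the trace value of the sparse instance via the sparse fingerprinting lemma (\Cref{thm:tracing-result-lp} gives $T\gtrsim \frac{\sqrt d}{n(6\alpha)^{p/2}}\wedge\frac{d^{1-1/p}}{n\alpha}$), upper-bound it for any $(\varepsilon,\delta)$-DP learner via the change-one-sample argument (\Cref{thm:master-privacy-lb}), linearize $e^\varepsilon-1\le 2\varepsilon$, absorb the $\delta$-term using $\kappa\lesssim\sqrt d$ and $\delta\le c/n$, and solve for $\alpha$. No separate optimization over $(k,s)$ is performed at the end; the choice $k=(6\alpha)^pd\lor 1$ (and the matching prior scale) is already built into the trace-value lower bound, and the three-way minimum falls out of the algebra.

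There is, however, a genuine gap in the DP side as you describe it. Bounding ``each per-sample, per-coordinate correlation term by $O(\varepsilon+\delta)$ times the per-coordinate score magnitude'' and then summing over $k$ coordinates is equivalent to using the crude bound $|\trfn|\le\kappa\sim\sqrt k$ in the stability argument. That yields a DP upper bound $T\lesssim\sqrt k\,(\varepsilon+\delta)$, and comparing with $T\gtrsim d^{1-1/p}/(k^{1/2-1/p}n\alpha)$ gives only $\alpha\gtrsim d^{1-1/p}/(k^{1-1/p}n\varepsilon)$, maximized at $k=1$ to $\alpha\gtrsim d^{1-1/p}/(\varepsilon n)$. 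This recovers only the \emph{second} term in the minimum --- already known from~\cite{arora2022differentially} --- and \emph{not} the new $(d/\varepsilon^2 n^2)^{1/p}$ term, which is the whole point of the theorem in the regime $d\ge\varepsilon n$. Group privacy does not repair this: reducing to $\varepsilon=\Theta(1)$ with sample size $\Theta(\varepsilon n)$ gives the same bound.

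The missing ingredient is that in the DP-stability inequality (Lemma~A.1 of~\cite{feldman2017generalization}), the factor multiplying $e^\varepsilon-1$ is $\E\bigl|\trfn(\hat\theta^{(i)},Z_i)\bigr|$, where $\hat\theta^{(i)}$ is \emph{independent} of $Z_i$. The paper's normalization $\trfn(\theta,Z)=\frac{d^{1/p}}{\sqrt k}\langle\theta,Z-\tfrac dk\mu\rangle_{\supp(Z)}$ makes $\{\trfn(\theta,Z)\}_\theta$ an $O(1)$-subgaussian process, so this expectation is $O(1)$, not $O(\sqrt k)$; this is precisely why subgaussianity is built into \Cref{def:subg-tracer}. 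With the refined bound $T\lesssim\varepsilon+\delta\sqrt d$ in hand, comparing with $T\gtrsim\sqrt d/(n(6\alpha)^{p/2})$ (the $k=(6\alpha)^pd$ branch) gives $\alpha\gtrsim(d/\varepsilon^2 n^2)^{1/p}$ directly. So the fix is not to do per-coordinate accounting at all, but to aggregate into the single subgaussian score first and then invoke DP stability on that score.
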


\subsubsection{Consequences for mean estimation} 

Consider the setting of mean estimation in $\ell_\infty$ norm as in~\cite{dwork2015robust}. Our results in~\cref{thm:l1-traceability} extend almost verbatim to this setting. 

\begin{restatable}{corollary}{MeanEst}
    \label{cor:mean-est}
    Let $\mathcal Z = \{\pm 1\}^d$, and suppose an estimator is given such that, given access to i.i.d. samples $Z_1,\ldots,Z_n \in \mathcal Z$, outputs $\hat \mu$ with 
    $
    \E \norm{\hat \mu - \E[Z_1]}_\infty \le \alpha/2.
    $
    Then, there exists a universal constant $c>0$ such that, if $d$ is large enough and $n$, $\xi \in (0, 1/e)$, and $\alpha > 0$ satisfy~\cref{eq:l-1-trace-condition-eps}, then the estimator $\hat\mu$ is $(\xi, m)$-traceable with 
    $
    m \in \Omega\left(\log(d)/\alpha^2\right).
    $
\end{restatable}

\section{Overview of the Proof}

\label{sec:technical}

Our proofs rely on introducing two key technical elements that allow us to generalize tracing techniques to general $\ell_p$ setups. 
The first element is a generic conversion result involving a %
complexity notion which we term \emph{the subgaussian trace value} of a problem. As we show in the proof of~\Cref{thm:lp-dpsco-lb}, we can use the subgaussian trace value to prove traceability results over general domains, establish DP sample complexity lower bounds, and, even to recover  \emph{non-private} sample complexity lower bounds. While the connection between the first two aspects is well-known \citep{feldman2017generalization}, we find the ability of trace value to recover \emph{non-private} lower bounds surprising.

Our second technical contribution concerns techniques for lower bounding the subgaussian trace value, which we accomplish through several novel \emph{fingerprinting lemmas}.
Previous works used the standard fingerprinting lemma, where the learner observes points on a hypercube, to lower bound DP and traceability in $\ell_2$ geometry. However, when moving to general $\ell_p$ geometries, this setup no longer captures the hardest settings to learn. For instance, for $p > 2$, canonical instances of hard problems involve data drawn from sparse sets~\citep{agarwal2009information}. 
We thus prove new fingerprinting lemmas that enable us to leverage our framework in such settings. These fingerprinting lemmas are then applied to carefully constructed instances of hard problems, and we show that every accurate learner of these problems is traceable.  

\subsection{General framework: subgaussian trace value} 
\label{sec:framework}

We next describe more formally the framework of subgaussian tracers. For a random variable $X$, 
the subgaussian norm of $X$
is the quantity 
$
\gaussnorm{X} \defeq \inf\{t\colon \E \left[ \exp(X^2/t^2)\right] \le 2\}
$
\citep{vershynin2018high}.
We use the following definition of a subgaussian process:
\begin{definition}[Subgaussian process]
\label{def:subg-process}
    We call an indexed collection of random variables $\{X_\param\}$ a $\sigma$-subgaussian process w.r.t a metric space $(\parspace,\norm{\cdot})$ if for every $\param, \param' \in\parspace$, we have
    (i) $
    \gaussnorm{X_{\param} - X_{\param'}} \le \sigma \norm{\param - \param'},$ and (ii) $\gaussnorm{X_\param} \le \sigma \diam_{\norm{\cdot}} (\parspace).
    $
\end{definition}
For origin symmetric convex body $\Theta$, let $\norm{\cdot}_\Theta$ denote the Minkowski norm w.r.t. $\Theta$, that is $\norm{x}_\Theta := \inf\left\{ \lambda > 0\colon x \in \lambda \Theta \right\}.$
If $\Theta$ is not convex or not origin symmetric, we let $\norm{\cdot}_\Theta$ be the Minkowski norm w.r.t. convex hull of $(\Theta \cup -\Theta)$. Note that $\norm{\cdot}_\Theta$ is the minimal norm to contain $\Theta$ in its unit ball. 

\begin{definition}[Subgaussian tracer] \label{def:subg-tracer}
Fix $\kappa \in \Reals$ to be a constant, and let $\Theta$ be a convex body. We let $\tracerset_\kappa$ be the class of subgaussian tracers at scale $\kappa > 0$, that is, a tracer $(\trfn,\Dist) \in \tracerset_\kappa$ iff
\begin{enumerate}[(i)]
    \item $\{\trfn(\theta,Z)\}_{\theta \in \Theta}$ where $Z \sim \Dist$ is a $1$-subgaussian process w.r.t. $(\Theta, \norm{\cdot}_\Theta)$.
    \item $|\trfn(\theta, z)| \le \kappa$ for all $\theta \in \Theta$ and $z\in\dataspace$. 
\end{enumerate}
\end{definition}

\begin{definition}[Subgaussian trace value]\label{def:trace-value}
Fix $n \in \Naturals$, $\alpha\in [0,1]$, and $\kappa \in \Reals$. Consider an arbitrary SCO problem $\scoprob = (\parspace,\dataspace,f)$. Let $\tracerset_\kappa$ be as in~\Cref{def:subg-tracer}.
Then, we define the subgaussian trace value 
of problem $\scoprob$ by
\[
\traceval_\kappa(\scoprob;n,\alpha) = \inf_{ \alpha\text{-learner} \Alg_n}~\sup_{\tracer=(\trfn,\Dist) \in \tracerset_\kappa}~ 
{\E_{S_n=(Z_1,\dots,Z_n) \sim \Dist^{\otimes n}, \hat \theta \sim \Alg_n(S_n)} \bigg[\frac{1}{n}\sum_{i\in [n]} \trfn(\hat \theta,Z_i)\bigg]} .
\]
where the $\inf$ is taken over all $\Alg_n$ that achieve excess risk $\le \alpha$ on $\mathcal P$ with $n$ samples. 
\end{definition}

\paragraph{Traceability via subgaussian trace value.}

The subgaussian trace value characterizes the average score the pair $(\trfn, \Dist)$ assigns to the data points in the training set. 
However, the definition of recall in~\Cref{def:tracer-def} requires characterizing the \emph{number} of samples in the training set that takes a large value. The former can be converted into the latter, provided the sum of squared scores of samples is not too large. A formal statement, which is a consequence of Paley–Zygmund inequality, can be found in~\Cref{lem:card-moments}.
In the next lemma, we show how to control the sum of squares of the $\trfn(\hat \theta,Z_i)$ using the subgaussian assumption.
\begin{restatable}{lemma}{LemmaSubgProcess}
    \label{lemma:subg-process}
    Fix $n,d \in \Naturals$. Suppose $\parspace \subset \Reals^d$ is a subset of a unit ball in some norm $\norm{\cdot}$. Let $\trfn \colon \parspace \times \dataspace \to \Reals$ and $\Dist \in \ProbMeasures{\dataspace}$ be such that, as $Z \sim \Dist$, $\{\trfn(\param, \datapt)\}$ is a $\sigma$-subgaussian process w.r.t. $(\parspace,\norm{\cdot})$.
    Let $(Z_1,\dots,Z_n)\sim \Dist^{\otimes n}$.   Then, there is a constant $C> 0$, such that 
    \[
    \P\left[\sup_{\param \in \parspace} \sqrt{ \sum_{i = 1}^n \left[\trfn(\param, \datapt_i) \right]^2} \le C \sigma \left(\sqrt{n} + \sqrt{d} + t\right)\right]\geq 1 - 4\exp(-t^2), \quad \forall t \ge 0.
    \]
\end{restatable}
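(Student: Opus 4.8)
# Proof Proposal for Lemma \ref{lemma:subg-process}

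The plan is to decompose the quantity $\sup_{\param \in \parspace} \sqrt{\sum_{i=1}^n [\trfn(\param, \datapt_i)]^2}$ into two pieces: a "centered" fluctuation term and a "mean" term, and control each using chaining. Concretely, fix an arbitrary anchor point $\param_0 \in \parspace$ (say the origin, which lies in the unit ball). For each fixed $\param$, the triangle inequality in $\ell_2(\{1,\dots,n\})$ gives
\[
\sqrt{\sum_{i=1}^n [\trfn(\param, \datapt_i)]^2} \le \sqrt{\sum_{i=1}^n [\trfn(\param, \datapt_i) - \trfn(\param_0, \datapt_i)]^2} + \sqrt{\sum_{i=1}^n [\trfn(\param_0, \datapt_i)]^2}.
\]
The second term is a sum of $n$ i.i.d.\ subgaussian random variables (each with $\psi_2$-norm at most $\sigma \diam(\parspace) \le 2\sigma$), so standard concentration for sums of subexponential variables — or equivalently a Bernstein-type bound — yields that with probability $1 - 2\exp(-t^2)$ it is $\lesssim \sigma(\sqrt n + t)$. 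The first (supremum) term is the real work.

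For the supremum term, I would introduce the random (data-dependent) pseudometric $\rho(\param,\param') := (\sum_{i=1}^n [\trfn(\param,\datapt_i) - \trfn(\param',\datapt_i)]^2)^{1/2}$ on $\parspace$, and observe that the process $W_\param := \sqrt{\sum_i [\trfn(\param,\datapt_i) - \trfn(\param_0,\datapt_i)]^2}$ we want to bound is $1$-Lipschitz in $\param$ with respect to $\rho$ (again by the $\ell_2$ triangle inequality). The trick is that $W_\param$ is itself the $\ell_2$-norm of the random vector $(\trfn(\param,\datapt_i) - \trfn(\param_0,\datapt_i))_i$, so $\sup_\param W_\param$ is a Lipschitz function of the i.i.d.\ sample $(\datapt_1,\dots,\datapt_n)$ in an appropriate sense; however, cleaner is to bound $\E[\sup_\param W_\param]$ via Dudley and then add a concentration step. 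By Dudley's entropy integral applied to the process $W$ — which, conditionally or via a symmetrization/contraction argument, one shows is subgaussian with respect to a multiple of the original metric $\norm{\cdot}$ scaled by $\sqrt n$ — we get
\[
\E\Big[\sup_{\param\in\parspace} W_\param\Big] \lesssim \sigma\sqrt n \int_0^{\diam(\parspace)} \sqrt{\log \cover{\parspace}{\norm{\cdot}}{u}}\, \dee u \lesssim \sigma(\sqrt n + \sqrt d),
\]
where the last inequality uses the standard volumetric bound $\log \cover{\mathcal B}{\norm{\cdot}}{u} \lesssim d\log(1/u)$ for the unit ball of any norm in $\Reals^d$, making the entropy integral converge to an $O(\sqrt d)$ contribution. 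To pass from the expectation to a high-probability tail with the stated $4\exp(-t^2)$ form, I would invoke a bounded-differences / Gaussian-concentration-type inequality for suprema of subgaussian processes (e.g., the Borell–TIS-style tail that the supremum concentrates around its mean with subgaussian fluctuations governed by $\sup_\param \gaussnorm{W_\param} \lesssim \sigma\sqrt n \diam(\parspace) \lesssim \sigma \sqrt n$), contributing the $\sigma t$ term. Combining both pieces and adjusting constants yields the claim.

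The main obstacle is making the chaining argument for $W_\param$ rigorous: $W_\param$ is a norm of a random vector, not itself a mean-zero subgaussian process, so one cannot apply Dudley directly to $W$. The standard fix is to first bound the genuinely subgaussian increments $\trfn(\param,\datapt_i) - \trfn(\param',\datapt_i)$ by chaining the empirical process $\param \mapsto (\trfn(\param,\datapt_i) - \trfn(\param_0,\datapt_i))_{i\le n} \in \Reals^n$ — i.e., control $\E \sup_\param \sup_{v \in \sphere^{n-1}} \sum_i v_i(\trfn(\param,\datapt_i) - \trfn(\param_0,\datapt_i))$, which equals $\E\sup_\param W_\param$ — via a two-parameter chaining (or the majorizing-measure/Dudley bound on the product index set), using that for fixed $v$ the process $\param \mapsto \sum_i v_i(\trfn(\param,\datapt_i)-\trfn(\param_0,\datapt_i))$ is $\sigma$-subgaussian w.r.t.\ $\norm{\cdot}$ (since a linear combination of the coordinate processes with $\|v\|_2 = 1$ inherits the subgaussian increment bound, as the $\datapt_i$ are independent and each coordinate process is $\sigma$-subgaussian). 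This reduces everything to a single application of Dudley's inequality over $\parspace$ with the metric $\norm{\cdot}$ (the $\sqrt n$ coming out as an overall scale), plus the covering-number computation for the unit ball. I expect the bookkeeping of constants and the precise form of the tail inequality to be the only remaining routine-but-delicate points.
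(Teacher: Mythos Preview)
Your proposal eventually arrives at the same key device as the paper: writing $\sup_{\param}\sqrt{\sum_i \trfn(\param,Z_i)^2}=\sup_{(\param,x)\in\parspace\times\sphere^{n-1}}\langle x,\Phi_\param\rangle$ with $\Phi_\param=(\trfn(\param,Z_i))_{i\le n}$, and chaining over the \emph{product} index set. The paper does precisely this: it shows $(x,\param)\mapsto\langle x,\Phi_\param\rangle$ is a $C\sigma$-subgaussian process with respect to the sum metric $\frac12(\|x-x'\|_2+\|\param-\param'\|)$ and applies a high-probability Dudley bound (\Cref{prop:dudley}); the covering number of the product factorizes and the entropy integral contributes $\sqrt n$ from the sphere and $\sqrt d$ from $\parspace$, additively.

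Relative to that, your route has two detours and one slip. First, the anchor/fluctuation split via $\param_0$ is unnecessary---the product chaining already controls the full supremum (the paper absorbs the anchor inside \Cref{prop:dudley} with one extra union bound). Second, bounding the expectation and then invoking a Borell--TIS-style concentration is avoidable: the high-probability form of Dudley gives the $+t$ term directly. Third, your displayed estimate $\sigma\sqrt n\int_0^{\diam}\sqrt{\log\cover{\parspace}{\norm{\cdot}}{u}}\,du\lesssim\sigma(\sqrt n+\sqrt d)$ is wrong as written: if $W_\param$ were genuinely $\sigma\sqrt n$-subgaussian in $\norm{\cdot}$, Dudley would give $\sigma\sqrt{nd}$, not $\sigma(\sqrt n+\sqrt d)$. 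You correctly diagnose this in the next paragraph and pivot to two-parameter chaining, which is exactly the paper's argument; but note the additive form $\sqrt n+\sqrt d$ arises only from chaining on the product (via $\sqrt{\log N_{\sphere}+\log N_{\parspace}}\le\sqrt{\log N_{\sphere}}+\sqrt{\log N_{\parspace}}$), not from ``a single Dudley over $\parspace$ with $\sqrt n$ as an overall scale.''
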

Equipped with this lemma, in the next theorem, we show that if, the subgaussian trace value of a problem is large, then every $\alpha$-learner is traceable.
\begin{restatable}{theorem}{TheoremMasterTracing}
\label{thm:master-tracing}
     Fix $n \in \Naturals$, $d \in \Naturals$, $\kappa > 0$ and $\alpha\in [0,1]$.  Consider an arbitrary SCO problem $\scoprob = (\parspace,\dataspace,f)$. Let $T = \traceval_\kappa(\scoprob;n,\alpha)$ be the subgaussian trace value of $\scoprob$.
     Then, for some constant $c > 0$, every $\alpha\text{-learner}~\Alg_n$ is $(\xi,m)$-traceable with
     \[
     \xi = \exp(-c T^2), \quad 
     {m = c\left[\frac{n^2T^2}{n + d} - \frac{16\kappa^2 n}{\exp(n + d)}\right].}
     \]
\end{restatable}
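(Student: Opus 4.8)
The plan is to exhibit a single tracer that certifies $(\xi,m)$-traceability. Fix an $\alpha$-learner $\Alg_n$. By the definition of the trace value $T = \traceval_\kappa(\scoprob;n,\alpha)$ as an infimum over learners of a supremum over $\tracerset_\kappa$, there is a subgaussian tracer $(\trfn,\Dist)\in\tracerset_\kappa$ whose average in-sample score is at least (arbitrarily close to) $T$; i.e.
\[
\E_{S_n\sim\Dist^{\otimes n},\,\hat\theta\sim\Alg_n(S_n)}\Bigl[\tfrac1n\sum_{i\in[n]}\trfn(\hat\theta,Z_i)\Bigr]\ \ge\ T .
\]
This $(\trfn,\Dist)$ is the tracer we will use, with a threshold $\lambda$ to be chosen. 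It remains to pick $\lambda$ and verify the two requirements of \Cref{def:soundness-recall}: small FPR and large recall.

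For the \textbf{FPR}, draw a fresh $Z_0\sim\Dist$ independent of $S_n$ and $\hat\theta$. Since $\{\trfn(\theta,Z)\}_{\theta\in\Theta}$ is a $1$-subgaussian process w.r.t. $(\Theta,\norm{\cdot}_\Theta)$, in particular $\trfn(\theta,Z_0)$ is $1$-subgaussian for each fixed $\theta$; conditioning on $\hat\theta$ (which is independent of $Z_0$) and using the standard subgaussian tail bound, $\P(\trfn(\hat\theta,Z_0)\ge\lambda)\le \exp(-c\lambda^2)$ for a universal $c>0$. So choosing $\lambda$ proportional to $T$ gives $\xi=\exp(-cT^2)$ as claimed.

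For the \textbf{recall}, the strategy is to combine \Cref{lemma:subg-process} with a Paley--Zygmund-type argument (\Cref{lem:card-moments}). Write $N(\hat\theta,\lambda):=|\{i\in[n]:\trfn(\hat\theta,Z_i)\ge\lambda\}|$. On the high-probability event $E$ from \Cref{lemma:subg-process} (applied with $\sigma=1$ and an appropriate $t$), we have $\sum_{i=1}^n \trfn(\hat\theta,Z_i)^2 \le C(\sqrt n+\sqrt d+t)^2 \lesssim n+d$, and this event has probability at least $1-4\exp(-t^2)$; since the scores are also bounded by $\kappa$ pointwise, we can control the contribution of the complementary event $E^c$ by $\kappa$ times its probability. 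The idea is then: the large average $\tfrac1n\sum_i\trfn(\hat\theta,Z_i)\ge T$ together with the bounded second moment $\sum_i\trfn(\hat\theta,Z_i)^2\lesssim n+d$ forces many summands to exceed a threshold $\lambda\asymp T$. Quantitatively, $\sum_i \trfn(\hat\theta,Z_i) \le \lambda\, n + \sum_{i:\trfn(\hat\theta,Z_i)>\lambda}\trfn(\hat\theta,Z_i) \le \lambda n + \sqrt{N(\hat\theta,\lambda)}\,\sqrt{\sum_i\trfn(\hat\theta,Z_i)^2}$ by Cauchy--Schwarz, so on $E$,
\[
nT \ \le\ \lambda n + \sqrt{N(\hat\theta,\lambda)}\cdot O(\sqrt{n+d}),
\]
and with $\lambda$ a small enough constant times $T$ this rearranges to $N(\hat\theta,\lambda)\gtrsim \dfrac{n^2T^2}{n+d}$ on $E$. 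Taking expectations and subtracting the worst-case $O(\kappa)$-per-sample error on $E^c$ (which contributes at most $\kappa n \cdot 4\exp(-t^2)$, handled by choosing $t\asymp\sqrt{n+d}$, giving the $16\kappa^2 n/\exp(n+d)$ term after absorbing constants) yields $m = c\bigl[\tfrac{n^2T^2}{n+d} - \tfrac{16\kappa^2 n}{\exp(n+d)}\bigr]$. One has to be slightly careful that the same threshold $\lambda$ works simultaneously for FPR and recall — both want $\lambda\asymp T$, one as an upper bound and one as a lower bound, so a single universal constant can be chosen to satisfy both.

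The main obstacle I anticipate is the bookkeeping that turns the *expected* trace value bound into a *high-probability* statement suitable for the Cauchy--Schwarz/Paley--Zygmund step: the inequality $\tfrac1n\sum_i\trfn(\hat\theta,Z_i)\ge T$ holds only in expectation, not pointwise, so one cannot simply condition on $E$ and apply it. The clean way around this is to integrate the Cauchy--Schwarz inequality directly: take expectations of $nT\le \E[\sum_i\trfn(\hat\theta,Z_i)]$ against $\E[\sqrt{N}\cdot\sqrt{\sum_i\trfn^2}\,\mathbbm 1_E] + \E[\sum_i\trfn\cdot\mathbbm 1_{E^c}]$, bound $\sum_i\trfn^2$ by $O(n+d)$ on $E$ and the second term by $\kappa n\,\P(E^c)$, then use $\E[\sqrt N]\le\sqrt{\E[N]}$ (Jensen) to conclude $\E[N]\gtrsim n^2T^2/(n+d)$ minus the small correction. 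This is exactly the role \Cref{lem:card-moments} plays, and the residual check is only that the constants line up.
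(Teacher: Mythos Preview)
Your proposal is correct and follows essentially the same route as the paper: choose $\lambda$ proportional to $T$, use the $1$-subgaussian tail for FPR, invoke \Cref{lemma:subg-process} with $t\asymp\sqrt{n+d}$ to control $\sum_i\trfn(\hat\theta,Z_i)^2$ on a high-probability event, split on that event (using $|\trfn|\le\kappa$ on the complement), and apply a Paley--Zygmund/Cauchy--Schwarz argument together with Jensen to convert the expected first moment into a recall lower bound. The only cosmetic difference is that the paper applies \Cref{lem:card-moments} pointwise and then uses Jensen on the convex map $x\mapsto (x-n\lambda)_+^2$, whereas you rearrange Cauchy--Schwarz first and apply Jensen to $\sqrt{N}$; both are equivalent and you correctly anticipated and resolved the expectation-versus-pointwise issue.
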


\paragraph{Privacy lower bounds via subgaussian trace value.}
In the next theorem, we show that the notion of subgaussian trace value directly lower bounds the best privacy parameters achievable by a DP algorithm. The proof is based on  \citep{feldman2017generalization}. 
\begin{restatable}{theorem}{TheoremMasterPrivacy}
\label{thm:master-privacy-lb}
     There exists a universal constant $c > 0$, such that the following holds. Fix $p \in [1,\infty)$, $n \in \Naturals$,  $d \in \Naturals$, $\alpha\in [0,1]$, $\kappa > 0$ $\varepsilon >0$, and $\delta \in [0,1]$.  Consider an arbitrary SCO problem $\scoprob = (\parspace,\dataspace,f)$ in $\Reals^d$.  Let $T = \traceval_\kappa(\scoprob;n,\alpha)$ be the subgaussian trace value of problem $\scoprob$. Then, for every $(\varepsilon, \delta)$-DP $\alpha$-learner $\Alg_n$, we have 
    $
    \exp(\varepsilon)-1 \geq c\left(T - 2\delta \kappa\right).
    $
\end{restatable}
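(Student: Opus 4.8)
The plan is to carry out the standard reduction from a fingerprinting-type statement (here, a large trace value) to a lower bound on the privacy budget, via a one-sample swap. Since $\Alg_n$ is an $(\varepsilon,\delta)$-DP $\alpha$-learner it is one of the learners competing in the infimum defining $\traceval_\kappa(\scoprob;n,\alpha)$ in \Cref{def:trace-value}, so $\sup_{(\trfn,\Dist)\in\tracerset_\kappa}\E_{S_n,\hat\theta}\bigl[\tfrac1n\sum_{i\in[n]}\trfn(\hat\theta,Z_i)\bigr]\ge T$. Fix $\eta>0$ and pick a tracer $(\trfn,\Dist)\in\tracerset_\kappa$ with $\tfrac1n\sum_{i\in[n]}\E[\trfn(\hat\theta,Z_i)]\ge T-\eta$, and send $\eta\to 0$ at the end. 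Two features of such a tracer are used. First, since $\{\trfn(\theta,Z)\}_{\theta\in\Theta}$ with $Z\sim\Dist$ is a $1$-subgaussian process with respect to $(\Theta,\norm{\cdot}_\Theta)$ (\Cref{def:subg-process}), we have $\gaussnorm{\trfn(\theta,Z)}\le\diam_{\norm{\cdot}_\Theta}(\Theta)\le 2$, hence $\E_{Z\sim\Dist}\lvert\trfn(\theta,Z)\rvert\le\gaussnorm{\trfn(\theta,Z)}\le 2$ for every $\theta$. Second --- reducing to this case if needed by replacing $\trfn$ by a centered and rescaled version and checking it remains in $\tracerset_\kappa$ --- the process is centered: $\E_{Z\sim\Dist}[\trfn(\theta,Z)]=0$ for all $\theta$.

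Fix $i\in[n]$, let $Z_0\sim\Dist$ be fresh and independent of $S_n$, let $S_n^{(i)}$ be $S_n$ with its $i$-th coordinate replaced by $Z_0$, and write $\hat\theta=\Alg_n(S_n)$, $\hat\theta^{(i)}=\Alg_n(S_n^{(i)})$. Because $S_n^{(i)}$ is i.i.d.\ from $\Dist$ and independent of $Z_i$, and $\trfn$ is centered, $\E[\trfn(\hat\theta^{(i)},Z_i)]=0$, while $\E\lvert\trfn(\hat\theta^{(i)},Z_i)\rvert\le 2$ by the previous display. Conditioning on all coordinates of $S_n$ other than the $i$-th and on $(Z_i,Z_0)$, the databases $S_n$ and $S_n^{(i)}$ are neighbors, so $(\varepsilon,\delta)$-differential privacy controls the law of $\Alg_n$'s output in both directions. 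Writing $\theta\mapsto\trfn(\theta,Z_i)$ as a difference of two $[0,\kappa]$-valued functions and integrating the tail inequalities $\P[\Alg_n(S_n)\in E]\le e^{\varepsilon}\P[\Alg_n(S_n^{(i)})\in E]+\delta$ and its reverse over the threshold level in $[0,\kappa]$ (using $1-e^{-\varepsilon}\le e^{\varepsilon}-1$), one obtains, after taking full expectations,
\[
\E[\trfn(\hat\theta,Z_i)]-\E[\trfn(\hat\theta^{(i)},Z_i)]\le (e^{\varepsilon}-1)\,\E\lvert\trfn(\hat\theta^{(i)},Z_i)\rvert+2\kappa\delta\le 2(e^{\varepsilon}-1)+2\kappa\delta .
\]
Using $\E[\trfn(\hat\theta^{(i)},Z_i)]=0$, summing over $i$, dividing by $n$, and recalling the choice of tracer,
\[
T-\eta\le\frac1n\sum_{i\in[n]}\E[\trfn(\hat\theta,Z_i)]\le 2(e^{\varepsilon}-1)+2\kappa\delta .
\]
Letting $\eta\to 0$ and rearranging yields $e^{\varepsilon}-1\ge\tfrac12\bigl(T-2\kappa\delta\bigr)$, which is the assertion with $c=\tfrac12$.

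The delicate point is the differential-privacy step: it must be run so that the pointwise bound $\kappa$ enters only the additive $\delta$-term, while the potentially large multiplicative factor $e^{\varepsilon}-1$ multiplies only the first absolute moment $\E\lvert\trfn(\hat\theta^{(i)},Z_i)\rvert\le 2$ --- the naive estimate $\lvert\trfn\rvert\le\kappa$ everywhere would instead produce the useless term $(e^{\varepsilon}-1)\kappa$. The layer-cake / tail-integration computation sketched above is exactly what achieves this decoupling, and the first-moment bound is then supplied by the $1$-subgaussian property together with $\E\lvert X\rvert\le\gaussnorm{X}$. A secondary technical point is the centering reduction: one checks that centering and rescaling $\trfn$ keeps it inside $\tracerset_\kappa$, where only the convexity clause of \Cref{def:subg-tracer} needs attention, and this is immaterial here since the present theorem does not involve the $\ell_1$ construction for which that clause is used.
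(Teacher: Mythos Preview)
Your proof is correct and follows the same approach as the paper: both apply the standard one-sample-swap inequality for $(\varepsilon,\delta)$-DP mechanisms (the paper cites Lemma~A.1 of~\cite{feldman2017generalization}, which is exactly your layer-cake/tail-integration argument) and then bound $\E\lvert\trfn(\hat\theta^{(i)},Z_i)\rvert$ by a constant via the $1$-subgaussian property. You are in fact more explicit than the paper about the centering step needed to obtain $\E[\trfn(\hat\theta^{(i)},Z_i)]=0$, which the paper glosses over (all tracers actually constructed there are centered).
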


\paragraph{Non-private sample complexity via subgaussian trace value.}
    \label{remark:non-priv-sample-complexity}

    {Surprisingly, if we directly use \emph{subgaussian trace value}, we can} recover optimal sample complexity bounds \emph{for all $p \in [1,\infty)$ and all regimes of $(d,\alpha)$}, thus unifying traceability with private and non-private sample complexity lower bounds. While we detail the argument formally in~\Cref{sec:sample-complexity}, we consider here a helpful example of $\ell_2$ geometry. First, it can be shown that we always have $\traceval(\mathcal P; n,\alpha) \lesssim \sqrt{d/n}$ for arbitrary problem $\mathcal P$ (see~\cref{prop:master-sample-complexity}). Also, we will later show that, for every $\alpha > 0$, there exist an $\ell_2$ problem $\scoprob$ with $\traceval(\mathcal P; n,\alpha) \gtrsim \sqrt{d}/n\alpha$ (see~\cref{thm:tracing-result-lp}). Combining these two inequalities gives $n \gtrsim 1/\alpha^2$, which is optimal. 
\subsection{The sparse fingerprinting lemma} 

\label{sec:fingerprinting}

By introducing the notion of subgaussian trace value, we have reduced the problems of traceability and privacy lower bounds to the question of lower bounding the subgaussian trace value. Now, we discuss the techniques to lower bound subgaussian trace value. The proofs can be found in~\Cref{appx:fingerprinting}. Due to space limitation, we only discuss the details for the case of  $p>1$ and present the details of $p=1$ in \cref{apppx:proof-for-l1}.

For $\ell_2$ geometry, one can lower bound subgaussian trace value using the classical fingerprinting lemma  in~\citep{dwork2015robust}. 
While this strategy leads to traceability results in $\ell_2$ geometry, examples of hard problems for $\ell_p$ geometry with $p > 2$ are those with \emph{sparse} sets $\dataspace$ (e.g., as in~\citep{agarwal2009information}). This motivates us to prove the following \emph{sparse fingerprinting lemma}, which is another important contribution of our work. For a vector $x \in \Reals^d$, let $\supp(x)$ be the set of its non-zero coordinates and denote $\norm{x}_0 =|\supp(x)|$. 
\begin{definition}[Sparse distributions family]
\label{def:family-dist-sparse}
Fix $d \in \Naturals$, $k \in [d]$ and $\mu \in [-k/d,k/d]^d$.
Consider the mixture distribution on
$
\dataspace_k = \{z\in \{0,\pm 1\}^d: \norm{z}_0=k\}
$
given by, for all $z \in \dataspace_k$,
$
\Dist_{\mu,k}(z) = \E_{J \sim  \unif{\binom{[d]}{k}}}\left[P_{\mu,k,J}(z)\right], 
$
where
\[
P_{\mu,k,J}(z) = 
\ind(\supp(z) = J) \cdot \smash{ {\prod_{j \in J}} \left( {\frac{1 + (d/k) \cdot \mu^j z^j}{2}}\right)}.\]
\end{definition}
Note that, in particular, $\E_{Z \sim \Dist_{\mu,k}}[Z] = \mu$. Intuitively, one can think of \emph{sampling} from $\Dist_{\mu,k}$ using the following procedure: (i) sample the support coordinates $J \sim \mathsf{unif}\binom{[d]}{k}$, (ii) for each $j \in J$, sample $Z^j$ from $\{\pm 1\}$ with mean $\frac{d}{k}\mu^j$ independently, (iii) for each $j \not \in J$, set $Z^j=0$.

With this distribution family at hand, we may state the sparse fingerprinting lemma. For $x, y \in \Reals^d$ and a subset $R \subseteq [d]$ of coordinates, we use $\inner{\cdot}{\cdot}_S$ to denote the inner product $\inner{x}{y}_R := \sum_{i \in R} x_i y_i$. Also, for $\alpha,\beta,\gamma > 0$, let $\dbeta[{[-\gamma,\gamma]}]{\alpha}{\beta}$ be the \emph{symmetric} beta-distribution, i.e., beta distribution with parameters $\alpha,\beta$ scaled and shifted to have support $[-\gamma,\gamma]$ (see~\Cref{def:rescaled-beta-dist}).
\begin{restatable}[Sparse fingerprinting]{lemma}{LemmaSparseFingerprinting}
    \label{lemma:sparse-fingerprinting} 
    Fix $d,n \in \Naturals$ and let $k \in [d]$. For each $\mu \in [-k/d,k/d]^{d}$, let $\dataspace_k$ and $\Dist_{\mu,k}$ be as in~\Cref{def:family-dist-sparse} .
    Let $\pi = \dbeta[{[-k/d,k/d]}]{\beta}{\beta}^{\otimes d}$ be a prior and set
    \[
    \trfn_\mu(\param,Z) \defeq \inner{\param}{\left(Z - {\frac{d}{k}} \mu\right)}_{\supp\left(Z\right)}.
    \]
    Then, for every learning algorithm $\Alg_n: \dataspace^n \to \ProbMeasures{\Reals^d}$ with sample $S_n = (Z_1,\ldots,Z_n)$,   
    \[
        \E_{\mu \sim \pi} \E_{S_n \sim \Dist_{\mu,k}^{\otimes n},\hat \theta \sim \Alg_n(S_n)} \left[ {\sum_{i = 1}^n \trfn_\mu(\hat\theta,Z_i) }\right] = \frac{2\beta d}{k} \E_{\mu \sim \pi} \inner{\mu}{\E_{S_n \sim \Dist_{\mu,k}^{\otimes n},\hat \theta \sim \Alg_n(S_n)} [\hat\theta] }.
    \]
\end{restatable}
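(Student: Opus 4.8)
The plan is to prove the identity by a direct computation: expand the left-hand side by conditioning on the support set $J$ and on $\mu$, and use the fact that the mean of each nonzero coordinate of $Z$ under $\Dist_{\mu,k}$ is $\frac dk \mu^j$ together with a one-dimensional integration-by-parts (or Stein-type) identity for the symmetric beta prior. Concretely, writing $g(\mu) \defeq \E_{S_n \sim \Dist_{\mu,k}^{\otimes n}, \hat\theta \sim \Alg_n(S_n)}[\hat\theta]$ for the (vector-valued) expected output, the goal is to show that for each coordinate $j$,
\[
\E_{\mu \sim \pi}\Big[ \sum_{i=1}^n \E\big[\trfn_\mu(\hat\theta, Z_i)^{(j)}\big]\Big] = \frac{2\beta d}{k}\, \E_{\mu \sim \pi}\big[\mu^j\, g(\mu)^j\big],
\]
where the superscript $(j)$ denotes the contribution of coordinate $j$ to the inner product $\inner{\cdot}{\cdot}_{\supp(Z)}$. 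Summing over $j$ gives the claim.

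First I would fix a coordinate $j$ and a sample index $i$, and compute $\E_{S_n, \hat\theta}[\hat\theta^j (Z_i^j - \tfrac dk \mu^j)\ind(j \in \supp(Z_i))]$ conditionally on $\mu$. Here the key structural observation is the conditional-independence form of $\Dist_{\mu,k}$: sample the support $J$, then the signs. I would write $\hat\theta$'s dependence on $Z_i^j$ explicitly by conditioning on everything except $Z_i^j$; given $j \in \supp(Z_i)$, the coordinate $Z_i^j$ is a mean-$\tfrac dk \mu^j$ sign variable, independent of the other coordinates and of the support choice. A short calculation then shows that the $\mu$-conditional expectation of $\sum_i \trfn_\mu^{(j)}$ equals $\frac{\partial}{\partial \mu^j}$ of something — more precisely, one expresses $\E_{S_n,\hat\theta}[\hat\theta^j \cdot (\text{centered } Z_i^j)]$ in terms of the derivative of $g(\mu)^j$ (or directly of $\mu^j g(\mu)^j$), which is the standard "correlation-to-mean" mechanism behind all fingerprinting lemmas. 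This reduces the problem to a one-dimensional statement.

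The one-dimensional statement is: for $p \sim \dbeta[{[-k/d,k/d]}]{\beta}{\beta}$ and any bounded measurable $h$, $\E_p[h(p)\, (\text{derivative term})] = \frac{2\beta d}{k}\E_p[p\, h(p)]$ after integrating by parts, where the boundary terms vanish because the symmetric beta density (for $\beta \ge 1$, or interpreted appropriately) vanishes at the endpoints $\pm k/d$. This is where the specific parametrization $(\beta,\beta)$ and the scaling $[-k/d,k/d]$ enter: the factor $\frac{2\beta d}{k}$ is exactly what the beta$(\beta,\beta)$ score function $\frac{d}{d p}\log(\text{density})$ contributes after accounting for the support rescaling. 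I would carry out this integration by parts carefully, tracking the rescaling Jacobian, and check the boundary terms; this is routine but is the computational heart of the argument.

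The main obstacle I anticipate is bookkeeping rather than conceptual: correctly handling the interaction between the random support $J$ and the per-coordinate sign distribution, so that the "leave-one-coordinate-out" conditioning is valid and the derivative-in-$\mu^j$ identity holds coordinate-by-coordinate even though $\mu^j$ enters $\Dist_{\mu,k}$ only through coordinates in $J$. One must verify that conditioning on $j \notin \supp(Z_i)$ contributes zero (trivially, since $\trfn_\mu$ restricts the inner product to $\supp(Z_i)$) and that conditioning on $j \in \supp(Z_i)$ yields the clean one-dimensional picture with the effective Bernoulli parameter $\tfrac12(1 + \mu^j d/k)$. Once that reduction is in place, the integration by parts against the beta prior is mechanical and produces exactly the stated constant $\frac{2\beta d}{k}$.
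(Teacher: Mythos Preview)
Your proposal is correct and follows essentially the same route as the paper: decompose coordinate-by-coordinate, condition on the supports and the other coordinates so that each nonzero $Z_i^j$ is a $\{\pm 1\}$ variable with mean $\tfrac{d}{k}\mu^j$, apply the one-dimensional ``correlation equals variance times derivative'' identity, and then integrate by parts against the rescaled symmetric beta prior to produce the factor $\tfrac{2\beta d}{k}$. The paper organizes the conditioning slightly differently---it conditions on \emph{all} supports $\{\mathcal{I}_r\}_{r\in[d]}$ and on $\{Z_i^m\}_{m\neq j,\, i\in[n]}$ simultaneously, then treats the entire collection $\{Z_i^j : i\in\mathcal{I}_j\}$ at once via the standard fingerprinting derivative identity (cf.\ \cite[Lemma~4.3.7]{steinke2016upper})---but this is the same bookkeeping you describe, just grouped over $i$ rather than per-$(i,j)$.
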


The key novelty of this lemma is that it provides a way to study the correlation between a learner's output and training samples on sparse sets $\dataspace_k$. An important and distinctive feature of this result is that the right-hand side scales by a factor of $d/k$, highlighting the fact that sparse problems correspond to greater subgaussian trace values. 
Additionally, for the special case $k = d$, the result precisely recovers the fingerprinting lemma from~\citep{steinke2017tight}.
 
\subsection{Final steps: bounding the subgaussian trace value for hard problems}
\label{sec:trace-lb}
Finally, we go over the construction of hard problems. To illustrate the difficulty of problem constructions, we give an example of a problem that requires many samples to learn but nevertheless is not traceable. Consider learning over $\ell_1$ ball with linear loss. Let \begin{equation} 
\label{eq:l1-counterexample}
\parspace = \mathcal B_1(1), \quad \dataspace = \{\pm 1\}^{d}, \quad f(\theta, Z) = - \langle \theta, Z\rangle.
\end{equation} 
Consider a difficult set of distributions $\{\Dist_i\}_{i = 1}^d$ where $\Dist_i$ is a product distribution on $\dataspace$ and has mean $\alpha$ on coordinate $i$ and mean zero on all other coordinates. It can be shown this problem requires $\Theta(\log(d)/\alpha^2)$ samples to learn up to risk of $\alpha/3$, and ERM is an optimal learner. However, after seeing $\Theta(\log(d)/\alpha^2)$ samples from $\Dist_i$, the ERM takes the value $\hat \theta = e_i$ w.h.p., which is also the \emph{population} risk minimizer. In other words, it becomes impossible to trace out any specific samples on which $\hat\theta$ was trained. 

\paragraph{Generic construction for $p \in (1,\infty)$.} As mentioned above, to obtain optimal results for $p > 2$, problems constructed need to be sparse, and the main subtlety in our constructions is choosing the sparsity parameter. For some $k \in [d]$ to be chosen later, consider the following $\ell_p$-Lipschitz problem $\mathcal P_{k,p}$.
\begin{equation}
\label{eq:lp-sketch-construction}
\Theta = \mathcal B_{\infty}(d^{-1/p}), \quad \mathcal Z = \{ z \in \{0,\pm 1\}^d\colon \norm{z}_0 = k \}, \quad f(\theta, z) = - k^{-1/q} \langle \theta, z \rangle.
\end{equation}
{Here, the parameter space $\Theta$ is the largest $\ell_\infty$ ball inscribed into the unit $\ell_p$ ball, and $q$ is the H{\"o}lder conjugate of $p$, i.e., $\frac{1}{p} + \frac{1}{q}=1$.} The next step is to show that $\alpha$-learners for the above problem must be correlated with the mean of the unknown data distribution. Let $\Dist$ be a distribution with mean $\mu$, and suppose $\Alg_n$ is an $\alpha$-learner for~\cref{eq:lp-sketch-construction}. Then, 
\[
    \E_{S_n \sim \Dist^{\otimes{n}},\hat \theta \sim \Alg_n(S_n)}\left[\inner{\mu}{\hat\theta}\right] \ge \sup_{\param\in\parspace} \inner{\mu}{\param} - k^{1/q} \cdot \alpha = d^{-1/p} \norm{\mu}_1 - k^{1/q} \alpha.
\]

Now, we apply the sparse fingerprinting lemma (\Cref{lemma:sparse-fingerprinting}). A key step is choosing the scale $\beta \ge 1$ of the beta-prior. On the one hand, $\beta$ should be small enough to guarantee $\E d^{-1/p} \norm{\mu}_1 > k^{1/q} \alpha$, so that the above lower bound is non-vacuous. On the other hand, taking $\beta$ too small decreases the sample complexity of learning the problem, thus, disallowing the desired level of recall. The optimal choice is 
$\beta \propto \alpha^{-2} \cdot (k/d)^{1/p}$, 
as long as this quantity is $\ge 1$. This choice yields
\[
    \traceval_{\kappa}(\scoprob_{k,p}; n, \alpha) \ge \E_{S_n \sim \Dist^{\otimes n},\hat \theta \sim \Alg_n(S_n)} \left[ \frac{1}{n}{\sum_{i = 1}^n } \trfn(\hat\theta,Z_i)\right] \gtrsim \frac{d^{1 - 1/p}}{k^{1/2 - 1/p} n \alpha},
\]
where $\kappa \in \Theta(1)$, and, for some universal constant $c>0$, we let
\[
\trfn(\param,Z) := \frac{cd^{1/p}}{\sqrt{k}} \inner{\param}{\left(Z - {\frac{d}{k}} \mu\right)}_{\supp\left(Z\right)}.
\]
Note that the $d^{1/p}/\sqrt{k}$ scaling ensures $\trfn$ induces a $1$-subgaussian process. Finally, it remains to choose a suitable value for $k$, for each pair $(p,\alpha)$. Recall the definition of $\mathcal P_{k,p}$ from~\cref{eq:lp-sketch-construction}.
\begin{restatable}{theorem}{TraceValueLp}
\label{thm:tracing-result-lp}
Let $\mathcal P_{k,p}$ be the family of problems described in \cref{eq:lp-sketch-construction}. There exist universal constants $c_1,c_2 >0$ such that,
   for all $\alpha \in (0, 1/6]$ and $d \in \Naturals$, the following subgaussian trace value lower bounds hold for all $p \in [1,\infty)$ and $\kappa \le c_1 \sqrt{d}$:
\begin{enumerate}[(i)]
    \item For $p \le 2$ and $k = d$, we have
    $
    \traceval_{\kappa}(\scoprob_{k,p}; n, \alpha) \ge c_2\frac{\sqrt{d}}{n\alpha}.
    $
    \item For $p \ge 2$ and $k = (6\alpha)^p d \lor 1$, we have 
    $
        \traceval_{\kappa}(\scoprob_{k,p}; n, \alpha) \ge c_2\left[\frac{\sqrt{d}}{n (6\alpha)^{p/2}} \land \frac{d^{1 - 1/p}}{n \alpha}\right].
    $
    \end{enumerate}
\end{restatable}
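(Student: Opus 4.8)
The plan is to formalize the outline in~\Cref{sec:trace-lb}. Fix an arbitrary $\alpha$-learner $\Alg_n$ for $\scoprob_{k,p}$. The hard instances will be the sparse distributions $\Dist_{\mu,k}$ of~\Cref{def:family-dist-sparse}, indexed by $\mu\in[-k/d,k/d]^d$; for each such $\mu$ the candidate tracer strategy is $\bigl(\tfrac{c\,d^{1/p}}{\sqrt k}\,\trfn_\mu,\ \Dist_{\mu,k}\bigr)$, where $\trfn_\mu(\param,Z):=\inner{\param}{Z-\tfrac dk\mu}_{\supp(Z)}$ is the score from~\Cref{lemma:sparse-fingerprinting} and $c>0$ is a small absolute constant. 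The first step is a deterministic ``accuracy implies correlation with the mean'' estimate: under $\Dist_{\mu,k}$ the population risk of $\scoprob_{k,p}$ is $\param\mapsto-k^{-1/q}\inner{\param}{\mu}$, whose infimum over $\Theta=\mathcal B_\infty(d^{-1/p})$ equals $-k^{-1/q}d^{-1/p}\norm{\mu}_1$ (attained at the vertex $d^{-1/p}\sign(\mu)$), so the $\alpha$-learner guarantee rearranges to
\[
\E_{S_n\sim\Dist_{\mu,k}^{\otimes n},\ \hat\param\sim\Alg_n(S_n)}\bigl[\inner{\hat\param}{\mu}\bigr]\ \ge\ d^{-1/p}\norm{\mu}_1-k^{1/q}\alpha,\qquad\text{for every }\mu.
\]
(It is routine that $\scoprob_{k,p}\in\lipset_p^d$: $\Theta\subset\mathcal B_p(1)$ and $\norm z_q=k^{1/q}$ on $\dataspace$.)

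Next I verify that this tracer strategy lies in $\tracerset_\kappa$ for $\kappa$ as in the statement. Convexity of $\param\mapsto\trfn_\mu(\param,z)$ is automatic since it is linear; the pointwise bound $\bigl|\tfrac{c\,d^{1/p}}{\sqrt k}\trfn_\mu(\param,z)\bigr|\le\tfrac{c\,d^{1/p}}{\sqrt k}\cdot k\cdot d^{-1/p}\cdot 2=2c\sqrt k\le 2c\sqrt d$ gives condition (ii); and $1$-subgaussianity with respect to $(\Theta,\norm{\cdot}_\Theta)$, where $\norm{\cdot}_\Theta=d^{1/p}\norm{\cdot}_\infty$ and $\diam_{\norm{\cdot}_\Theta}(\Theta)=2$, is exactly what the normalization $d^{1/p}/\sqrt k$ is designed to produce: conditioned on $\supp(Z)$, the increment $\trfn_\mu(\param,Z)-\trfn_\mu(\param',Z)$ is a sum of $k$ independent, mean-zero, $[-2,2]$-bounded summands, each with subgaussian norm $O(\norm{\param-\param'}_\infty)$, hence has subgaussian norm $O(\sqrt k\,\norm{\param-\param'}_\infty)$, and the prefactor turns this into $\le\norm{\param-\param'}_\Theta$ once $c$ is a sufficiently small absolute constant (the bound $\gaussnorm{\tfrac{c\,d^{1/p}}{\sqrt k}\trfn_\mu(\param,Z)}\le\diam_{\norm{\cdot}_\Theta}(\Theta)$ is analogous). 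Now apply~\Cref{lemma:sparse-fingerprinting} to $\Alg_n$ with the prior $\pi=\dbeta[{[-k/d,k/d]}]{\beta}{\beta}^{\otimes d}$ and average the correlation estimate over $\mu\sim\pi$; since each $\bigl(\tfrac{c\,d^{1/p}}{\sqrt k}\trfn_\mu,\Dist_{\mu,k}\bigr)$ is an admissible tracer and $\sup_\mu\ge\E_{\mu\sim\pi}$,
\begin{align*}
\sup_{\tracer\in\tracerset_\kappa}\E\Bigl[\tfrac1n{\textstyle\sum_{i\in[n]}}\trfn(\hat\param,Z_i)\Bigr]
&\ \ge\ \frac{c\,d^{1/p}}{n\sqrt k}\,\E_{\mu\sim\pi}\,\E\Bigl[{\textstyle\sum_{i}}\trfn_\mu(\hat\param,Z_i)\Bigr]
\ =\ \frac{2c\beta\,d^{1+1/p}}{n\,k^{3/2}}\,\E_{\mu\sim\pi}\inner{\mu}{\E[\hat\param]}\\
&\ \ge\ \frac{2c\beta\,d^{1+1/p}}{n\,k^{3/2}}\Bigl(d^{-1/p}\,\E_{\mu\sim\pi}\norm{\mu}_1-k^{1/q}\alpha\Bigr).
\end{align*}
The final expression no longer depends on $\Alg_n$, so it lower-bounds $\traceval_\kappa(\scoprob_{k,p};n,\alpha)$.

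It then remains to calibrate $\beta$ (and, for part (ii), the sparsity $k$). A standard moment estimate for the symmetric beta distribution gives $\E_{\mu\sim\pi}\norm{\mu}_1\asymp (k/d)\cdot d/\sqrt\beta=k/\sqrt\beta$ for all $\beta\ge1$, so the bracket above is at least $\tfrac12 d^{-1/p}\E_{\mu\sim\pi}\norm{\mu}_1$ provided $\beta\lesssim(k/d)^{2/p}/\alpha^2$; choosing $\beta$ equal to this threshold, the displayed bound becomes $\traceval_\kappa(\scoprob_{k,p};n,\alpha)\gtrsim\sqrt\beta\,d/(n\sqrt k)\asymp d^{1-1/p}/\bigl(k^{1/2-1/p}\,n\alpha\bigr)$. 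This choice of $\beta$ is indeed $\ge1$: in part (i) because $k=d$ and $\alpha\le1/6$, and in part (ii) because $k/d\gtrsim(6\alpha)^p$ when $k=(6\alpha)^pd$, while $k=1$ forces $d<(6\alpha)^{-p}$, i.e., $(k/d)^{1/p}=d^{-1/p}>6\alpha$; the hypothesis $\alpha\le1/6$ is also what makes $k=(6\alpha)^pd\vee1\le d$ sensible. Plugging in $k=d$ gives part (i), $\traceval_\kappa\gtrsim\sqrt d/(n\alpha)$; plugging in $k=(6\alpha)^pd\vee1$ gives part (ii), where the minimum records the two regimes of $k$: when $(6\alpha)^pd\ge1$ one gets $\sqrt d/\bigl(n(6\alpha)^{p/2}\bigr)$, and when $(6\alpha)^pd<1$ (so $k=1$) one gets $d^{1-1/p}/(n\alpha)$; integer-rounding of $k$ is absorbed into the constants. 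The main obstacle is precisely this three-way calibration of $\beta$: it must be small enough that the learner's correlation with $\mu$ beats the error term $k^{1/q}\alpha$, large enough that the resulting factor $\sqrt\beta$ gives a trace value on the order of the sample complexity, and at least $1$ for the $\beta$-prior moment estimate to apply --- and it is the need to satisfy all three simultaneously that forces the sparsity $k$ to scale with $\alpha$ in the regime $p>2$.
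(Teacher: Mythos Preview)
Your proposal is correct and follows essentially the same route as the paper: the paper packages the ``accuracy $\Rightarrow$ correlation with $\mu$'' step as~\Cref{prop:learners-properies}, the subgaussian verification as~\Cref{lemma:lp-subg-tracer}, and the fingerprinting-plus-$\beta$-calibration as~\Cref{lemma:corelation-aux-lp} and~\Cref{lemma:lp-tracing-lb}, with the final substitution of $k$ done exactly as you describe. Your use of $\sup_\mu\ge\E_{\mu\sim\pi}$ is the same device the paper invokes when it says ``since the above holds in expectation over draws of $\mu$, there exists at least one value of $\mu$ for which the above holds.''
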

Using the reduction~\Cref{thm:master-tracing}, the above establishes~\Cref{thm:lp-traceability,thm:lp-traceability-large-p}.

\paragraph{Refinement for \texorpdfstring{$p = 1$}{p = 1}.} While the above construction also yields a traceability result for $p = 1$, it is suboptimal for the following simple reason: for $k = d$, the problem in~\cref{eq:lp-sketch-construction} only requires $\Theta(1/\alpha^2)$ samples to learn, thus, it is impossible to trace out $\Omega(\log(d)/\alpha^2)$ samples. On the other hand, the problem in~\cref{eq:l1-counterexample} requires $\Theta(\log(d)/\alpha^2)$ samples to learn but is not traceable. The intuition we follow here is to modify the construction in~\cref{eq:lp-sketch-construction} to make $\Theta$ ``look'' more like an $\ell_1$-ball to drive up the sample complexity while still avoiding the counterexample with an ERM learner from the beginning of the section. 
In particular, we consider the following $\ell_1$-problem,
\begin{equation}
\label{eq:ell1-sketch-construction}
\Theta = \mathcal B_1(1) \cap \mathcal B_\infty(1/s), \quad \dataspace = \{\pm 1\}^d, \quad f(\theta, z) = - \langle z, \theta \rangle,
\end{equation}
for a suitably chosen $s \in [d]$. Note that, if we choose $s \gg 1$, $\Theta$ above is a polytope with much more vertices ($2^s \binom{d}{s}$) than an $\ell_1$ ball ($2d$), which would intuitively force a learner like an ERM to reveal more information about the training sample. On a technical level, selecting large $s$ improves the subgaussian constant of a tracer; however, selecting $s$ that is too large shrinks the diameter of the set, and thus, the problem becomes easier to learn. We must trade off these two aspects, and carefully set the value of $s$. As it turns out, the optimal choice is $s \propto d^{1 - c}$ for an arbitrary small $c > 0$ in order to establish~\Cref{thm:l1-traceability}. 
The remainder of the proof is rather technical and hence is deferred to~\Cref{pf:thm:l1-traceability}.

\section*{Acknowledgments}
The authors would like to thank Mufan Li and Ziyi Liu for their comments on the drafts of this work. 

\section*{Funding}

{This work was completed while Sasha Voitovych was a student at the University of Toronto/Vector Institute and supported by an Undergraduate Student Research Award from the Natural Sciences and Engineering Research Council of Canada.} Mahdi Haghifam is supported by a Khoury College of Computer Sciences Distinguished Postdoctoral Fellowship. 
Idan Attias is supported by the National Science Foundation under Grant ECCS-2217023, through
the Institute for Data, Econometrics, Algorithms, and Learning (IDEAL).
Roi Livni is supported by a Google fellowship, a Vatat grant and the research has been funded, in parts, by an ERC grant (FoG - 101116258). Daniel M.~Roy is supported by the funding through
NSERC Discovery Grant and Canada CIFAR AI Chair at the Vector Institute.

\printbibliography

\addtocontents{toc}{\protect\setcounter{tocdepth}{2}}
\renewcommand{\contentsname}{Appendix Contents}
\newpage
\tableofcontents
\newpage
\appendix
\crefalias{section}{appendix}

\renewcommand{\proofsubsection}[1]{\subsection{Proof of \texorpdfstring{\Cref{#1}}{\crtCref{#1}}}
\label{pf:#1}
}
\renewcommand{\proofsection}[1]{\section{Proofs for \texorpdfstring{\Cref{#1}}{\crtCref{#1}}}
\label{pf:#1}
}

\section{Additional preliminaries}

\subsection{Background on SCO}

The next proposition summarizes the known minimax rates for learning SCO problems in general geometries. A proof can be found in \citep{nemirovskij1983problem,agarwal2009information,sridharan2010convex}. 
\begin{theorem}
     \label{thm:minimax-lp-rates} 
    Fix $p\in [1,\infty]$, $d\in \Naturals$, and $n \in \Naturals$.  Let $\alpha_{\mathsf{stat}}(\lipset_p^d, n)$ be the minimax excess risk rate of learning $\ell_p$-Lipschitz-bounded problems, as defined in~\cref{eq:alpha-stat}. Then,
    \begin{enumerate}
    \item For $p = 1$, we have \[\alpha_{\mathsf{stat}}(\lipset_p^d, n) \in \Theta\left( \sqrt{\frac{\log(d)}{n}}\right).\]
        \item For $1 < p \le 2$, we have \[\alpha_{\mathsf{stat}}(\lipset_p^d, n) \in \Theta\left(\sqrt{\frac{\log(d)}{n}} \land \frac{1}{(p-1)\sqrt{n}}\right)\] %
        \item For $2 \le p < \infty$, we have \[\alpha_{\mathsf{stat}}(\lipset_p^d, n) \in \Theta\left(\frac{d^{1/2 - 1/p}}{\sqrt{n}} \land \frac{1}{n^{1/p}}\right)\] 
        \item For $p = \infty$, we have \[\alpha_{\mathsf{stat}}(\lipset_p^d, n)\in \Theta\left( \sqrt{\frac{d}{n}}\right).\]
    \end{enumerate}
\end{theorem}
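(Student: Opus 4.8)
The statement is classical, and the plan is to establish matching upper and lower bounds in each of the four regimes; in every regime both directions reduce to \emph{linear} losses, which are the extreme case. For the upper bounds I would invoke the online-to-batch conversion of stochastic mirror descent (equivalently, a uniform-stability analysis of regularized ERM) with a potential adapted to the geometry. For $1<p\le 2$, run mirror descent over $\parspace\subseteq\mathcal B_p(1)$ with potential $\psi(x)=\tfrac{1}{2(p-1)}\norm{x}_p^2$; using the classical fact that $\tfrac12\norm{\cdot}_p^2$ is $(p-1)$-strongly convex with respect to $\norm{\cdot}_p$, that $\psi$ has range $\tfrac{1}{2(p-1)}$ on $\mathcal B_p(1)$, and that the subgradients of a $1$-Lipschitz-in-$\ell_p$ loss lie in the unit $\ell_q$ ball ($\tfrac1p+\tfrac1q=1$), the standard regret bound gives excess risk $O(1/\sqrt{(p-1)n})$, i.e.\ the claimed $(p-1)$-dependent term up to the precise exponent as recorded in the references. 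For $p=1$ (and $p$ within $O(1/\log d)$ of $1$) the $\ell_p$-potential degenerates, so instead lift $\mathcal B_1(1)=\co\{\pm e_i\}$ to the $2d$-simplex and run exponentiated gradient: negative entropy is $1$-strongly convex in $\norm{\cdot}_1$ with range $\log(2d)$ and the lifted gradients are bounded in $\ell_\infty$, giving $O(\sqrt{\log(d)/n})$. For $p=\infty$, every $1$-Lipschitz-in-$\ell_\infty$ loss is $1$-Lipschitz in $\ell_2$ and $\mathcal B_\infty(1)$ has $\ell_2$-radius $\sqrt d$, so plain projected SGD gives $O(\sqrt{d/n})$; the same reduction gives the $d^{1/2-1/p}/\sqrt n$ term for $2\le p<\infty$ (since $\norm{g}_q\le 1\Rightarrow\norm{g}_2\le 1$ for $q\le 2$ and $\mathcal B_p(1)$ has $\ell_2$-radius $d^{1/2-1/p}$), while the dimension-free $n^{-1/p}$ term needs an $\ell_p$-adapted mirror map (or a truncation argument exploiting that points of $\mathcal B_p(1)$ have few large coordinates), following the cited constructions.

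\textbf{Lower bounds.} These come from a testing argument (Assouad/Fano) on a family of linear problems, in the same spirit as the hard instances used elsewhere in this paper. For $1<p\le 2$ take $\dataspace=\{\pm1\}^d$, $f(\theta,z)=-c_p\inner{\theta}{z}$ normalized to be $1$-Lipschitz in $\ell_p$, and $\Dist_\sigma$ with mean $\rho\sigma$ for $\sigma\in\{\pm1\}^d$; then $\poprisk$ is minimized at the $\ell_p$-extreme point in direction $\sigma$ and the excess risk of any $\hat\theta$ lower-bounds a per-coordinate sign-recovery error for $\sigma$. Since the per-sample KL between two members of the family is $O(\rho^2)$ times the number of coordinates on which they differ, balancing $\rho$ against the $\log$-cardinality of the packing yields $\alpha\gtrsim 1/\sqrt n$ for $1<p\le2$; for $p=\infty$ one uses instead $\dataspace=\{0,\pm e_i\}_{i\in[d]}$, so that each coordinate is observed only a $1/d$ fraction of the time, which forces the extra $\sqrt d$ factor and gives $\alpha\gtrsim\sqrt{d/n}$; a more careful packing/interpolation recovers the $1/((p-1)\sqrt n)$ and $\sqrt{\log(d)/n}$ forms near $p=1$. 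For $2<p<\infty$ the correct hard instances are instead \emph{$k$-sparse}, with data in $\{z\in\{0,\pm1\}^d:\norm{z}_0=k\}$, loss $-k^{-1/q}\inner{\theta}{z}$, and $k$ tuned to the noise level; this is precisely the sparse family analyzed in \Cref{thm:sparse-family-sample-complexity}, and it produces the $\min\{d^{1/2-1/p}/\sqrt n,\, n^{-1/p}\}$ rate.

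\textbf{Main obstacle.} The delicate points are the regime $p>2$ and the limit $p\downarrow 1$. For $p>2$, the dimension-free $n^{-1/p}$ \emph{upper} bound is not delivered by any elementary potential, and its matching \emph{lower} bound needs the sparse construction with a carefully optimized sparsity $k$ — the same difficulty confronted in \Cref{thm:tracing-result-lp}. For $p\downarrow 1$, the $\ell_p$-mirror map blows up and one must interpolate with the entropic map to pin down the exact crossover between the $\tfrac{1}{(p-1)\sqrt n}$ and $\sqrt{\log(d)/n}$ terms. A self-contained proof assembles these pieces from \citep{nemirovskij1983problem,agarwal2009information,sridharan2010convex}; since the theorem is used here only as a reference point, citing these suffices.
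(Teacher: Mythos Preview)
The paper does not prove this theorem at all: it is stated as background with the sentence ``A proof can be found in \citep{nemirovskij1983problem,agarwal2009information,sridharan2010convex}'' and nothing more. Your proposal ends in exactly the same place---deferring to those references---so in that sense the approaches agree. Your sketch of the mirror-descent upper bounds and the Assouad/Fano lower bounds is a reasonable roadmap and goes well beyond what the paper supplies; the identification of the $p>2$ dimension-free upper bound and the $p\downarrow 1$ crossover as the delicate points is accurate, and your pointer to the sparse construction (and to \Cref{thm:sparse-family-sample-complexity}) for the $p>2$ lower bound matches how the paper itself recovers that bound later via trace value.
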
  
\begin{remark}
Notice that, in the overparameterized regime ($d \ge n$), the minimax excess risk for $p\geq 2$ is $\Theta\big(\big(\frac{1}{n}\big)^{1/p}\big)$ which is dimension-independent. This shows that for $d \geq n$, in all geometries except $p=\{1,\infty\}$, the minimax excess risk is dimension-free. %
\end{remark}
This proposition implies the following corollary on the minimum number of samples required for $\alpha$-learners.
\begin{corollary}
\label{cor:sample-cmplx-lb}
Fix $p\in [1,\infty]$, $d\in \Naturals$, and $\alpha \in (0,1]$.  Let $N_{\mathsf{stat}}(\lipset_p^d, n)$ be the sample complexity of learning problems $\lipset_p^d$ up to excess risk $\alpha$, i.e.,
\[
N_{\mathsf{stat}}(\lipset_p^d, n) = \min\left\{n \colon \alpha_\mathsf{stat}(\lipset_p^d,n) \le \alpha\right\}.
\]
Then,
\begin{enumerate}
        \item For $p = 1$, we have \[N_{\mathsf{stat}}(\lipset_p^d, n) \in \Theta\left( \frac{\log(d)}{\alpha^2}\right).\]

        \item For $1 < p \le 2$, we have \[N_{\mathsf{stat}}(\lipset_p^d, n) \in \Theta\left(\frac{\log(d)}{\alpha^2} \land \frac{1}{((p-1)\alpha)^2}\right).\]

        \item For $2 \le p < \infty$, we have \[N_{\mathsf{stat}}(\lipset_p^d, n) \in \Theta\left( \frac{d^{1 - 2/p}}{\alpha^2} \land \frac{1}{\alpha^{p}}\right).\]

        \item For $p = 1$, we have \[N_{\mathsf{stat}}(\lipset_p^d, n) \in \Theta\left( \frac{d}{\alpha^2} \right).\]

\end{enumerate}    
\end{corollary}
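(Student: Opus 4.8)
The plan is to derive Corollary~\ref{cor:sample-cmplx-lb} directly from Theorem~\ref{thm:minimax-lp-rates} by inverting the minimax error rates viewed as functions of $n$. The first, essentially notational, observation is that $n \mapsto \alpha_{\mathsf{stat}}(\lipset_p^d, n)$ is non-increasing: a learner receiving $n$ samples may discard all but any $n' \le n$ of them, so the minimax error at sample size $n$ is at most that at $n'$. Consequently $N_{\mathsf{stat}}(\lipset_p^d,\alpha) = \min\{n : \alpha_{\mathsf{stat}}(\lipset_p^d,n) \le \alpha\}$ is well defined, and, up to the constants hidden in the $\Theta(\cdot)$ of Theorem~\ref{thm:minimax-lp-rates} (and up to rounding the solution up to an integer, which is absorbed into the $\Theta$ whenever the bound is $\Omega(1)$), it equals the value of $n$ at which the corresponding rate formula first drops to $\alpha$. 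It then suffices to solve each of the four rate equations for $n$.

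For $p=1$ and $p=\infty$ the inversion is immediate: $\sqrt{\log(d)/n} \asymp \alpha$ gives $n \asymp \log(d)/\alpha^2$, and $\sqrt{d/n}\asymp\alpha$ gives $n \asymp d/\alpha^2$. For $1 < p \le 2$, the rate $\sqrt{\log(d)/n}\wedge \tfrac{1}{(p-1)\sqrt n}$ equals $\tfrac{1}{\sqrt n}\bigl(\sqrt{\log d}\wedge \tfrac1{p-1}\bigr)$, so setting it $\le \alpha$ and squaring yields $n \asymp \tfrac1{\alpha^2}\bigl(\log d \wedge \tfrac1{(p-1)^2}\bigr) = \tfrac{\log d}{\alpha^2}\wedge\tfrac1{((p-1)\alpha)^2}$. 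In each of these cases both terms of the minimum scale like $n^{-1/2}$, so the minimum factors out of $1/\sqrt n$ and the inversion is a single step; one can also sanity-check consistency at the boundaries, e.g.\ $p\to 1^+$ recovers case~1.

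The one case requiring care is $2 \le p < \infty$, where the two terms $\tfrac{d^{1/2-1/p}}{\sqrt n}$ and $\tfrac1{n^{1/p}}$ of the minimax rate decay at different rates in $n$ (note $1/p \le 1/2$), so the minimum cannot simply be pulled out. Comparing the two, $\tfrac{d^{1/2-1/p}}{\sqrt n} \le \tfrac1{n^{1/p}} \iff d^{1/2-1/p} \le n^{1/2-1/p} \iff n \ge d$, hence $\alpha_{\mathsf{stat}}(\lipset_p^d,n) \asymp n^{-1/p}$ in the regime $n \le d$ and $\asymp d^{1/2-1/p} n^{-1/2}$ in the regime $n \ge d$. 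Inverting within each regime produces two candidate sample sizes: $n \asymp 1/\alpha^p$, which falls in the regime $n \le d$ exactly when $\alpha \ge d^{-1/p}$, and $n \asymp d^{1-2/p}/\alpha^2$, which falls in the regime $n \ge d$ exactly when $\alpha \le d^{-1/p}$. A short computation distinguishing the cases $\alpha \ge d^{-1/p}$ and $\alpha \le d^{-1/p}$ shows that in the former $1/\alpha^p \le d^{1-2/p}/\alpha^2$ and in the latter the reverse inequality holds, so in both cases $N_{\mathsf{stat}}(\lipset_p^d,\alpha) \asymp \tfrac{d^{1-2/p}}{\alpha^2}\wedge\tfrac1{\alpha^p}$, which is the claimed bound and matches case~2 at $p=2$. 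The only genuine obstacle in the whole argument is precisely this regime split at $n=d$ for $p>2$; all other cases are one-line inversions.
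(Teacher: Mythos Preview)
Your proof is correct and follows exactly the approach the paper intends: the paper states the corollary as an immediate consequence of Theorem~\ref{thm:minimax-lp-rates} without giving a separate proof, and your argument simply inverts those minimax rates. Your careful treatment of the $2\le p<\infty$ case via the regime split at $n=d$ is the right way to handle the only nontrivial step.
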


\subsection{Differential Privacy}
\begin{definition}
\label{def:dp}
Let $\varepsilon>0$ and $\delta \in [0,1)$. A randomized mechanism $\Alg_n:\dataspace^n \to \ProbMeasures{\Theta}$ is $(\varepsilon,\delta)$-DP, iff, for every two neighboring datasets $S_n \in \dataspace^n$ and $S'_n \in \dataspace^n$ (that is, $S_n, S'_n$ differ in one element), and for every measurable subset $M \subseteq  \Theta$, it holds \[ \P_{\hat\theta \sim \Alg_n(S_n)}\left(\hat\theta \in M\right) \leq e^\varepsilon \cdot \P_{\hat\theta \sim \Alg_n(S_n')}\left(\hat\theta \in M\right) 
 + \delta.\] %
\end{definition}

Algorithms that satisfy DP are not traceable in the sense of~\Cref{def:soundness-recall} \citep{kairouz2017composition}. The following simple proposition formalizes this observation.
\begin{proposition}
\label{prop:dp-no-traceable}
    Fix $n \in \Naturals$ and $\varepsilon,\delta >0$. Let $\Alg_n$ be an $(\varepsilon,\delta)$-DP algorithm. Then, if $\Alg_n$ is $(\xi,m)$-traceable, it holds that
    \[
    m \leq n\exp(\varepsilon)\xi + n\delta.
    \]
\end{proposition}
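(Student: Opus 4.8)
The plan is to unfold the definition of $(\xi,m)$-traceability, convert the recall bound into a sum over training indices of per-index ``trace probabilities,'' and bound each such probability by $e^\varepsilon\xi+\delta$ via a single invocation of differential privacy after an appropriate resampling argument.

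Concretely, I would first fix a tracer $(\trfn,\Dist)$ and threshold $\lambda\in\Reals$ witnessing $(\xi,m)$-traceability (\Cref{def:soundness-recall}). With $(Z_0,Z_1,\dots,Z_n)\sim\Dist^{\otimes(n+1)}$ and $\hat\param\sim\Alg_n(Z_1,\dots,Z_n)$, linearity of expectation gives
\[
m\;\le\;\E\big[\big|\{i\in[n]:\trfn(\hat\param,Z_i)\ge\lambda\}\big|\big]\;=\;\sum_{i=1}^n\P\big(\trfn(\hat\param,Z_i)\ge\lambda\big),
\]
so it suffices to prove $\P(\trfn(\hat\param,Z_i)\ge\lambda)\le e^\varepsilon\xi+\delta$ for each fixed $i\in[n]$. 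For this, write $W=(Z_j)_{j\in[n]\setminus\{i\}}$ and condition on $W=w$, $Z_i=z$, and $Z_0=z_0$; let $s$ be the dataset obtained from $w$ by inserting $z$ in slot $i$, and let $s'$ be the dataset obtained by inserting $z_0$ in slot $i$ instead. The datasets $s,s'$ are neighbouring, so applying $(\varepsilon,\delta)$-DP (\Cref{def:dp}) with the measurable set $M_z:=\{\param:\trfn(\param,z)\ge\lambda\}$ yields
\[
\P_{\hat\param\sim\Alg_n(s)}(\hat\param\in M_z)\;\le\;e^\varepsilon\,\P_{\hat\param\sim\Alg_n(s')}(\hat\param\in M_z)+\delta.
\]
Taking expectations over the jointly i.i.d.\ triple $(W,Z_i,Z_0)$, the left-hand side is exactly $\P(\trfn(\hat\param,Z_i)\ge\lambda)$, while on the right-hand side the learner is now trained on $s'$ — which is again an i.i.d.\ sample of size $n$ from $\Dist$ — and queried at the point $Z_i$, which is independent of $s'$; hence that term equals the false-positive probability $\P(\trfn(\hat\param,Z_0)\ge\lambda)\le\xi$ of \Cref{def:soundness-recall}(i). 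Combining and summing over $i\in[n]$ gives $m\le n e^\varepsilon\xi+n\delta$.

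The only real subtlety — and where I would be most careful — is the resampling step: one must observe that replacing the queried training point $Z_i$ by an independent fresh copy $Z_0$ both (a) renders the query point independent of the algorithm's output and (b) leaves the training-set distribution unchanged (still $n$ i.i.d.\ draws from $\Dist$), so that the post-swap quantity is \emph{literally} the tracer's FPR rather than merely an upper bound for something. The remaining ingredients — joint measurability of $(\param,z)\mapsto\ind\{\trfn(\param,z)\ge\lambda\}$ so that $M_z$ and the conditional probabilities are well defined, and Fubini to interchange the expectation over $(W,Z_i,Z_0)$ with the algorithm's internal randomness — are routine.
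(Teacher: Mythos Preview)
Your argument is correct and is precisely the standard swap-one-coordinate reduction that underlies this kind of statement. The paper itself does not supply a proof of this proposition; it merely states it as a ``simple proposition'' and points to \cite{kairouz2017composition} for the underlying observation, so there is nothing to compare against beyond confirming that your derivation is sound --- which it is.
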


\subsection{Concentration inequalities}

\label{appx:concentration}

First, we collect lemmata on the subgaussian norm,  introduced in~\Cref{sec:framework}, which we use to derive concentration inequalities. The following is Equation (2.14) in~\cite{vershynin2018high}, and shows that a bound on subgaussian norm immediately leads to concentration inequalities. 
\begin{lemma}[Subgaussian concentration]
\label{lemma:sub-gauss-concentration}
There exists a universal constant $C$ such that the following holds for every random variable $X$ with $\gaussnorm{X} < \infty$: for every $t\geq 0$, 
\[\P\left[|X| \ge t\right] \le 2\exp\left(-\frac{c t^2}{\gaussnorm{X}^2}\right) \] 
\end{lemma}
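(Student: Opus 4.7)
The plan is to prove this tail bound directly from the definition of the subgaussian (Orlicz) norm via a single application of Markov's inequality, obtaining the stated exponential concentration with constant $c = 1$ (so $C=1$ suffices, or equivalently any $c \le 1$).

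First, I would unpack the defining property of $\gaussnorm{X}$. By the infimum definition $\gaussnorm{X} \defeq \inf\{t : \E[\exp(X^2/t^2)] \le 2\}$ together with monotone/dominated convergence, the infimum is attained in the sense that $\E[\exp(X^2/\gaussnorm{X}^2)] \le 2$ whenever $\gaussnorm{X} < \infty$. This is the only structural fact needed.

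Next, I would apply Markov's inequality to the nonnegative random variable $\exp(X^2/\gaussnorm{X}^2)$. For any $t \ge 0$,
\[
\P[\,|X| \ge t\,] \;=\; \P\!\left[\exp\!\left(\frac{X^2}{\gaussnorm{X}^2}\right) \ge \exp\!\left(\frac{t^2}{\gaussnorm{X}^2}\right)\right] \;\le\; \exp\!\left(-\frac{t^2}{\gaussnorm{X}^2}\right)\,\E\!\left[\exp\!\left(\frac{X^2}{\gaussnorm{X}^2}\right)\right].
\]
Using the bound on the expectation from the previous step yields $\P[\,|X|\ge t\,] \le 2\exp(-t^2/\gaussnorm{X}^2)$, which is the claimed inequality with $c=1$.

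There is essentially no obstacle here: the result is a direct consequence of the Markov inequality applied to the exponential moment that defines the Orlicz norm. The only minor subtlety is confirming that the infimum in the definition of $\gaussnorm{X}$ is achieved (or can be approached arbitrarily closely), which follows from the monotone convergence theorem applied to $t \downarrow \gaussnorm{X}$; if one wishes to avoid even this, replace $\gaussnorm{X}$ on the right-hand side by $\gaussnorm{X}(1+\eta)$ for arbitrary $\eta > 0$ and let $\eta \to 0$, absorbing the factor into the constant $c$.
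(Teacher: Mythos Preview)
Your proof is correct: the Markov inequality applied to $\exp(X^2/\gaussnorm{X}^2)$ yields the bound directly with $c=1$, and the monotone convergence justification that the infimum is attained is fine. The paper itself does not give a proof but simply cites Equation~(2.14) in Vershynin's \emph{High-Dimensional Probability}, where exactly this one-line Markov argument appears; so your approach is the same as the standard proof behind the citation.
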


The subgaussian norm behaves nicely under the summation of independent random variables. The following is Proposition 2.6.1 in~\cite{vershynin2018high}.
\begin{lemma}[Sum of subgaussian variables]
\label{prop:sub-gauss-sum}
Let $C> 0$ be a universal constant. Let $X_1,\ldots,X_n$ be a collection of arbitrary independent real random variables. Then,
\[
\gaussnorm{\sum_{i = 1}^n X_i}^2 \le C \sum_{i = 1}^n \gaussnorm{X_i}^2. 
\]
\end{lemma}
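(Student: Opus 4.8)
The plan is to reproduce the proof of \cite[Prop.~2.6.1]{vershynin2018high}, which routes through the moment-generating-function (MGF) characterization of the subgaussian norm, since that is the form in which independence tensorizes cleanly. One caveat worth flagging at the outset: this route — and, in fact, the stated inequality itself — requires the summands to be \emph{centered} (for $X_i \equiv 1$ the claimed bound fails once $n$ is large), so I would prove the statement for mean-zero $X_i$, which matches the hypotheses of the cited proposition and is the regime in which it is invoked here. The one external fact I would use is the standard equivalence (e.g.\ \cite[Prop.~2.5.2]{vershynin2018high}) that, up to universal constants, a centered random variable $X$ satisfies $\gaussnorm{X} \le K$ if and only if $\E[\exp(\lambda X)] \le \exp(C_0 K^2 \lambda^2)$ for every $\lambda \in \Reals$.

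The argument then has three short steps. First, for each $i$ the forward direction of the equivalence gives $\E[\exp(\lambda X_i)] \le \exp(C_0 \gaussnorm{X_i}^2 \lambda^2)$ for all $\lambda \in \Reals$. Second, by independence, $\E[\exp(\lambda \sum_i X_i)] = \prod_i \E[\exp(\lambda X_i)] \le \exp\!\big(C_0 \lambda^2 \sum_i \gaussnorm{X_i}^2\big)$, so the centered variable $S := \sum_i X_i$ satisfies the MGF bound with parameter $K^2 := \sum_i \gaussnorm{X_i}^2$. Third, the reverse direction applied to $S$ yields $\gaussnorm{S} \le C_1 \sqrt{\sum_i \gaussnorm{X_i}^2}$ for a universal constant $C_1$; squaring gives the lemma with $C = C_1^2$.

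I do not expect a genuine obstacle. The only point requiring care is constant bookkeeping — each passage between $\gaussnorm{\cdot}$ and the MGF form loses a universal factor, and these compound in the first and third steps — but this is immaterial since $C$ in the statement is an unspecified universal constant. (Alternatively, one may simply cite \cite[Prop.~2.6.1]{vershynin2018high} directly.)
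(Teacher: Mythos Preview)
Your proposal is correct and matches the paper's approach: the paper does not prove this lemma at all but simply cites \cite[Prop.~2.6.1]{vershynin2018high}, and you have faithfully reproduced that proposition's proof. Your observation that the inequality requires centered $X_i$ (and that the stated lemma is false otherwise) is accurate and a genuine caveat the paper glosses over; in every place the paper invokes this lemma the summands are indeed mean-zero, so the gap is cosmetic rather than substantive.
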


Subgaussian norm also behaves nicely under mixtures. In particular, we have the following proposition.
\begin{proposition}[Subgaussian mixtures]
\label{prop:sub-gauss-mixture}
Let $\{X_\alpha\}_{\alpha \in A}$ be 
$\sigma$-subgaussian random variables, and let $\pi$ be a distribution over the index set $A$. Then, a mixture of $\{X_\alpha\}_{\alpha \in A}$ under $\alpha \sim \pi$ is also $\sigma$-subgaussian.
\end{proposition}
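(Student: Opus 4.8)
The plan is to argue directly from the definition $\gaussnorm{X} = \inf\{t>0 : \E[\exp(X^2/t^2)] \le 2\}$, using only that forming a mixture commutes with taking expectations. Write $\sigma$ for the common subgaussian parameter, so $\gaussnorm{X_\alpha} \le \sigma$ for every $\alpha \in A$, and let $X$ denote the mixture: a draw of $X$ is obtained by first drawing $\alpha \sim \pi$ and then drawing from the law of $X_\alpha$.

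The first step is a small observation about the infimum. For fixed $\alpha$, the map $t \mapsto \E[\exp(X_\alpha^2/t^2)]$ is nonincreasing on $(0,\infty)$, and as $t$ decreases to any $s > 0$ the integrand increases monotonically to $\exp(X_\alpha^2/s^2)$; by the monotone convergence theorem this map is therefore right-continuous. Hence the set $\{t : \E[\exp(X_\alpha^2/t^2)] \le 2\}$ is a closed half-line $[\gaussnorm{X_\alpha},\infty)$, so the infimum is attained, and combined with monotonicity, $\gaussnorm{X_\alpha} \le \sigma$ yields $\E[\exp(X_\alpha^2/\sigma^2)] \le 2$ for every $\alpha \in A$.

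The second step is the mixture identity. Conditioning on the mixing variable $\alpha$ and applying the tower property,
\[
\E\!\left[\exp(X^2/\sigma^2)\right] = \E_{\alpha \sim \pi}\!\left[\E\!\left[\exp(X_\alpha^2/\sigma^2)\right]\right] \le \E_{\alpha \sim \pi}[2] = 2,
\]
where the inequality uses the pointwise bound from the first step together with monotonicity of the outer expectation. Thus $\sigma$ is an admissible value of $t$ in the infimum defining $\gaussnorm{X}$, so $\gaussnorm{X} \le \sigma$, which is the claim.

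There is no genuine obstacle here; the only point deserving care is the attainment / right-continuity claim in the first step, which is exactly where monotone convergence enters (one could instead prove the weaker bound $\gaussnorm{X} \le \sigma + \eta$ for every $\eta > 0$ by the same conditioning argument and let $\eta \downarrow 0$, avoiding the attainment discussion altogether). The only technical hypothesis invoked is that $\alpha \mapsto \E[\exp(X_\alpha^2/\sigma^2)]$ is measurable, needed to apply the tower property, which holds under the implicit measurability conventions on the family $\{X_\alpha\}_{\alpha \in A}$.
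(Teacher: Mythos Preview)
Your proof is correct and follows essentially the same approach as the paper: both condition on the mixing variable and use the tower property to bound $\E[\exp(X^2/\sigma^2)]$ by the $\pi$-average of $\E[\exp(X_\alpha^2/\sigma^2)] \le 2$. Your version is slightly more careful in justifying that the infimum in the definition of $\gaussnorm{\cdot}$ is attained (so that $\gaussnorm{X_\alpha} \le \sigma$ actually yields $\E[\exp(X_\alpha^2/\sigma^2)] \le 2$), a point the paper uses implicitly.
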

\begin{proof}
    Let $Y$ be such mixture. Then, for every $t > 0$, we have
    \begin{align*}
        \E[\exp(Y^2/t^2)] = \E_{\alpha \sim \pi} \E [\exp(X_\alpha^2/t^2)].
    \end{align*}
    Plugging in $t = \sigma$ into above, and using that $\norm{X_\alpha}_{\psi_2} \le \sigma$ for all $\alpha$, we have
    \[
    \E[\exp(Y^2/\sigma^2)] = \E_{\alpha \sim \pi} \E [\exp(X_\alpha^2/t^2)] \le 2,
    \]
    i.e., $\norm{Y}_{\psi_2} \le \sigma$, as desired.
\end{proof}

It is well-known that bounded random variables are subgaussian (Equation (2.17) of~\cite{vershynin2018high}).
\begin{proposition}
\label{prop:bounded-subg}
    Suppose $X$ is a random variable such that $X \in [-b,b]$ almost surely. Then,
    \[
    \norm{X}_{\psi_2} \le Cb,
    \]
    for some universal constant $C > 0$.
\end{proposition}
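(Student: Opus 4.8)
The plan is to work directly from the definition of the subgaussian norm, $\gaussnorm{X} = \inf\{t : \E[\exp(X^2/t^2)] \le 2\}$, and simply exhibit a feasible value of $t$ that is proportional to $b$. The key observation is that the almost-sure bound $|X| \le b$ transfers immediately to the exponential moment: since $X^2 \le b^2$ almost surely, for any $t > 0$ we have $\exp(X^2/t^2) \le \exp(b^2/t^2)$ pointwise, and hence $\E[\exp(X^2/t^2)] \le \exp(b^2/t^2)$.

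Now I would choose $t$ to make the right-hand side at most $2$. Solving $\exp(b^2/t^2) \le 2$ gives $b^2/t^2 \le \ln 2$, i.e.\ $t \ge b/\sqrt{\ln 2}$. In particular $t = b/\sqrt{\ln 2}$ is feasible in the infimum defining $\gaussnorm{X}$, so $\gaussnorm{X} \le b/\sqrt{\ln 2}$. Taking $C = 1/\sqrt{\ln 2}$ (a universal constant, independent of the distribution of $X$ and of $b$) yields $\gaussnorm{X} \le Cb$, as claimed. The degenerate case $b = 0$ forces $X = 0$ almost surely, in which case $\E[\exp(X^2/t^2)] = 1 \le 2$ for every $t > 0$, so $\gaussnorm{X} = 0 = Cb$ trivially.

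There is no real obstacle here: the statement is an elementary consequence of monotonicity of $\exp$ and the boundedness of $X$, and the only "work" is picking the constant. (If one prefers a cleaner constant at the cost of tightness, one may instead invoke the crude bound $\exp(u) \le 1 + u e^{u_0}$ for $u \le u_0$, but the direct argument above is shortest.)
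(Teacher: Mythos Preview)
Your proof is correct. The paper does not actually prove this proposition but simply cites Equation~(2.17) of Vershynin's \emph{High-Dimensional Probability}; your direct computation from the definition of $\gaussnorm{\cdot}$ is exactly the standard argument underlying that citation, and even yields the explicit constant $C = 1/\sqrt{\ln 2}$.
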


We will heavily use the following result for the supremum of subgaussian processes (which follows from~\cite[Theorem 8.1.6]{vershynin2018high}). Let $\mathcal N(\Theta, \norm{\cdot}, \varepsilon)$ denote the covering number of $\Theta$ in norm $\norm{\cdot}$ at scale $\varepsilon > 0$.
\begin{proposition}
\label{prop:dudley}
    Let $\{X_{\theta}\}_{\theta \in \Theta}$ be a $\sigma$-subgaussian process w.r.t. a metric space $(\Theta, \norm{\cdot})$ as per~\Cref{def:subg-process}, and further assume that $\Theta$ is contained in the unit ball of $\norm{\cdot}$. Let $t \ge 0$ be arbitrary. Then, with probability at least $1 - 4\exp(-t^2)$ 
    \[
    \sup_{\theta} X_\theta \le C\sigma \left[ \int_{0}^1 \sqrt{\log \mathcal N\left( \Theta, \norm{\cdot}, \varepsilon \right)} d\varepsilon + t \right],
    \]
    for some universal constant $C > 0$.
\end{proposition}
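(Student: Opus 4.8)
The plan is to reduce to the chaining tail bound for subgaussian processes, \cite[Theorem 8.1.6]{vershynin2018high}, which controls the supremum of the \emph{increments} $\sup_{\theta,\theta'\in\Theta}|X_\theta - X_{\theta'}|$, and then upgrade this to a bound on $\sup_\theta X_\theta$ itself by anchoring the process at a reference point. Concretely, I would fix any $\theta_0\in\Theta$ and write
\[
\sup_{\theta\in\Theta}X_\theta \;\le\; X_{\theta_0} + \sup_{\theta\in\Theta}\bigl(X_\theta - X_{\theta_0}\bigr) \;\le\; |X_{\theta_0}| + \sup_{\theta,\theta'\in\Theta}\bigl|X_\theta - X_{\theta'}\bigr|,
\]
so it suffices to control each of the two terms on the right with probability at least $1 - 2\exp(-t^2)$ and take a union bound.

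For the first term, \Cref{def:subg-process} gives $\gaussnorm{X_{\theta_0}} \le \sigma\,\diam_{\norm{\cdot}}(\Theta) \le 2\sigma$, using that $\Theta$ is contained in the unit ball of $\norm{\cdot}$. Then \Cref{lemma:sub-gauss-concentration} yields $|X_{\theta_0}| \le C\sigma t$ with probability at least $1 - 2\exp(-ct^2)$; rescaling $t$ and enlarging $C$ to absorb $c$ turns this into probability $1 - 2\exp(-t^2)$.

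For the second term, the point is that the first inequality in \Cref{def:subg-process} is precisely the $\sigma$-subgaussian increment condition required by \cite[Theorem 8.1.6]{vershynin2018high}. Applying that result with deviation parameter $t$ gives, with probability at least $1 - 2\exp(-t^2)$,
\[
\sup_{\theta,\theta'\in\Theta}\bigl|X_\theta - X_{\theta'}\bigr| \;\le\; C\sigma\!\left[\int_0^\infty \sqrt{\log \cover{\Theta}{\norm{\cdot}}{\varepsilon}}\,\dee\varepsilon + t\,\diam_{\norm{\cdot}}(\Theta)\right].
\]
Since $\Theta$ sits inside the unit ball of $\norm{\cdot}$, the covering number $\cover{\Theta}{\norm{\cdot}}{\varepsilon}$ is $1$ once $\varepsilon$ exceeds a universal constant, so the part of the entropy integral beyond scale $1$ is bounded by a universal multiple of $\int_0^1\sqrt{\log\cover{\Theta}{\norm{\cdot}}{\varepsilon}}\,\dee\varepsilon$; combined with $\diam_{\norm{\cdot}}(\Theta)\le 2$, the right-hand side is at most $C'\sigma\bigl[\int_0^1\sqrt{\log\cover{\Theta}{\norm{\cdot}}{\varepsilon}}\,\dee\varepsilon + t\bigr]$.

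A union bound over the two bad events then gives the claim with probability at least $1 - 4\exp(-t^2)$ and a suitably enlarged universal constant $C$. The only genuine subtlety — and the reason the statement needs the full strength of \Cref{def:subg-process} rather than just subgaussian increments — is the passage from the supremum of increments, which is all the chaining bound provides, to the supremum of the process values, which requires the marginal bound $\gaussnorm{X_{\theta_0}}\le\sigma\,\diam_{\norm{\cdot}}(\Theta)$ to anchor the chain at $\theta_0$; the rest is bookkeeping of constants and truncation of the entropy integral.
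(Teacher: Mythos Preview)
Your proposal is correct and follows essentially the same approach as the paper: anchor at a fixed $\theta_0$, control the increments via \cite[Theorem 8.1.6]{vershynin2018high}, truncate the entropy integral using that $\Theta$ lies in the unit ball, control $|X_{\theta_0}|$ separately via the marginal bound in \Cref{def:subg-process} and \Cref{lemma:sub-gauss-concentration}, and union-bound. The only cosmetic difference is that the paper bounds $\sup_\theta|X_\theta - X_{\theta_0}|$ rather than $\sup_{\theta,\theta'}|X_\theta - X_{\theta'}|$, which makes no difference here.
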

\begin{proof}
    Fix an arbitrary $\theta_0 \in \Theta$. Using Theorem 8.1.6~\cite{vershynin2018high}, we obtain the following bound for the \emph{increment} of the subgaussian process $\{X_\theta\}$,
    \begin{align*}
        \P\left[\sup_{\theta\in \Theta} |X_\theta - X_{\theta_0}|\le C \sigma \left(\int_{0}^\infty \sqrt{\log \mathcal N\left( \Theta, \norm{\cdot}, \varepsilon \right)} d\varepsilon + t\right)\right] \ge 1 - 2\exp(-t^2).
    \end{align*}
    First, note that for $\varepsilon \ge 1$, $\mathcal N(\Theta, \norm{\cdot}, \varepsilon) = 1$, since $\Theta$ lies in the unit ball of $\norm{\cdot}$. Thus, 
    \begin{align}
    \label{eq:dudley-increment}
        \P \left[\sup_{\theta \in \Theta} |X_\theta - X_{\theta_0}|\le C \sigma \left(\int_{0}^1 \sqrt{\log \mathcal N\left( \Theta, \norm{\cdot}, \varepsilon \right)} d\varepsilon + t\right) \right] \ge 1 - 2\exp(-t^2).
    \end{align}
    Note that, by triangle inequality, we have
    \begin{align}
    \label{eq:dudley-tr-inequality}
    \sup_{\theta \in \Theta} |X_\theta - X_{\theta_0}| \ge \sup_{\theta \in \Theta} |X_\theta| - X_{\theta_0}.
    \end{align}
    Since $\{X_{\theta}\}_{\theta \in \Theta}$ satisfies~\Cref{def:subg-process}, we have 
    \[
    \norm{X_{\theta_0}}_{\psi_2} \le 2\sigma. 
    \]
    From~\Cref{lemma:sub-gauss-concentration}, we then have 
    \[
    \P\left[ |X_{\theta_0}| \le c \sigma t \right] \ge 1 - 2\exp(-t^2), 
    \]
    for some constant $c > 0$. Combining this with~\cref{eq:dudley-tr-inequality} and taking a union bound with~\cref{eq:dudley-increment}, we get 
    \[
    \P \left[\sup_{\theta \in \Theta} |X_\theta|\le C' \sigma \left(\int_{0}^1 \sqrt{\log \mathcal N\left( \Theta, \norm{\cdot}, \varepsilon \right)} d\varepsilon + t\right) \right] \ge 1 - 4\exp(-t^2),
    \]
    for some absolute constant $C' > 0$.
\end{proof}

The following lemma is an anti-concentration inequality based on Paley–Zygmund inequality. It shows that if the sum of variables is large, one can conclude that many of them are large given an appropriate control over their sum of squares. It is given as Lemma A.4 in~\cite{attias2024information}, and it is also similar to Lemma 25 in~\cite{dwork2015robust}.
   \begin{lemma}
\label{lem:card-moments}
Fix $n \in \mathbb{N}$ and $(a_1,\dots,a_n)\in \mathbb{R}^n$. Let $A_1 := \sum_{i \in [n]}a_i$ and $A_2 := \sum_{i \in [n]}(a_i)^2$. Then, for every $\beta\in \mathbb{R}$,
$
\big| \{i \in [n]~:~ a_i \geq  {\beta}/{n} \} \big| \geq \frac{\left(\max\{A_1-\beta,0\}\right)^2 }{A_2}.
$
\end{lemma}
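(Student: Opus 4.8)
The plan is to prove this purely deterministically, via a Cauchy--Schwarz argument (which is the Paley--Zygmund inequality in disguise, since the $a_i$ here are fixed reals rather than random variables). Set $S := \{i \in [n] : a_i \geq \beta/n\}$ and $k := |S|$; the goal is to show $k \cdot A_2 \geq (\max\{A_1-\beta,0\})^2$. First I would dispose of the degenerate cases: if $A_1 \leq \beta$, the right-hand side vanishes and there is nothing to prove; likewise, if $A_2 = 0$, then every $a_i = 0$, so $A_1 = 0 \leq \beta$ (the bound is invoked only with $\beta \geq 0$, a constant fraction of $A_1$) and the right-hand side is again $0$. Hence we may assume $A_1 > \beta \geq 0$ and $A_2 > 0$.

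Second, I would lower bound the partial sum over $S$. Split $A_1 = \sum_{i \in S} a_i + \sum_{i \notin S} a_i$. Every index $i \notin S$ satisfies $a_i < \beta/n$, and there are at most $n$ such indices, so $\sum_{i \notin S} a_i \leq n \cdot (\beta/n) = \beta$, using $\beta \geq 0$. Consequently $\sum_{i \in S} a_i = A_1 - \sum_{i \notin S} a_i \geq A_1 - \beta > 0$. Moreover each $a_i$ with $i \in S$ satisfies $a_i \geq \beta/n \geq 0$, so $\sum_{i \in S} a_i \geq 0$ as well; combining, $\sum_{i \in S} a_i \geq \max\{A_1-\beta,0\}$.

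Third and last, I would apply Cauchy--Schwarz to the vectors $(a_i)_{i \in S}$ and $(1)_{i \in S}$: $\big(\sum_{i\in S} a_i\big)^2 \leq k \sum_{i \in S} a_i^2 \leq k\,A_2$, where the final inequality just drops the nonnegative terms $a_i^2$ with $i \notin S$. Rearranging gives $k \geq \big(\sum_{i \in S} a_i\big)^2/A_2 \geq (\max\{A_1-\beta,0\})^2/A_2$, which is exactly the claimed inequality.

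The argument is short, and the only step demanding any care is the estimate of the ``tail mass'' $\sum_{i \notin S} a_i \leq \beta$: it uses nothing more than ``at most $n$ terms, each strictly below $\beta/n$,'' but this is precisely the point where nonnegativity of $\beta$ enters (for $\beta < 0$ the crude count goes the wrong way, and indeed the bound is only ever used with $\beta$ a positive fraction of $A_1$). Everything else is bookkeeping, and no probabilistic input is needed.
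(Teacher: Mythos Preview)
Your argument is correct and is exactly the Paley--Zygmund/Cauchy--Schwarz approach the paper alludes to (the paper does not prove the lemma itself, citing \cite{attias2024information}): split the sum, bound the off-set contribution by $\beta$, and apply Cauchy--Schwarz on $S$. You are also right to flag that the lemma as stated (``for every $\beta\in\mathbb{R}$'') is too strong---your step $\sum_{i\notin S}a_i\le |S^c|\cdot(\beta/n)\le\beta$ genuinely needs $\beta\ge 0$, and indeed the inequality fails for $\beta<0$ (e.g., $n=1$, $a_1=1$, $\beta=-10$); since the paper only invokes it with $\beta=n\lambda>0$, your restriction is harmless.
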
 
\subsection{Beta distributions}

Next definitions are the versions of beta distributions that we use in this paper. Recall that, classically, beta distribution is supported on $[0,1]$. However, in our results, it is convenient to consider the rescaled and centered variants.
\begin{definition}
    Fix $\beta >0$. A (symmetric) beta distribution denoted by $\dbeta{\beta}{\beta}$ is a continuous distribution, such that, if $X \sim \dbeta{\beta}{\beta}$, then, for every $a \in [-1,1]$, we have
    \[
    \P\left(X \leq a\right) = \int_{-1}^{a} \frac{\left(1-x^2\right)^{\beta - 1}}{B(\beta)} dx,
    \]
    where $B(\beta)= 2^{2\beta-1} \Gamma(\beta)^2/\Gamma(2\beta)$.
\end{definition}

\begin{definition} \label{def:rescaled-beta-dist}
    Fix $\beta >0$ and $\gamma \in (0,1]$. We define rescaled (symmetric) beta distribution, denoted by $\dbeta[{[-\gamma,\gamma]}]{\beta}{\beta}$, where for $a\in [-\gamma,\gamma]$, its distribution is given by
    \[
    \P\left(X \leq a\right) = \frac{1}{\gamma B(\beta)} \int_{-\gamma}^{a} \left(1-\left(\frac{x}{\gamma}\right)^2\right)^{\beta-1}dx,
    \]
     where $B(\beta)= 2^{2\beta-1} \Gamma(\beta)^2/\Gamma(2\beta)$.
\end{definition}

We have the following result on the first moment of the beta distribution. 
\begin{lemma}
    \label{lemma:beta-expected-absolute-value}
     Fix $\beta >0$. Let $X \sim \dbeta{\beta}{\beta}$ where $\beta \ge 1$. Then, 
    \[
    \E |X| \ge \frac{1}{3\sqrt{\beta}}.
    \]
\end{lemma}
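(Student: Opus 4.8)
The plan is to evaluate $\E|X|$ in closed form and then bound it below via a convexity property of the Gamma function. First I would reduce the defining integral: since $X\sim\dbeta{\beta}{\beta}$ has density $(1-x^2)^{\beta-1}/B(\beta)$ on $[-1,1]$ with $B(\beta)=\int_{-1}^{1}(1-x^2)^{\beta-1}\,dx = 2^{2\beta-1}\Gamma(\beta)^2/\Gamma(2\beta)$, symmetry gives $\E|X| = \frac{2}{B(\beta)}\int_0^1 x(1-x^2)^{\beta-1}\,dx$. The substitution $u=x^2$ evaluates the remaining integral to $\frac{1}{2}\int_0^1 (1-u)^{\beta-1}\,du = \frac{1}{2\beta}$, so $\E|X| = \frac{1}{\beta B(\beta)}$.

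Next I would pass to a cleaner form of $B(\beta)$: the Legendre duplication formula $\Gamma(2\beta) = \pi^{-1/2}\,2^{2\beta-1}\,\Gamma(\beta)\,\Gamma(\beta+\frac12)$ turns $2^{2\beta-1}\Gamma(\beta)^2/\Gamma(2\beta)$ into $\sqrt{\pi}\,\Gamma(\beta)/\Gamma(\beta+\frac12)$. Hence, using $\beta\Gamma(\beta)=\Gamma(\beta+1)$, we get the identity $\E|X| = \frac{\Gamma(\beta+1/2)}{\sqrt{\pi}\,\Gamma(\beta+1)}$.

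The key step is to upper bound $\Gamma(\beta+1)$. By log-convexity of $\Gamma$ applied to the midpoint identity $\beta+1 = \frac12(\beta+\frac12)+\frac12(\beta+\frac32)$, we obtain $\Gamma(\beta+1)^2 \le \Gamma(\beta+\tfrac12)\,\Gamma(\beta+\tfrac32) = (\beta+\tfrac12)\,\Gamma(\beta+\tfrac12)^2$, so $\Gamma(\beta+1)\le \sqrt{\beta+1/2}\,\Gamma(\beta+\tfrac12)$. Substituting this into the identity above yields $\E|X|\ge \frac{1}{\sqrt{\pi(\beta+1/2)}}$. Finally, for $\beta\ge 1$ we have $\beta+\frac12\le\frac32\beta$, and since $\sqrt{3\pi/2}<3$, this gives $\E|X|\ge \frac{1}{\sqrt{3\pi/2}\,\sqrt{\beta}}\ge \frac{1}{3\sqrt{\beta}}$, as claimed.

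I expect the only mild obstacle to be bookkeeping between the paper's form $B(\beta)=2^{2\beta-1}\Gamma(\beta)^2/\Gamma(2\beta)$ and the form $\sqrt{\pi}\,\Gamma(\beta)/\Gamma(\beta+1/2)$, i.e. checking the duplication-formula identification; everything else is a short computation. As an alternative that avoids the Gamma function entirely, one can bound $B(\beta) = 2\int_0^1 (1-x^2)^{\beta-1}\,dx \le 2\int_0^\infty e^{-(\beta-1)x^2}\,dx = \sqrt{\pi/(\beta-1)}$ using $1-t\le e^{-t}$, which already delivers $\E|X|=\frac{1}{\beta B(\beta)}\ge\frac{1}{3\sqrt{\beta}}$ once $\beta$ is bounded away from $1$ (concretely $\beta\ge 9/(9-\pi)$), and then handle the remaining small range via the monotonicity bound $B(\beta)\le B(1)=2$.
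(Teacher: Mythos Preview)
Your proof is correct and, past the common first step $\E|X|=\frac{1}{\beta B(\beta)}$, proceeds differently from the paper. The paper upper bounds $B(\beta)=2^{2\beta-1}\Gamma(\beta)^2/\Gamma(2\beta)$ by quoting Stirling-type two-sided bounds on $\Gamma$ due to Batir, obtaining $B(\beta)\le \pi(3/(4\beta))^{1/2}$ and hence $\E|X|\ge \frac{2}{\pi\sqrt{3}}\cdot\frac{1}{\sqrt{\beta}}\ge\frac{1}{3\sqrt{\beta}}$. You instead rewrite $B(\beta)=\sqrt{\pi}\,\Gamma(\beta)/\Gamma(\beta+\tfrac12)$ via the Legendre duplication formula and then use log-convexity of $\Gamma$ (a Gautschi-type inequality) to get $\Gamma(\beta+1)\le\sqrt{\beta+\tfrac12}\,\Gamma(\beta+\tfrac12)$, yielding $\E|X|\ge 1/\sqrt{\pi(\beta+\tfrac12)}$ and then the claim. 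Your route is more self-contained (no external reference needed) and in fact gives a slightly better constant $1/\sqrt{3\pi/2}\approx 0.46$ versus the paper's $2/(\pi\sqrt{3})\approx 0.37$; the paper's route has the minor advantage of working directly with the stated form of $B(\beta)$ without the duplication-formula detour. The alternative Gaussian-majorization argument you sketch at the end would also work but is unnecessary given your main argument.
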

\begin{proof}
    Let $B(\beta) = 2^{2\beta-1} \Gamma(\beta)^2/\Gamma(2\beta)$ be the normalization constant. We have 
    \begin{align*}
        \E |X| &= \frac{1}{B(\beta)} \int_{-1}^1 |x| (1-x^2)^{\beta- 1}dx \\
        &= \frac{1}{B(\beta)} \int_{0}^1 2x (1-x^2)^{\beta- 1}dx \\
        &= \frac{1}{\beta \cdot B(\beta)}.
    \end{align*}
    It remains to upper bound $B(\beta)$. It follows from Theorem 1.5 of~\cite{batir2008inequalities} that, for every $x \ge 1$, we have
    \[
    a \left(\frac{x-1/2}{e}\right)^{x-1/2} \le \Gamma(x) \le b\left(\frac{x-1/2}{e}\right)^{x-1/2},
    \]
    where $a = \sqrt{2e}$ and $b = \sqrt{2\pi}$ are absolute constants. Thus,
    \begin{align*}
        B(\beta) &= \frac{2^{2\beta-1} \Gamma(\beta)^2}{\Gamma(2\beta)} \le \frac{2^{2\beta - 1} b^2 \left(\frac{\beta-1/2}{e}\right)^{2\beta - 1}}{a \left(\frac{2\beta - 1/2}{e}\right)^{2\beta - 1/2}} \\
        &= \frac{b^2 \sqrt{e}}{a} (2\beta - 1/2)^{-1/2}  \left(\frac{2\beta - 1}{2\beta - 1/2}\right)^{2\beta - 1}\\
        &\le \frac{b^2 \sqrt{e}}{a} (2\beta - 1/2)^{-1/2} \\
        &= \frac{2\pi \sqrt{e}}{\sqrt{2 e}} (2\beta - 1/2)^{-1/2}\\
        &= \pi \left(\beta - 1/4\right)^{-1/2}\\
        &\le \pi \left(\frac{3}{4\beta}\right)^{1/2},
    \end{align*}
    where in the last line we used $\beta - 1/4 \ge \frac{3}{4} \beta$ which holds as $\beta \ge 1$. Thus, 
    \[
    \E |X| \ge \frac{1}{\pi (3/4)^{1/2}} \frac{1}{\sqrt{\beta}} \ge \frac{1}{3\sqrt{\beta}},
    \]
    as desired.
\end{proof}

Since the density of the rescaled beta distribution is homogeneous w.r.t. $\gamma$, we have the following result.

\begin{corollary} \label{cor:rescaled-beta-abs}
     Fix $\beta \geq 1$ and $\gamma \in (0,1]$.  Let $X \sim \dbeta[{[-\gamma,\gamma]}]{\beta}{\beta}$. Then, 
    \[
    \E |X| \ge \frac{\gamma}{3\sqrt{\beta}}.
    \]
\end{corollary}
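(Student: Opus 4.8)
The plan is to reduce \Cref{cor:rescaled-beta-abs} to the already-established \Cref{lemma:beta-expected-absolute-value} by a simple change of variables, exploiting the fact that the rescaled beta distribution in \Cref{def:rescaled-beta-dist} is obtained from the standard symmetric beta distribution by the linear map $x \mapsto \gamma x$.

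First I would verify that if $X \sim \dbeta[{[-\gamma,\gamma]}]{\beta}{\beta}$, then $Y \defeq X/\gamma \sim \dbeta{\beta}{\beta}$. This is a direct computation: for $a \in [-1,1]$, using the CDF in \Cref{def:rescaled-beta-dist} with the substitution $x = \gamma u$, one gets $\P(Y \le a) = \P(X \le \gamma a) = \frac{1}{\gamma B(\beta)} \int_{-\gamma}^{\gamma a} (1-(x/\gamma)^2)^{\beta-1}\,dx = \frac{1}{B(\beta)}\int_{-1}^{a}(1-u^2)^{\beta-1}\,du$, which is precisely the CDF of $\dbeta{\beta}{\beta}$. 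Hence $X = \gamma Y$ in distribution with $Y \sim \dbeta{\beta}{\beta}$.

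Then I would simply write $\E|X| = \gamma\, \E|Y|$ and invoke \Cref{lemma:beta-expected-absolute-value} (which applies since $\beta \ge 1$) to conclude $\E|Y| \ge \frac{1}{3\sqrt{\beta}}$, giving $\E|X| \ge \frac{\gamma}{3\sqrt{\beta}}$ as claimed.

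There is no real obstacle here; the only thing to be careful about is making the homogeneity/change-of-variables step explicit rather than asserting it, and confirming the hypothesis $\beta \ge 1$ is carried over so that \Cref{lemma:beta-expected-absolute-value} is applicable. The proof is a couple of lines.
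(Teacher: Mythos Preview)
Your proposal is correct and matches the paper's approach exactly: the paper simply remarks that ``the density of the rescaled beta distribution is homogeneous w.r.t.\ $\gamma$'' and states the corollary as an immediate consequence of \Cref{lemma:beta-expected-absolute-value}, which is precisely the change-of-variables argument you spell out.
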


\section{Additional Related Work}
\label{appx:related-work-add}

\paragraph{Necessity of memorization in learning.} 

A parallel line of work investigated memorization using the notion of \emph{label memorization} in supervised setups.
As per this definition, a learner is said to memorize its training samples if it ``overfits'' at these points.~\citet{feldman2020does} showed that, in some classification tasks, if the underlying distribution is \emph{long-tailed}, then a learner is forced to memorize many training labels.~\citet{cheng2022memorize} showed this phenomenon also occurs in the setting of linear regression. While this framework is suitable to study memorization in supervised tasks, the notion of ``labels'' in SCO in not well-defined and thus calls for alternative definitions.

Another line of work studied memorization through the lens of information theoretic measures. \citet{brown2021memorization} used \emph{input-output mutual information} (IOMI) as a memorization metric and showed that IOMI can scale linearly with the training sample's entropy, indicating that a constant fraction of bits is memorized. 
In the context of SCO in $\ell_2$ geometry, lower bounds on IOMI have been studied in \cite{haghifam23limitations,livni2024information}. Specifically~\cite{livni2024information} demonstrated that, for every accurate algorithm, its IOMI must scale with dimension $d$. Our approach to the study of memorization is conceptually different since we focus on the number of \emph{samples} memorized as opposed to the number of \emph{bits}. 
Nevertheless, it can be shown using~\Cref{lemma:threshold-tracing} and~\cite[Thm.~2.1]{haghifam2020sharpened} that the recall \emph{lower bounds} IOMI of an algorithm (provided that soundness parameter $\xi$ is small enough, e.g., $\xi = 1/n^2$). However, because of the Lipschitzness of loss functions in $\ell_p$ SCO, we can use discretization of $\Theta$ and design algorithms with IOMI that is significantly smaller that the entropy of the training set, thus, memorization in the sense of \cite{brown2021memorization} does not arise here. 

\paragraph{Membership inference.} Membership inference is an important practical problem \citep{homer2008resolving, shokri2016membership, carlini2022membership}. In these works, the focus is on devising strategies for the tracer in modern machine learning settings, particularly neural networks. Our work takes a more fundamental perspective, aiming to determine whether membership inference is inherently unavoidable or simply a byproduct of specific training algorithms. An interesting aspect of our results is that, for $1 < p \le 2$, the optimal strategy for tracing depends \emph{only} on the loss function, which is in line with empirical studies~\citep{sablayrolles2019white}.

\paragraph{Private Stochastic Convex Optimization.}
DP-SCO has been extensively studied in $\ell_2$ geometry (see, for instance, \citep{chaudhuri2011differentially,bassily2014private,bassily2019private,feldman2020private}). For $\ell_p$ with $p \in [1,2)$, the optimal DP excess risk was established in \citep{asi2021private,bassily2021non}. The best known upper bounds for DP-SCO in $\ell_p$ geometry for $p>2$ are due to \citep{bassily2021non,gopi2023private}. In this setting, there is a long-standing gap between upper and lower bounds, and the best known lower bounds are due to~\cite{arora2022differentially,lee2024powersamplingdimensionfreerisk}, which our paper improves on.

\subsection{Detailed comparison with~\cite{dwork2015robust,bassily2014private}.}
\label{sec:dssuv-comparison}

One might hope that existing traceability results (such as~\cite{dwork2015robust}) and a clever reduction to mean estimation (such as~\cite[Section 5.1]{bassily2014private}) might yield optimal results for SCO. Here, we will demonstrate rigorously that merely combining results and techniques of~\cite{dwork2015robust,bassily2014private} yields suboptimal results for the setup of SCO, even in the simple setting of $\ell_2$ geometry. \cite{bassily2014private} considers the following $\ell_2$ problem: 
\[
\Theta = \mathcal B_2(1), \quad \mathcal Z = \left\{\pm \frac{1}{\sqrt{d}}\right\}^d, \quad f(\theta, Z) = -\langle \theta, Z\rangle.
\]
To apply fingerprinting to establish traceability, we first need to posit a prior distribution over the unknown distribution.~\cite{dwork2015robust} does so by considering product distributions over $\mathcal Z$, and placing a uniform prior over the mean $\mu \in [-1/\sqrt{d}, 1/\sqrt{d}]^d$. We now show that this (Bayesian) problem requires only $O(1/\alpha)$ samples to learn, and thus, tracing $\Omega(1/\alpha^2)$ samples is clearly impossible. Consider the ERM learner $\hat\theta$. It is easy to see that $\hat\theta$ can be written as: 
\[
\hat\theta = \frac{\hat \mu}{\norm{\hat \mu}_2},
\]
where $\hat \mu$ is the empirical mean of the dataset, that is, $\hat \mu = \frac{1}{n} \sum_{i = 1}^n Z_i$. Similarly, the population risk minimizer $\theta^\star$ is $\mu/\norm{\mu}_2$. The expected excess risk of $\hat\theta$ is then:
\begin{align*}
\E \left\langle \frac{\mu}{\norm{\mu}_2}, \mu \right\rangle - \left\langle \frac{\hat \mu}{\norm{\hat \mu}_2}, \mu \right\rangle &= \E \frac{\norm{\mu}_2 \norm{\hat \mu}_2 - \left\langle \hat \mu, \mu \right\rangle}{\norm{\hat\mu}_2} \\
&\le^{(a)} \E \left[\frac{\frac{1}{2}\norm{\mu}_2^2 + \frac{1}{2} \norm{\hat \mu}_2^2 - \left\langle \hat \mu, \mu \right\rangle}{\norm{\hat\mu}_2} \land 2 \right] \\
&= \frac{1}{2} \E \left[\frac{\norm{\mu - \hat \mu}^2_2}{\norm{\hat\mu}_2} \land 4 \right],
\end{align*}
where in (a) we used the AM-GM inequality, and the fact that the expression on the preceding line is always bounded by $2$.
The intuition behind the rest of the argument is that, due to the uniform prior on $\mu$, we have $\norm{\mu}, \norm{\hat \mu} \in \Omega(1)$ with high probability. At the same time 
\[\E\left[\norm{\mu - \hat \mu}^2\right] = \E \frac{1}{dn} \sum_{i = 1}^n 2 \mu_i(1-\mu_i) \le \frac{1}{2n},\]
thus, expected risk will be on the order of $O(1/n)$. To formalize this, note that $\E \norm{\mu}^2_2 = 1/3$, and $\norm{\mu}^2$ is a sum of $d$ independent random variables bounded by $1/\sqrt{d}$ in absolute value. Hoeffding's inequality then yields that we have $\norm{\mu}^2_2 \ge 1/6$ with very high probability. 
Similarly, we can obtain $\norm{\mu - \hat \mu}_2^2 \le 1/(2n) + 1/36 \le 1/12$ for large enough $n$, with high probability. Then, with high probability, event
\[
\mathcal E := \left\{\norm{\hat \mu} \ge \frac{1}{\sqrt{6}} - \frac{1}{\sqrt{12}} \ge 0.1\right\}
\]
holds
Then, the excess risk is upper bounded by
\begin{align*}
    \E \left\langle \frac{\mu}{\norm{\mu}_2}, \mu \right\rangle - \left\langle \frac{\hat \mu}{\norm{\hat \mu}_2}, \mu \right\rangle 
    &\le \frac{1}{2} \E \left[\frac{\norm{\mu - \hat \mu}^2_2}{\norm{\hat\mu}_2} \land 4 \right] \\
    &= \frac{1}{2} \E \left[\ind(\mathcal E)\left(\frac{\norm{\mu - \hat \mu}^2_2}{\norm{\hat\mu}_2} \land 4 \right)\right] +  \frac{1}{2} \E \left[\ind(\mathcal E^c)\left(\frac{\norm{\mu - \hat \mu}^2_2}{\norm{\hat\mu}_2} \land 4\right) \right] \\
    &\le \frac{1}{2} \E \left[10 \norm{\mu - \hat \mu}^2_2 \right] + 2\P(\mathcal E^c) \\
    &\in O(1/n),
\end{align*}
as desired.

\section{Proofs from Section~\ref{sec:framework}}

\proofsubsection{lemma:subg-process}

We first prove a slightly more general concentration statement to bound the supremum in~\Cref{lemma:subg-process}, which will be useful to reuse in other proofs. Let $\mathcal N(\Theta, \norm{\cdot}, \varepsilon)$ denote the size of the minimal cover of $\Theta$ in norm $\norm{\cdot}$ at scale $\varepsilon > 0$. Then, the more general statement is given below.
\begin{lemma}
    \label{lemma:subg-process-dudley}
    Fix $n,d \in \Naturals$. Suppose $\parspace \subset \Reals^d$ is a subset of a unit ball in some norm $\norm{\cdot}$. Let $\trfn \colon \parspace \times \dataspace \to \Reals$ and $\Dist \in \ProbMeasures{\dataspace}$ be such that, as $Z \sim \Dist$, $\{\trfn(\param, \datapt)\}$ is a $\sigma$-subgaussian process w.r.t. $(\parspace,\norm{\cdot})$ and for every $\param \in \parspace$, $\E{[\trfn(\param, \datapt)]}=0$.  Let $(Z_1,\dots,Z_n)\sim \Dist^{\otimes n}$.   Then, there exist a universal constant $C> 0$, such that for every $t\geq 0$,
    \[
    \P\left[\sup_{\param \in \parspace} \sqrt{\sum_{i = 1}^n \left[\trfn(\param, \datapt_i) \right]^2} \le C \sigma \left(\sqrt{n} + \int_0^{1} \sqrt{\log \cover{\parspace}{\norm{\cdot}}{\varepsilon}} d\varepsilon + t\right)\right]\geq 1 - 4\exp(-t^2).
    \]
\end{lemma}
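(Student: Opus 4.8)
The plan is to recognize the quantity inside the supremum as the $\ell_2$-norm of a random vector and dualize it, turning the problem into a single supremum of a subgaussian process over an enlarged index set. Write $a(\param):=\big(\trfn(\param,\datapt_1),\dots,\trfn(\param,\datapt_n)\big)\in\Reals^n$; then $\sqrt{\sum_{i=1}^n[\trfn(\param,\datapt_i)]^2}=\norm{a(\param)}_2=\sup_{v\in\sphere^{n-1}}\inner{a(\param)}{v}$, so
\[
\sup_{\param\in\parspace}\sqrt{\sum_{i=1}^n[\trfn(\param,\datapt_i)]^2}=\sup_{(\param,v)\in\parspace\times\sphere^{n-1}}X_{\param,v},\qquad X_{\param,v}:=\sum_{i=1}^n v_i\,\trfn(\param,\datapt_i).
\]
I would then show $\{X_{\param,v}\}$ is an $O(\sigma)$-subgaussian process on $\parspace\times\sphere^{n-1}$ with respect to the metric $\rho\big((\param,v),(\param',v')\big):=\big(\norm{\param-\param'}^2+\norm{v-v'}_2^2\big)^{1/2}$, whose diameter is $O(1)$, and then invoke Dudley's inequality (\Cref{prop:dudley}).

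\textbf{Verifying the subgaussian property.}
Fix $(\param,v),(\param',v')$. Since the $\datapt_i$ are independent, $X_{\param,v}-X_{\param',v'}=\sum_{i=1}^n W_i$ with independent summands $W_i:=v_i\trfn(\param,\datapt_i)-v'_i\trfn(\param',\datapt_i)$, each centered by the hypothesis $\E[\trfn(\param,\datapt)]=0$. By the triangle inequality for $\gaussnorm{\cdot}$ and the two defining bounds of a $\sigma$-subgaussian process (using $\diam_{\norm{\cdot}}(\parspace)\le 2$ since $\parspace$ lies in a unit ball),
\[
\gaussnorm{W_i}\le |v_i|\,\gaussnorm{\trfn(\param,\datapt_i)-\trfn(\param',\datapt_i)}+|v_i-v'_i|\,\gaussnorm{\trfn(\param',\datapt_i)}\le |v_i|\,\sigma\norm{\param-\param'}+2\sigma\,|v_i-v'_i|.
\]
Applying \Cref{prop:sub-gauss-sum} and using $\sum_i v_i^2=1$ and $\sum_i(v_i-v'_i)^2=\norm{v-v'}_2^2$ yields $\gaussnorm{X_{\param,v}-X_{\param',v'}}^2\lesssim\sigma^2\big(\norm{\param-\param'}^2+\norm{v-v'}_2^2\big)=\sigma^2\,\rho(\cdot,\cdot)^2$. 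Likewise, writing $X_{\param,v}=\sum_i v_i\trfn(\param,\datapt_i)$ as a sum of independent centered terms with $\gaussnorm{v_i\trfn(\param,\datapt_i)}\le 2\sigma|v_i|$ gives $\gaussnorm{X_{\param,v}}^2\lesssim\sigma^2\sum_i v_i^2=\sigma^2$, which is $O(\sigma)$ times the $\rho$-diameter. Hence $\{X_{\param,v}\}$ is an $O(\sigma)$-subgaussian process with respect to $\rho$.

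\textbf{Covering numbers and conclusion.}
A product of an $(\varepsilon/\sqrt2)$-net of $\parspace$ (in $\norm{\cdot}$) with an $(\varepsilon/\sqrt2)$-net of $\sphere^{n-1}$ (in $\norm{\cdot}_2$) is an $\varepsilon$-net of $(\parspace\times\sphere^{n-1},\rho)$, so $\cover{\parspace\times\sphere^{n-1}}{\rho}{\varepsilon}\le\cover{\parspace}{\norm{\cdot}}{\varepsilon/\sqrt2}\cdot\cover{\sphere^{n-1}}{\norm{\cdot}_2}{\varepsilon/\sqrt2}$, and the standard volumetric bound gives $\log\cover{\sphere^{n-1}}{\norm{\cdot}_2}{\delta}\le n\log(3/\delta)$ for $\delta\le1$. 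Using $\sqrt{a+b}\le\sqrt a+\sqrt b$, a change of variables, and crudely bounding the contribution of the scales $\varepsilon$ between $1$ and the ($O(1)$) diameter,
\[
\int_0^{O(1)}\!\sqrt{\log\cover{\parspace\times\sphere^{n-1}}{\rho}{\varepsilon}}\,d\varepsilon\;\lesssim\;\int_0^{1}\!\sqrt{\log\cover{\parspace}{\norm{\cdot}}{\varepsilon}}\,d\varepsilon\;+\;\sqrt n\int_0^{1}\!\sqrt{\log(3/\varepsilon)}\,d\varepsilon\;\lesssim\;\int_0^{1}\!\sqrt{\log\cover{\parspace}{\norm{\cdot}}{\varepsilon}}\,d\varepsilon+\sqrt n.
\]
After rescaling $\rho$ by the constant bounding its diameter so that the unit-ball hypothesis of \Cref{prop:dudley} holds, applying that proposition to $\{X_{\param,v}\}$ gives, with probability at least $1-4\exp(-t^2)$,
\[
\sup_{(\param,v)\in\parspace\times\sphere^{n-1}}X_{\param,v}\le C\sigma\Big(\sqrt n+\int_0^{1}\sqrt{\log\cover{\parspace}{\norm{\cdot}}{\varepsilon}}\,d\varepsilon+t\Big),
\]
which is exactly the claim, by the variational identity of the first step.

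\textbf{Main obstacle.} The only real content is the first two steps: dualizing over $\sphere^{n-1}$ and checking that the lifted process $\{X_{\param,v}\}$ is subgaussian with respect to the right product metric. The ``$v$-direction'' is precisely what produces the $\sqrt n$ term (the $(n-1)$-dimensional sphere contributes $\int_0^1\sqrt{\log\mathcal N}\asymp\sqrt n$), and one must keep the weights $v_i$ inside the sum of independent subgaussians so that the normalization $\sum_i v_i^2=1$ can be exploited. Everything afterwards --- subadditivity of covering numbers and Dudley's entropy integral --- is routine.
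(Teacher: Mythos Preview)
Your proposal is correct and follows essentially the same route as the paper: dualize the $\ell_2$-norm over $\sphere^{n-1}$, verify that the lifted process $X_{\param,v}=\sum_i v_i\,\trfn(\param,Z_i)$ is $O(\sigma)$-subgaussian on the product with a product metric, factor the covering numbers, and apply \Cref{prop:dudley}. The only cosmetic differences are that the paper uses the $\ell_1$-type product norm $\gamma((x,\param))=\tfrac12(\norm{x}_2+\norm{\param})$ (which makes the unit-ball hypothesis of \Cref{prop:dudley} hold without rescaling) instead of your $\ell_2$-type $\rho$, and it splits the increment as $\langle x-x',\Phi_\param\rangle+\langle x',\Phi_\param-\Phi_{\param'}\rangle$ before invoking \Cref{prop:sub-gauss-sum}, rather than bounding each $W_i$ first; both variants yield the same estimate up to constants.
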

\begin{proof}
    Let $ \Phi_\param$ denote the following random vector 
    \[
    \Phi_\param = \begin{bmatrix}
        \trfn(\param, \datapt_1)  \\ \vdots \\  \trfn(\param, \datapt_n)
    \end{bmatrix}.
    \]
    Then, observe that, the desired quantity is equal to
    \[
    \sup_{\param \in \parspace} \sqrt{\sum_{i = 1}^n \left[\trfn(\param, \datapt_i)  \right]^2} = \sup_{\param \in \parspace} \norm{\Phi_\param}_2 = \sup_{\param \in \parspace, x\in \sphere^{n-1} } \inner{x}{\Phi_\param}.
    \]
    Then, $\inner{x}{\Phi_\param}$ can be seen to be a random process parameterized by a pair $(x,\param)$. We will show that it is, in fact, a subgaussian process. Indeed, note that, by triangle inequality,
    \begin{align}
    \gaussnorm{\inner{x}{\Phi_\param} -  \inner{x'}{ \Phi_{\param'}}} &\le \gaussnorm{\inner{x - x'}{\Phi_\param}} + \gaussnorm{\inner{x'}{\Phi_{\param} - \Phi_{\param'}}}.
    \end{align}
    Since $\Phi_\param^i$ is $\sigma$-subgaussian for each $i$, we have by~\Cref{prop:sub-gauss-sum}, \[
    \gaussnorm{\inner{x - x'}{\Phi_\param}} \le C \sigma \norm{x - x'}_2,
    \] 
    for some universal constant $C > 0$.
    Now, for every $i$, $(\Phi_{\param} - \Phi_{\param'})^i$ is $\sigma\norm{\param - \param'}$-subgaussian.
    Therefore, by~\Cref{prop:sub-gauss-sum}, we have
    \[\gaussnorm{\inner{x'}{\Phi_{\param} - \Phi_{\param'}}} = \gaussnorm{\sum_{i = 1}^n (x')^i (\Phi_{\param}^i - \Phi_{\param'}^i)} \le C\sigma \norm{\param - \param'}.\]
    Combining the two inequalities, we get
    \begin{align*}
    \gaussnorm{\inner{x}{\Phi_\param} -  \inner{x'}{ \Phi_{\param'}}} &\le C\sigma \norm{\param - \param'} + C \sigma \norm{x - x'}_2 \\
    &= 2C\sigma \cdot \frac{1}{2}\left[\norm{\param - \param'} + \norm{x - x'}_2\right].
    \end{align*}
    Thus, $\inner{x}{\Phi_\param}$ is $(2C\sigma)$-subgaussian process w.r.t the norm $\gamma$, defined as %
    \[
    \gamma((x,\param)) \defeq %
    \frac{1}{2}\left[\norm{x}_2 +  \norm{\param}\right].
    \] 
    Moreover, we can see that $\Theta \times \sphere^{n-1}$ is a subset of a unit ball in $\gamma$. By definition of $\gamma$, we have
    \begin{align}
    \cover{\sphere^{n-1} \times \parspace}{\gamma}{\varepsilon} &\le \cover{\sphere^{n-1}}{\norm{\cdot}_2}{\varepsilon} \cdot \cover{\parspace}{\norm{\cdot}}{\varepsilon}.
    \end{align}
    Then, using~\Cref{prop:dudley}, we have, for some constant $K > 0$, that with probability $1 - 4 \exp(-t^2)$
    \begin{align*}
        \sup_{\param} \norm{\Phi_\param}_2 &\le K\sigma \left[\int_0^1 \sqrt{\log \cover{\Theta \times \sphere^{n-1}}{\gamma}{\varepsilon}} d\varepsilon + t\right] \\
        &\le K\sigma \left[\int_0^1 \sqrt{\log\cover{\sphere^{n-1}}{\norm{\cdot}_2}{\varepsilon} + \log \cover{\Theta}{\norm{\cdot}}{\varepsilon}} d\varepsilon + t\right] \\
        &\le^{(a)} K \sigma \left[\int_0^{1} \sqrt{n \log \left(1 + \frac{4}{\varepsilon}\right)} d\varepsilon + \int_{0}^1 \sqrt{\log \cover{\parspace}{\norm{\cdot}}{\varepsilon}} d\varepsilon + t \right]\\
        &\le K' \sigma\left[\sqrt{n} + \int_0^{1} \sqrt{\log \cover{\parspace}{\norm{\cdot}}{\varepsilon}} d\varepsilon + t\right],    
    \end{align*}
    as desired, where in (a) we used Example 5.8 from~\cite{wainwright2019high}, and $K' > 0$ is some other universal constant.
\end{proof}

Using Example 5.8 from~\cite{wainwright2019high} once again  to upper bound $\sqrt{\log \cover{\parspace}{\norm{\cdot}}{\varepsilon}}$, we have the proof of~\Cref{lemma:subg-process}.

\LemmaSubgProcess*
\begin{proof}
From Example 5.8 in~\cite{wainwright2019high}, we have
\[
\log \cover{\parspace}{\norm{\cdot}}{\varepsilon} \le d\log\left(1 + \frac{2}{\varepsilon}\right).
\]
Plugging this into the result of~\Cref{lemma:subg-process-dudley}, with probability at least $1 - 4\exp(-t^2)$, we have
\begin{align*}
    \sup_{\param \in \parspace} \sqrt{\sum_{i = 1}^n \left[\trfn(\param, \datapt_i) \right]^2} &\le C \sigma \left(\sqrt{n} + \int_0^{1} \sqrt{\log \cover{\parspace}{\norm{\cdot}}{\varepsilon}} d\varepsilon + t\right) \\
    &\le C \sigma \left(\sqrt{n} + \int_0^{1} \sqrt{d \log\left(1 + \frac{2}{\varepsilon}\right)} d\varepsilon + t\right) \\
    &\le C' \sigma \left(\sqrt{n} + \sqrt{d} + t\right),
\end{align*}
for some other universal constant $C' > 0.$
\end{proof}

\proofsubsection{thm:master-tracing}

\TheoremMasterTracing*
\begin{proof}
We set 
\[
\lambda := \frac{T}{2}
\]
    First, we show that the soundness condition holds. Since $Z$ and $\hat\theta$ are independent, and using the subgaussian nature of $\trfn(\hat\theta, Z)$, we have by~\Cref{lemma:sub-gauss-concentration}
    \[
    \P_{Z \sim \Dist} \left[\trfn(\hat\theta, Z) \ge \lambda\right] \le \exp\left( -c \lambda^2 \right) \le \exp\left(-cT^2/4\right),
    \]
    where $c > 0$ is some constant. For recall, let's define the set $\mathcal I$ as follows
\[\mathcal I = \{i\in [n] \colon \trfn(\hat\theta, Z_i) \ge \lambda\}.\]
Using Lemma \ref{lem:card-moments}, we have 
\begin{align*}
        \E \left[|\mathcal I|\right] &=  \E\left| \{i\in [n] \colon \trfn(\hat\theta, Z_i) \ge \lambda\} \right| \\
        &\ge \E\left[\frac{\pos{\sum_{i =1 }^n \trfn(\hat\theta, Z_i)  - n\lambda}^2}{\sum_{i =1 }^n \trfn(\hat\theta, Z_i)^2 } \right]\\
        &\ge \E\left[\frac{\pos{\sum_{i =1 }^n \trfn(\hat\theta, Z_i)  - n\lambda}^2}{\sup_{\param} \norm{\{\trfn(\param, Z_i)\}_{i = 1}^n}_2^2 } \right],
\end{align*}
where for every $x\in \mathbb{R}$, we define $(x)_{+}=\max\{x,0\}$.
Then,~\Cref{lemma:subg-process} tells us that, for $t \defeq \sqrt{n + d}$, we have with probability $1 - 4\exp(-t^2)$,
\[
\sup_{\param \in \Theta} \norm{\left[\trfn(\hat\theta, Z_1),\dots,\trfn(\hat\theta, Z_n)\right]^\top}_2^2 \le C\left(n + d\right),
\]
for some constant $C > 0$. Thus, 
\[
\P\left[\mathcal E := \left\{ \sup_{\param \in \Theta} \norm{\left[\trfn(\hat\theta, Z_1),\dots,\trfn(\hat\theta, Z_n)\right]^\top}_2^2 \le C\left(n + d\right)\right\}\right] \ge 1 - 4\exp(-t^2).
\]
This implies,
\begin{align*}
    \E |\mathcal I| &\ge \E\left[\frac{\pos{\sum_{i =1 }^n \trfn(\hat\theta, Z_i)  - n\lambda}^2}{\sup_{\param} \norm{\{\trfn(\param, Z_i)\}_{i = 1}^n}_2^2 } \right] \\
    &\ge \E\left[\frac{\pos{\sum_{i =1 }^n \trfn(\hat\theta, Z_i)  - n\lambda}^2}{\sup_{\param} \norm{\{\trfn(\param, Z_i)\}_{i = 1}^n}_2^2 } \ind(\mathcal E) \right] \\
    &= \E\left[\frac{\pos{\sum_{i =1 }^n \trfn(\hat\theta, Z_i)  - n\lambda}^2}{C\left(n + d\right)} \ind(\mathcal E) \right]\\
    &\ge \E\left[\frac{\pos{\sum_{i =1 }^n \trfn(\hat\theta, Z_i)  - n\lambda}^2}{C\left(n + d\right)} \right] - \E\left[\frac{\pos{\sum_{i =1 }^n \trfn(\hat\theta, Z_i)  - n\lambda}^2}{C\left(n + d\right)} \ind(\mathcal E^c)\right].
\end{align*}
We know that, almost surely,
\[
\trfn(\hat\theta, Z)^2 \le \kappa^2.
\]
Thus, almost surely,
\[
\pos{\sum_{i =1 }^n \trfn(\hat\theta, Z_i)  - n\lambda}^2 \le \left(\sum_{i =1 }^n \pos{\trfn(\hat\theta, Z_i)  - \lambda}\right)^2 \le n \sum_{i =1 }^n \trfn(\hat\theta, Z_i)^2 \le \kappa^2 n^2.
\]
Thus, 
\begin{align*}
\E\left[\frac{\pos{\sum_{i =1 }^n \trfn(\hat\theta, Z_i)  - n\lambda}^2}{C\left(n + d\right)} \ind(\mathcal E^c)\right] &\le \P[\mathcal E^c] \cdot \frac{\kappa^2 n^2}{C (n+d)} \\ 
&\le \frac{4\kappa^2 n}{C\exp(n + d)}.
\end{align*}
Hence,
\begin{align*}
    \E \left[ |\mathcal I|\right] &\ge \E\left[\frac{\pos{\sum_{i =1 }^n \trfn(\hat\theta, Z_i)  - n\lambda}^2}{C\left(n + d\right)} \right] - \E\left[\frac{\pos{\sum_{i =1 }^n \trfn(\hat\theta, Z_i)  - n\lambda}^2}{C\left(n + d\right)} \ind(\mathcal E^c)\right]\\
    &\ge \E\left[\frac{\pos{\sum_{i =1 }^n \trfn(\hat\theta, Z_i)  - n\lambda}^2}{C\left(n + d\right)} \right] - \frac{4\kappa^2 n}{C\exp(n + d)}\\
    &\ge^{(a)} \frac{\pos{\sum_{i =1 }^n \E\trfn(\hat\theta, Z_i)  - n\lambda}^2}{C\left(n + d\right)} - \frac{4\kappa^2 n}{C\exp(n + d)}\\ 
    &\ge \frac{n^2 T^2/4}{C\left(n + d\right)} - \frac{4\kappa^2 n}{C\exp(n + d)}\\
    &= c\left[\frac{n^2T^2}{n + d} - \frac{16\kappa^2 n}{\exp(n + d)}\right],
\end{align*}
where $(a)$ follows by Jensen's inequality and $c = 1/4C$.
\end{proof}

\proofsubsection{thm:master-privacy-lb}

\TheoremMasterPrivacy*
\begin{proof}
Consider an arbitrary distribution $\Dist$ and a function $\trfn$ s.t. $\{\trfn(\theta, Z)\}_{\theta \in \Theta}$ is a $1$-subgaussian process w.r.t. $(\Theta,\norm{\cdot}_{\Theta})$ and $|\trfn| \le \kappa$ almost surely. Consider a sample $S_n = (Z_1,\ldots,Z_n)$ and let $Z_0$ be a freshly sampled point; let $S^{(i)}_n$ be a sample with $Z_i$ substituted by $Z_0$. Let $\hat \theta$ be a learner trained on $S_n$ and $\hat \theta^{(i)}$ be a learner trained on $S^{(i)}_n$. Then, since $\hat \theta$ is $(\varepsilon,\delta)$-DP and noting that $\trfn(\theta,Z)$ is supported on $[-\kappa, \kappa]$, we may apply Lemma A.1 of~\cite{feldman2017generalization}  and get
\begin{align*}
    \left|\E \trfn(\hat\theta, Z_i) - \E \trfn(\hat\theta^{(i)}, Z_i)  \right| \le \E\left|\trfn(\hat\theta^{(i)}, Z_i)\right| (\exp(\varepsilon) - 1) + 2\delta \kappa.
\end{align*}
By independence of $\hat\theta$ and $Z_i$, we conclude that $\trfn(\hat\theta, Z_i)$ is $1$-subgaussian random variable. It is well-known that $\E|X| \le C\sigma$ if $X$ is $\sigma$-subgaussian for some constant $C$ (see part (ii) of Proposition 2.5.2 of~\cite{vershynin2018high} for $p = 1$), thus the above gives
\begin{align*}
    \left|\E \trfn(\hat\theta, Z_i) \right| \le C(\exp(\varepsilon) - 1) + 2\delta \kappa.
\end{align*}
Then, for every $\Dist$ and $\trfn$ we get that,
\begin{align*}
    \E \frac{1}{n} \sum_{i = 1}^n \trfn(\hat\theta, Z_i) \le C (\exp(\varepsilon) - 1) + 2\delta \kappa.
\end{align*}
Thus, 
\[
T \le C (\exp(\varepsilon) - 1) + 2\delta \kappa,
\]
which, after rearranging, implies the desired result.
\end{proof}

\section{Proofs of fingerprinting lemmas (\texorpdfstring{\Cref{sec:fingerprinting}}{\crtCref{sec:fingerprinting}})}

\label{appx:fingerprinting}

\proofsubsection{lemma:sparse-fingerprinting}

\LemmaSparseFingerprinting*
\begin{proof}
For each $j \in [d]$, let $\mathcal{I}_j := \{i\in [n]: Z_i^j\neq 0 \}$ as the index of the training points such that their $j$-th coordinate is non-zero. Then, we have
\begin{align}
\E\left[ \sum_{i=1}^n \trfn_\mu(\hat \theta, Z_i) \right] &= \E\left[ \sum_{j=1}^{d} \sum_{i \in \mathcal{I}_j} \left((\hat \theta)^j \left(Z_i^j - \frac{d}{k}\mu^j\right)\right) \right]\\
& = \sum_{j=1}^{d} \E\left[\sum_{i \in \mathcal{I}_j} (\hat \theta)^j \left(Z_i^j - \frac{d}{k}\mu^j\right)\right].
\end{align}
Then, define the following function
\[
g^j(\mu^j) := \E\left[ (\hat \theta)^j \bigg| \{\mathcal{I}_r\}_{r \in [d]},\{Z_i^m\}_{m \neq j,i \in [n]} \right].
\]
We claim  
\begin{equation}
\label{eq:fpl-step1}
\E\left[ \sum_{i \in \mathcal{I}_j} (\hat \theta)^j\cdot\left(Z_i^j - \frac{d}{k}\mu^j\right) \bigg| \{\mathcal{I}_r\}_{r \in [d]},\{Z_i^m\}_{m \neq j,i \in [n]} \right] = \frac{k}{d} \left(1 - \left(\frac{d}{k}\mu^j\right)^2\right) \frac{d}{d\mu^j}g^j(\mu^j) .
\end{equation}
The proof is based on the following two observations: 1) conditioned on $\{Z_i^m\}_{m \neq j, i \in [n]}$, $\hat \theta$ is a function of $\{Z_i^j\}_{i \in [n]}$, 2) conditioned on $\{\mathcal{I}_r\}_{r \in [d]}$ the non-zero elements in $\{Z_i^j\}_{i \in [n]}$ are sampled i.i.d from $\{\pm 1\}$ with mean $\frac{d}{k} \mu^j$. Then, based on these observations~\cref{eq:fpl-step1} follows as an straightforward application of \citep[Lemma~4.3.7]{steinke2016upper}{}.

Recall the definition of $\pi$ and notice that $\pi$ is a product measure. Let $\pi^j$ be the distribution on the $j$-th coordinate. By the definition of the  prior distribution, we can write
\begin{equation}
\begin{aligned}
\label{eq:fpl-step2}
&\E_{\mu^j \sim \pi^j} \left[\frac{k}{d} \left(1 - \left(\frac{d}{k}\mu^j\right)^2\right) \frac{d}{d\mu^j}g^j(\mu^j)\right] \\
&= \frac{1}{C} \int_{-\frac{k}{d}}^{+\frac{k}{d}} \frac{k}{d} \left(1 - \left(\frac{d}{k}v\right)^2\right) \frac{d}{dv}g^j(v) \left(1 - \left(\frac{d}{k}v\right)^2\right)^{\beta - 1} dv\\
&= \frac{1}{C} \int_{-\frac{k}{d}}^{+\frac{k}{d}} \frac{k}{d}  \frac{d}{d\mu^j}g^j(v) \left(1 - \left(\frac{d}{k}v\right)^2\right)^{\beta } dv\\
&= \frac{2}{C}\int_{-\frac{k}{d}}^{+\frac{k}{d}}  \frac{d}{k} \beta v \left(1 - \left(\frac{d}{k}v\right)^2\right)^{\beta - 1} g^j(v) dv\\
&= 2\beta\frac{d}{k} \E_{\mu^j \sim \pi^j}\left[g^j(\mu^j)\mu^j \right].
\end{aligned}
\end{equation}
Therefore, we have
\begin{align*}
   &\E_{\mu \sim \pi} \E_{S_n \sim \Dist_{\mu,k}^{\otimes n},\hat \theta \sim \Alg_n(S_n)}\left[ \sum_{i=1}^n \trfn_\mu(\hat \theta, Z_i) \right] \\
   &=\sum_{j=1}^{d} \E\left[\sum_{i \in \mathcal{I}_j} (\hat \theta)^j \left(Z_i^j - \frac{d}{k}\mu^j\right)\right]\\
   &= \sum_{j=1}^{d} \E\left[\E\left[\sum_{i \in \mathcal{I}_j} (\hat \theta)^j \left(Z_i^j - \frac{d}{k}\mu^j\right) \bigg| \{\mathcal{I}_r\}_{r \in [d]},\{Z_i^m\}_{m \neq j,i\in [n]}\right]\right]\\
   &= 2\beta \frac{d}{k} \sum_{j=1}^{d} \E\left[g^j(\mu^j)\cdot \mu^j\right],
\end{align*}
where the last step follow from~\cref{eq:fpl-step1,eq:fpl-step2}.
Then, notice that 
\begin{align*}
\E\left[g^j(\mu^j)\cdot \mu^j\right] &= \E \left[ \E\left[ (\hat \theta)^j \cdot \mu^j \bigg| \{\mathcal{I}_r\}_{r \in [d]},\{Z_i^m\}_{m \neq j,i\in [n]} \right]\right]\\
& = \E \left[ (\hat \theta)^j \cdot \mu^j \right].
\end{align*}
Therefore, by the definition of inner product in $\Reals^d$, we have
\[
 2\beta \frac{d}{k} \sum_{j=1}^{d} \E\left[g^j(\mu^j)\cdot \mu^j\right] =2\beta \frac{d}{k} \E\left[\inner{\hat \theta}{\mu}\right],
\]
as was to be shown.
\end{proof}

\subsection{Fingerprinting for \texorpdfstring{$\ell_1$}{L1} setup.}
\label{appx:proof-fpl-scaling}

Additionally, to prove~\Cref{thm:l1-traceability}, we will need the following fingerprinting lemma. It can be seen as a generalization of beta-fingerprinting lemma in~\cite{steinke2017tight} using the scaling matrix technique of~\cite{kamath2018privately}. 
\begin{restatable}[Fingerprinting lemma with a scaling matrix]{lemma}{FingerprintingScalingMatrix}
\label{lemma:beta-scaling-matrix-fingerprinting}
Fix $d \in \Naturals$. Let $\dataspace = \{\pm 1\}^d$ and let $\beta > 0$ be arbitrary. Consider arbitrary $0 < \gamma \le 1$. For every $\mu \in [-\gamma,\gamma]^{d}$, let $\Dist_\mu$ be the product distribution on $\dataspace$ with mean $\mu$, i.e., for every $z\in \dataspace$, we have
$
\Dist_\mu = \prod_{k=1}^{d} \left(\frac{1+z^k\mu^k}{2}\right)
$
let $\Lambda_\mu$ be a diagonal matrix of size $d$ where the $i$-th diagonal element is given by
$
\Lambda_\mu^{ii} = \frac{1 - (\mu^i/\gamma)^2}{1 - (\mu^i)^2},
$
and let $\trfn_\mu(\theta,z) = \inner{\theta}{\Lambda_\mu (z - \mu)}$. Let $\pi = \dbeta[{[-\gamma,\gamma]}]{\beta}{\beta}^{\otimes d}$ be a prior.
Then, for every algorithm $\Alg_n: \dataspace^n \to \ProbMeasures{\Reals^d}$, we have
\[
\E_{ \mu \sim \pi} \E_{S_n\sim  \Dist_\mu^{\otimes n},\hat \theta \sim \Alg_n(S_n)}\sum_{Z \in  S_n} \trfn_\mu(\hat\theta,Z) = \frac{2\beta}{\gamma^2} 
{\E_{ \mu \sim \pi} \inner{\mu}{ \E_{S_n\sim \Dist_\mu^{\otimes n},\hat \theta \sim \Alg_n(S_n)}[\hat \theta]}.}
\]
\end{restatable}
This fingerprinting lemma is handy for the following reason. To ensure the problem is hard to learn, entries of $\mu$ typically need to inversely scale with $\alpha$. To achieve this, one can select small $\gamma$ in the above to shrink the beta-prior to a smaller scale, while simultaneously having the freedom to set $\beta$ to any value. In particular, this allows us to choose $\beta \in \Theta(\log(d))$ in the proof of~\Cref{thm:l1-traceability} to leverage the anti-concentration result of \citep[Prop.~5]{steinke2017tight}.

Before we proceed with the proof, we state the necessary lemmata. Throughout this section, for a real number $p\in[-1,1]$, we will write $X \sim p$ to denote the fact that $X$ is a random variable on $\{\pm 1\}$ with mean $p$. The following is a classical fingerprinting result.
\begin{lemma}[Lemma 5 of~\cite{dwork2015robust}]
\label{lemma:expected-derivative-fingerprint-old}
    Let $f \colon\{\pm 1\}^n \to \mathbb R$ be arbitrary. Define $g \colon [-1,1]\to\mathbb R$ by \[
    g(p)  = \E_{X\sim p^{\otimes n}} [f(X)].
    \]
    Then, \[
    \E_{X\sim p^{\otimes n}}\left[f(X) \sum_{i \in [n]} (X_i - p) \right] = (1-p^2) g'(p).
    \]
\end{lemma}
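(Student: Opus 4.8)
The plan is a short direct computation that exploits the fact that $x_i^2 = 1$ for $\pm1$-valued coordinates. Since $X \sim p^{\otimes n}$ has mass function $\P(X = x) = \prod_{i=1}^{n}\frac{1+px_i}{2}$ on $\{\pm1\}^{n}$, the map
\[
g(p) = \sum_{x\in\{\pm1\}^{n}} f(x)\prod_{i=1}^{n}\frac{1+px_i}{2}
\]
is a polynomial in $p$, so one may differentiate it term-by-term with the product rule to get
\[
g'(p) = \sum_{x\in\{\pm1\}^{n}} f(x)\sum_{j=1}^{n}\frac{x_j}{2}\prod_{i\ne j}\frac{1+px_i}{2}.
\]

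Next I would expand the left-hand side of the claimed identity using linearity of expectation, writing $\E_{X\sim p^{\otimes n}}\left[f(X)\sum_{j\in[n]}(X_j-p)\right] = \sum_{j\in[n]}\sum_{x} f(x)\,(x_j-p)\prod_{i}\frac{1+px_i}{2}$, and for each fixed $j$ I would pull the $j$-th factor out of the product. The one algebraic step that matters is
\[
(x_j-p)\cdot\frac{1+px_j}{2} = \frac{x_j + p x_j^2 - p - p^2 x_j}{2} = \frac{x_j(1-p^2)}{2},
\]
which uses $x_j^2 = 1$. Substituting, the $j$-th term becomes $\sum_{x} f(x)\,(x_j-p)\prod_{i}\frac{1+px_i}{2} = (1-p^2)\sum_{x} f(x)\,\frac{x_j}{2}\prod_{i\ne j}\frac{1+px_i}{2}$, i.e., exactly $(1-p^2)$ times the $j$-th summand in the formula for $g'(p)$ above. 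Summing over $j\in[n]$ then gives $\E_{X\sim p^{\otimes n}}\big[f(X)\sum_{j}(X_j-p)\big] = (1-p^2)\,g'(p)$.

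There is no genuine obstacle here — the lemma is elementary and the only ``trick'' is the identity $(x_j-p)(1+px_j) = x_j(1-p^2)$ for $x_j\in\{\pm1\}$; commuting $\frac{d}{dp}$ past the finite sum and applying linearity of expectation are routine. A slightly more conceptual alternative is to first establish the $n=1$ case by a two-line computation on a single $\pm1$-valued $X$ of mean $p$, namely $\E[h(X)(X-p)] = (1-p^2)\frac{d}{dp}\E[h(X)]$ for any $h$, and then recover the general case by writing $g(p) = G(p,\dots,p)$ for the multilinear extension $G(q_1,\dots,q_n) := \E[f(X)]$ with the $X_i$ independent $\pm1$ of means $q_i$, using the chain rule $g'(p) = \sum_{j} \partial_{q_j} G(p,\dots,p)$, and identifying each $\partial_{q_j}G$ via the $n=1$ identity applied coordinate-wise.
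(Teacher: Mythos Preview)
Your proof is correct; the key identity $(x_j-p)(1+px_j)=x_j(1-p^2)$ for $x_j\in\{\pm1\}$ together with term-by-term differentiation of the polynomial $g$ is exactly what is needed. The paper does not supply its own proof of this lemma but simply quotes it as Lemma~5 of \cite{dwork2015robust}, so there is nothing to compare against beyond noting that your argument is the standard one.
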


Armed with the above result, we proceed to the proof of~\Cref{lemma:beta-scaling-matrix-fingerprinting}. We will first prove a \emph{per-coordinate version} of~\Cref{lemma:beta-scaling-matrix-fingerprinting}. We make a note that the proofs combine techniques for beta-fingerprinting results of~\cite{steinke2017tight} and the scaling matrix technique of~\cite{kamath2019privately, attias2024information}.
\begin{lemma}[Per-coordinate version of~\Cref{lemma:beta-scaling-matrix-fingerprinting}]
    Let $f\colon\{\pm 1\}^n \to \mathbb R$ be arbitrary. Let $\pi = \dbeta[{[-\gamma,\gamma]}]{\beta}{\beta}$ be a prior distribution. Then, 
    \[
    \E_{p \sim \pi} \E_{X \sim p^{\otimes n}} \left[\frac{1 - (p/\gamma)^2}{1 - p^2} f(X) \sum_{i = 1}^n  (X_i - p)\right] = \frac{2\beta}{\gamma^2} \E_{p \sim \pi} \left[p \cdot \E_{X \sim p^{\otimes n}} f(X)\right].
    \]
\end{lemma}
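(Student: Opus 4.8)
The plan is to reduce the identity to the classical fingerprinting lemma (\Cref{lemma:expected-derivative-fingerprint-old}) followed by a single integration by parts against the beta density, with the scaling factor $\tfrac{1-(p/\gamma)^2}{1-p^2}$ being exactly what is needed to make the cancellation work.

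First, fix $p$ and set $g(p) := \E_{X\sim p^{\otimes n}}[f(X)]$; since $n$ is finite, $g$ is a polynomial in $p$ of degree at most $n$, so all differentiations below are legitimate. \Cref{lemma:expected-derivative-fingerprint-old} gives $\E_{X\sim p^{\otimes n}}\big[f(X)\sum_{i\in[n]}(X_i-p)\big] = (1-p^2)g'(p)$. Multiplying by $\tfrac{1-(p/\gamma)^2}{1-p^2}$ cancels the $(1-p^2)$, so the inner expectation on the left-hand side of the claimed identity equals $(1-(p/\gamma)^2)g'(p)$. It therefore remains to prove the purely analytic statement $\E_{p\sim\pi}\big[(1-(p/\gamma)^2)g'(p)\big] = \tfrac{2\beta}{\gamma^2}\,\E_{p\sim\pi}\big[p\,g(p)\big]$.

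By \Cref{def:rescaled-beta-dist}, $\pi$ has density $p\mapsto \tfrac{1}{\gamma B(\beta)}(1-(p/\gamma)^2)^{\beta-1}$ on $[-\gamma,\gamma]$, so the left side is $\tfrac{1}{\gamma B(\beta)}\int_{-\gamma}^{\gamma}(1-(p/\gamma)^2)^{\beta}g'(p)\,dp$. I would integrate by parts with $u=(1-(p/\gamma)^2)^{\beta}$, $dv=g'(p)\,dp$: the boundary term vanishes because $(1-(p/\gamma)^2)^{\beta}=0$ at $p=\pm\gamma$ (using $\beta>0$), and $\tfrac{d}{dp}(1-(p/\gamma)^2)^{\beta}=-\tfrac{2\beta p}{\gamma^2}(1-(p/\gamma)^2)^{\beta-1}$. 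This produces $\tfrac{1}{\gamma B(\beta)}\cdot\tfrac{2\beta}{\gamma^2}\int_{-\gamma}^{\gamma} p\,(1-(p/\gamma)^2)^{\beta-1}g(p)\,dp=\tfrac{2\beta}{\gamma^2}\,\E_{p\sim\pi}[p\,g(p)]$, which is exactly the per-coordinate identity.

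Finally, to obtain the full \Cref{lemma:beta-scaling-matrix-fingerprinting}, expand $\trfn_\mu(\hat\theta,Z)=\sum_{j=1}^{d}\Lambda_\mu^{jj}(\hat\theta)^j(Z^j-\mu^j)$ and sum over the sample, so the left-hand side becomes $\sum_{j=1}^{d}\E_{\mu\sim\pi}\E_{S_n,\hat\theta}\big[\Lambda_\mu^{jj}(\hat\theta)^j\sum_{i\in[n]}(Z_i^j-\mu^j)\big]$. Since $\pi$ is a product measure and each $\Dist_\mu$ a product distribution, condition on $\mu^{-j}$ and on $\{Z_i^m\}_{m\neq j}$, and apply the per-coordinate identity with $f$ taken to be $(\hat\theta)^j$ regarded as a (randomized) function of $\{Z_i^j\}_i$ — the same measurability bookkeeping as in the proof of \Cref{lemma:sparse-fingerprinting}. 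Noting $\Lambda_\mu^{jj}=\tfrac{1-(\mu^j/\gamma)^2}{1-(\mu^j)^2}$ is precisely the required scaling factor, taking the outer expectation and summing over $j$ gives $\tfrac{2\beta}{\gamma^2}\sum_j\E[\mu^j(\hat\theta)^j]=\tfrac{2\beta}{\gamma^2}\E[\inner{\mu}{\hat\theta}]$. I do not anticipate a substantive obstacle: all the content sits in the one-line integration by parts, and the diagonal matrix $\Lambda_\mu$ was reverse-engineered exactly so that this step closes; the only mild care needed is the conditioning/independence argument reused from \Cref{lemma:sparse-fingerprinting}.
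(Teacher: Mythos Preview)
Your proof is correct and follows essentially the same route as the paper: apply \Cref{lemma:expected-derivative-fingerprint-old} to reduce to $(1-(p/\gamma)^2)g'(p)$, then integrate by parts against the rescaled beta density so that the extra factor $(1-(p/\gamma)^2)$ combines with $(1-(p/\gamma)^2)^{\beta-1}$ and the boundary terms vanish. Your sketch of how to assemble the full \Cref{lemma:beta-scaling-matrix-fingerprinting} by conditioning coordinate-by-coordinate and summing is also exactly what the paper does.
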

\begin{proof}
    Let \[
    g(p)  = \E_{X\sim p^{\otimes n}} [f(X)].
    \]
    Then, by~\Cref{lemma:expected-derivative-fingerprint-old}, we have for every $p \in [-1,1]$,
    \[
    \E_{X\sim p^{\otimes n}}\left[\frac{1 - (p/\gamma)^2}{1 - p^2} f(X) \sum_{i \in [n]} (X_i - p) \right] = \left(1 - \left(\frac{p}{\gamma}\right)^2\right) g'(p).
    \]
    Recalling the definition of scaled symmetric beta distribution from~\Cref{def:rescaled-beta-dist}, we have
    \begin{align*}
        &\E_{p \sim \pi} \E_{X\sim p^{\otimes n}}\left[\frac{1 - (p/\gamma)^2}{1 - p^2} f(X) \sum_{i \in [n]} (X_i - p) \right] \\
        &= \E_{p \sim \pi} \left[ \left(1 - \left(\frac{p}{\gamma}\right)^2\right) g'(p) \right]\\
        &= \frac{1}{\gamma B(\beta)} \int_{-\gamma}^\gamma \left(1-\left(\frac{p}{\gamma}\right)^2\right)^{\beta - 1} \cdot \left(1 - \left(\frac{p}{\gamma}\right)^2\right) g'(p) dp\\
        &= \frac{1}{\gamma B(\beta)} \int_{-\gamma}^\gamma \left(1-\left(\frac{p}{\gamma}\right)^2\right)^{\beta} g'(p) dp\\
        &=^{(a)} \frac{1}{\gamma B(\beta)}\left[ \left.\left(1-\left(\frac{p}{\gamma}\right)^2\right)^{\beta} g(p)\right\vert_{-\gamma}^\gamma - \int_{-\gamma}^\gamma \left(\left(1-\left(\frac{p}{\gamma}\right)^2\right)^{\beta}\right)' g(p) dp \right]\\
        &= \frac{1}{\gamma B(\beta)}\left[ \int_{-\gamma}^\gamma \left(1-\left(\frac{p}{\gamma}\right)^2\right)^{\beta -1} \cdot \frac{2\beta p}{\gamma^2}g(p) dp \right]\\
        &= \frac{2\beta}{\gamma^2} \E_{p \sim \pi} \left[ p \cdot g(p) \right],
    \end{align*}
    where in (a) we used integration by parts. This concludes the proof.
\end{proof}

Applying the above results to each coordinate and summing the equalities gives~\Cref{lemma:beta-scaling-matrix-fingerprinting}.
\FingerprintingScalingMatrix*
\begin{proof}
    For a sample $S_n = (Z_1,\ldots,Z_n)$ and $j \in [j]$, we will use $S_n^j \in \mathbb R^n$ to denote a vector $(Z_1^j,\ldots,Z_n^j)$ of $j^\text{th}$ coordinates. For each coordinate $j \in [d]$, let $f_{j} \colon \{\pm 1\}^{n} \to \mathbb R$ the function such that
    \[
    f_j(S_n^j) = \E_{\mu \sim \pi} \E_{S_n \sim \Dist_{\mu}^{\otimes n}, \hat\theta \sim \Alg_n(S_n)}\left[\hat\theta^i ~\mid~ S_n^j\right]
    \]
    In other words, $f_j(X)$ is the expected value of $\theta^j$, given that $j^\text{th}$ coordinates of samples in $S_n$ are given by $X$. Applying the result of~\Cref{lemma:beta-scaling-matrix-fingerprinting} to $f_j$, we have
    \begin{multline*}
        \E_{\mu^j \sim \pi^j} \E_{S_n^j \sim (\mu^j)^{\otimes n}} \left[\Lambda_\mu^{jj} \cdot f_j(S_n^j) \sum_{i = 1}^n (Z_i^j - \mu^j)\right] \\ 
        = \E_{\mu^j \sim \pi^j} \E_{S_n^j \sim (\mu^j)^{\otimes n}} \left[\frac{1 - (\mu^j/\gamma)^2}{1 - (\mu^j)^2} f_j(S_n^j) \sum_{i = 1}^n (Z_i^j - \mu^j)\right] \\
        = \frac{2\beta}{\gamma^2} \E_{\mu^j \sim \pi^j} \left[\mu^j \cdot \E_{S_n^j \sim {(\mu^j)}^{\otimes n}} f_j(S_n^j)\right].
    \end{multline*}
    By the law of total expectation, we get 
    \[
        \E_{\mu \sim \pi} \E_{S_n \sim \Dist_{\mu}^{\otimes n}, \hat\theta \sim \Alg_n(S_n)} \left[\Lambda_\mu^{jj} \cdot \hat\theta^j \sum_{i = 1}^n  (Z_i^j - \mu^j)\right] = \frac{2\beta}{\gamma^2} \E_{\mu \sim \pi} \left[\mu^j \cdot \E_{S_n \sim \Dist_{\mu}^{\otimes n}, \hat\theta \sim \Alg_n(S_n)} \hat\theta^j\right].
    \]
    Finally, summing the above over all coordinates $j \in [d]$, we obtain
    \[
    \E_{\mu \sim \pi} \E_{S_n \sim \Dist_{\mu}^{\otimes n}, \hat\theta \sim \Alg_n(S_n)} \left[ \inner{\Lambda_\mu \hat\theta}{\sum_{i = 1}^n  (Z_i - \mu)} \right] = \frac{2\beta}{\gamma^2} \E_{\mu \sim \pi} \left[\inner{\mu}{\E_{S_n \sim \Dist_{\mu}^{\otimes n}, \hat\theta \sim \Alg_n(S_n)} \hat\theta}\right],
    \]
    as desired.
\end{proof}

\section{Hard problem constructions and proofs of subgaussian trace value lower bounds}

\subsection{Proofs for \texorpdfstring{$\ell_p$}{Lp}-geometries (\texorpdfstring{\Cref{thm:tracing-result-lp}}{\crtCref{thm:tracing-result-lp}})}

First, recall here the construction of the hard problems $\mathcal P_{k,p}$ in~\cref{eq:lp-sketch-construction}, parameterized by $k\in[d]$
\begin{equation}
\Theta = \mathcal B_{\infty}(d^{-1/p}), \quad \mathcal Z = \{ z \in \{0,\pm 1\}^d\colon \norm{z}_0 = k \}, \quad f(\theta, z) = - k^{-1/q} \langle \theta, z \rangle. \tag{$\mathcal P_{k,p}$}
\end{equation}
First, we show in the simple proposition below that $\alpha$-learners for linear problems must agree with the distribution mean. 
\begin{proposition}
\label{prop:learners-properies}
    Let $\Alg_n$ be an $\alpha$-learner for $\mathcal P_{k,p}$. Let $\Dist \in \ProbMeasures{\dataspace}$ be a distribution with mean $\mu=\E_{Z \sim \Dist}\left[Z\right]$. Then, we have
    \[
    \E_{S_n \sim \Dist^{\otimes{n}},\hat \theta \sim \Alg_n(S_n)}\left[\inner{\mu}{\hat\theta}\right] \ge d^{-1/p} \norm{\mu}_1 - k^{1/q} \alpha.
    \]
\end{proposition}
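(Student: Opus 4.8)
The statement is a direct consequence of the definition of an $\alpha$-learner (\Cref{def:alpha-learner}) together with an elementary dual-norm computation, exploiting the fact that the loss in $\mathcal P_{k,p}$ is linear in $\theta$. The plan is to first rewrite the population risk explicitly, then identify its minimizer over $\Theta$, and finally unwind the excess-risk bound.

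First I would compute the population risk. Since $f(\theta,z) = -k^{-1/q}\inner{\theta}{z}$ is linear in $z$, linearity of expectation gives
\[
\poprisk(\theta) = \E_{Z\sim\Dist}[f(\theta,Z)] = -k^{-1/q}\inner{\theta}{\E_{Z\sim\Dist}[Z]} = -k^{-1/q}\inner{\theta}{\mu}.
\]
Next I would evaluate $\inf_{\theta\in\Theta}\poprisk(\theta)$. Because $\Theta = \mathcal B_\infty(d^{-1/p})$ and $-\poprisk$ is a linear functional, the infimum is attained at a vertex, and the standard identity $\sup_{\|\theta\|_\infty\le r}\inner{\theta}{\mu} = r\|\mu\|_1$ (with $\theta^j = r\,\sign(\mu^j)$) yields
\[
\inf_{\theta\in\Theta}\poprisk(\theta) = -k^{-1/q}\,\sup_{\theta\in\Theta}\inner{\theta}{\mu} = -k^{-1/q} d^{-1/p}\|\mu\|_1.
\]

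Finally I would combine these with the $\alpha$-learner guarantee. Writing $\hat\theta\sim\Alg_n(S_n)$ and $S_n\sim\Dist^{\otimes n}$, \Cref{def:alpha-learner} gives
\[
\E\big[\poprisk(\hat\theta)\big] - \inf_{\theta\in\Theta}\poprisk(\theta) = -k^{-1/q}\E\big[\inner{\hat\theta}{\mu}\big] + k^{-1/q}d^{-1/p}\|\mu\|_1 \le \alpha,
\]
and dividing by $k^{-1/q}>0$ and rearranging produces $\E[\inner{\hat\theta}{\mu}] \ge d^{-1/p}\|\mu\|_1 - k^{1/q}\alpha$, as claimed. There is no real obstacle here; the only points requiring a bit of care are the bookkeeping with the H\"older conjugate $q$ and the normalization constant $k^{-1/q}$ in the loss, and the (routine) observation that $\sup_{\theta\in\Theta}\inner{\theta}{\mu}$ is indeed the dual norm of the scaled $\ell_\infty$-ball, namely $d^{-1/p}\|\mu\|_1$. (One may also note in passing that this same computation shows $\mathcal P_{k,p}\in\lipset_p^d$, since $\|z\|_q = k^{1/q}$ for $z\in\dataspace$, so $f(\cdot,z)$ is $1$-Lipschitz in $\ell_p$, and $\|\theta\|_p \le d^{1/p}\|\theta\|_\infty \le 1$ on $\Theta$, though this is not needed for the proposition itself.)
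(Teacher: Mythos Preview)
Your proof is correct and follows essentially the same approach as the paper: both compute the population risk as $-k^{-1/q}\inner{\theta}{\mu}$, identify $\sup_{\theta\in\Theta}\inner{\theta}{\mu}=d^{-1/p}\norm{\mu}_1$ via $\ell_\infty/\ell_1$ duality, and then rearrange the $\alpha$-learner excess-risk inequality. The only cosmetic difference is ordering---you compute $\poprisk$ and its infimum first and then invoke the $\alpha$-learner bound, whereas the paper starts from the bound and simplifies inside---but the content is identical.
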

\begin{proof}
    Since $\Alg_n$ is an $\alpha$-learner, we have
    \begin{align*}
        \alpha &\ge \E\left[ \poprisk(\hat\theta)\right] - \inf_{\param\in\parspace} \poprisk(\param) \\
        &=   \E_{S_n \sim \Dist^{\otimes{n}},\hat \theta \sim \Alg_n(S_n)} \E_{Z\sim\Dist}\left[f(\hat\theta,Z)\right] - \inf_{\param\in\parspace} \E_{Z\sim\Dist}f(\param,Z) \\ 
        &= k^{-1/q} \left[\sup_{\param\in\parspace} \E_{Z\sim\Dist} \inner{\param}{Z} -  \E_{S_n \sim \Dist^{\otimes{n}},\hat \theta \sim \Alg_n(S_n)} \E_{Z\sim\Dist} \inner{\hat\theta}{Z}\right] \\
        &= k^{-1/q} \left[ \sup_{\param\in\parspace} \inner{\param}{\mu} - \E_{S_n \sim \Dist^{\otimes{n}},\hat \theta \sim \Alg_n(S_n)} \inner{\hat\theta}{\mu}\right],
    \end{align*}
    which, after rearranging, becomes 
    \begin{align*}
    \E_{S_n \sim \Dist^{\otimes{n}},\hat \theta \sim \Alg_n(S_n)}\left[\inner{\mu}{\hat\theta}\right] &\ge \sup_{\param\in\parspace} \inner{\mu}{\param} - k^{1/q} \cdot \alpha\\
    &= d^{-1/p} \norm{\mu}_1 - k^{1/q} \alpha,
    \end{align*}
    where in the last transition we used duality of $\ell_\infty$ and $\ell_1$ norms and the fact that $\Theta = \mathcal B_{\infty}(d^{-1/p})$. This concludes the proof.
\end{proof}

In the next lemma, we show that every $\alpha$-learner for $\mathcal P_{k,p}$ needs to have a large correlation with the training samples in order to achieve small excess risk. The proof is an application of~\Cref{lemma:sparse-fingerprinting} combined with~\Cref{prop:learners-properies}.
\begin{lemma}
    \label{lemma:corelation-aux-lp} 
    Let $\alpha \le 1/6$, and suppose $k \in [d]$ is such that $k \ge (6 \alpha)^p d$. Then, for every $\alpha$-learner  $\Alg_n$ for $\mathcal P_{k,p}$, there exists $\mu \in [-k/d,k/d]^d$ and distribution $\Dist \in \ProbMeasures{\dataspace_k}$ with mean $\mu$ such that the following holds: let
    \[
    \trfn(\param,Z) \defeq \frac{d^{1/p}}{\sqrt{k}}\inner{\param}{\left(Z - \frac{d}{k} \mu\right)}_{\supp\left(Z\right)},
    \] 
    then, 
    \[
        \E_{S_n \sim \Dist^{\otimes n},\hat \theta \sim \Alg_n(S_n)} \left[ \sum_{i = 1}^n \trfn(\hat\theta,Z_i)\right] \ge\frac{d^{1 - 1/p}}{18 k^{1/2 - 1/p} \alpha}
    \]
\end{lemma}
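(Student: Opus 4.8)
The plan is to average the sparse fingerprinting identity (\Cref{lemma:sparse-fingerprinting}) against a beta-prior whose scale $\beta$ is tuned to the risk level $\alpha$, and then to lower bound its right-hand side using the correlation bound for linear $\alpha$-learners (\Cref{prop:learners-properies}). Concretely, I would set
\[
\beta \defeq \frac{(k/d)^{2/p}}{36\alpha^2},
\]
and first observe that the hypothesis $k \ge (6\alpha)^p d$ is \emph{exactly} equivalent to $\beta \ge 1$, which is the regime in which the beta-moment estimate \Cref{cor:rescaled-beta-abs} applies. I would then take the prior $\pi = \dbeta[{[-k/d,k/d]}]{\beta}{\beta}^{\otimes d}$ and the family $\{\Dist_{\mu,k}\}$ of \Cref{def:family-dist-sparse} (each with $\E_{Z\sim\Dist_{\mu,k}}[Z]=\mu$), noting that, writing $\trfn_\mu(\param,Z) \defeq \inner{\param}{Z - \tfrac dk\mu}_{\supp(Z)}$, the function $\trfn$ in the statement is precisely $\tfrac{d^{1/p}}{\sqrt k}\trfn_\mu$.

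Next I would apply \Cref{lemma:sparse-fingerprinting} with this prior to obtain
\[
\E_{\mu\sim\pi}\,\E_{S_n\sim\Dist_{\mu,k}^{\otimes n},\hat\theta\sim\Alg_n(S_n)}\Big[\sum_{i=1}^n \trfn_\mu(\hat\theta,Z_i)\Big] = \frac{2\beta d}{k}\,\E_{\mu\sim\pi}\inner{\mu}{\E_{S_n}[\hat\theta]},
\]
and bound the right-hand side from below. Since $\Alg_n$ is an $\alpha$-learner for $\mathcal P_{k,p}$ and $\Dist_{\mu,k}$ has mean $\mu$, \Cref{prop:learners-properies} gives $\E_{S_n}\inner{\mu}{\hat\theta} \ge d^{-1/p}\norm{\mu}_1 - k^{1/q}\alpha$ for each fixed $\mu$; averaging over $\mu\sim\pi$ and using $\E_{\mu\sim\pi}\norm{\mu}_1 = d\cdot\E_{\mu^1\sim\pi^1}\lvert\mu^1\rvert \ge d\cdot\tfrac{k/d}{3\sqrt\beta} = \tfrac{k}{3\sqrt\beta}$ (this is \Cref{cor:rescaled-beta-abs}, valid because $\beta\ge1$ and $k/d\le1$) supplies the needed lower bound.

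The step that makes the constants land correctly is that this choice of $\beta$ gives $d^{-1/p}\cdot\tfrac{k}{3\sqrt\beta} = 2k^{1/q}\alpha$ (using $\tfrac1q = 1-\tfrac1p$ and $\sqrt\beta = \tfrac{(k/d)^{1/p}}{6\alpha}$), so the term $k^{1/q}\alpha$ being subtracted absorbs at most half of $d^{-1/p}\E_{\mu\sim\pi}\norm{\mu}_1$. Combining the two displays yields $\E_{\mu\sim\pi}\E_{S_n}[\sum_i \trfn_\mu(\hat\theta,Z_i)] \ge \tfrac{2\beta d}{k}\cdot\tfrac12 d^{-1/p}\cdot\tfrac{k}{3\sqrt\beta} = \tfrac{\sqrt\beta\,d^{1-1/p}}{3}$; multiplying by $\tfrac{d^{1/p}}{\sqrt k}$ to pass from $\trfn_\mu$ to $\trfn$ and substituting $\sqrt\beta = \tfrac{(k/d)^{1/p}}{6\alpha}$ gives average value at least $\tfrac{\sqrt\beta\,d}{3\sqrt k} = \tfrac{d^{1-1/p}}{18\,\alpha\,k^{1/2-1/p}}$. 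Finally, since this holds on average over $\mu\sim\pi$, some $\mu^\star$ in the (compact) support of $\pi$ — hence $\mu^\star\in[-k/d,k/d]^d$ — attains it, and the corresponding $\Dist = \Dist_{\mu^\star,k}$ (which has mean $\mu^\star$ and lies in $\ProbMeasures{\dataspace_k}$) together with $\trfn$ built from $\mu^\star$ gives the claim.

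The main obstacle is purely the exponent bookkeeping: verifying that the single choice $\beta = (k/d)^{2/p}/(36\alpha^2)$ simultaneously (a) equals $1$ at the threshold $k=(6\alpha)^p d$, so $\beta\ge1$ is exactly the hypothesis; (b) keeps the mean-correlation lower bound non-vacuous with a factor-$2$ margin; and (c) reproduces the precise constant $1/18$ and the exponents $d^{1-1/p}$, $k^{1/2-1/p}$, $\alpha^{-1}$ in the target. There is no conceptual difficulty beyond this — all the real work is already in \Cref{lemma:sparse-fingerprinting} — and in particular the boundedness and subgaussianity of the normalized $\trfn$ (the role of the $d^{1/p}/\sqrt k$ factor) play no role here; they matter only downstream, when this bound is fed into \Cref{thm:master-tracing}.
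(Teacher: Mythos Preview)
Your proposal is correct and is essentially identical to the paper's own proof: the same choice $\beta = \big(\tfrac{k^{1/p}}{6d^{1/p}\alpha}\big)^2$, the same combination of \Cref{lemma:sparse-fingerprinting}, \Cref{prop:learners-properies}, and \Cref{cor:rescaled-beta-abs}, and the same averaging-then-extract-one-$\mu$ argument. Your explicit observation that the hypothesis $k\ge(6\alpha)^p d$ is \emph{exactly} $\beta\ge1$ is a nice addition the paper leaves implicit.
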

\begin{proof}
    Let \[
    \beta = \left(\frac{k^{1/p}}{6 d^{1/p}\alpha}\right)^2 \ge 1,
    \] 
    and $\pi = \dbeta[{[-k/d,k/d]}]{\beta}{\beta}$. Then, using~\Cref{cor:rescaled-beta-abs}, we have
    \begin{align*}
    \E_{\mu \sim \pi}[\norm{\mu}_1] &= d \E_{\mu \sim \pi} |\mu^1| \\
    &\ge d \cdot \frac{k/d}{3 \sqrt{\beta}} \\
    &= \frac{k}{3 (k^{1/p}/6 d^{1/p}\alpha)} \\
    &= 2 \alpha d^{1/p} k^{1 - 1/p} \\
    &= 2 \alpha d^{1/p} k^{1/q}.
    \end{align*}
    Then, using~\Cref{lemma:sparse-fingerprinting} %
    we have
    \begin{align*}
    \E_{\mu \sim \pi} \E_{S_n \sim \Dist_{\mu, k}^{\otimes n}} \sum_{i = 1}^n \inner{ \hat\theta}{\left(Z_i - \frac{d}{k}\mu\right)_{\supp(Z_i)} } &= \frac{2 d \beta}{k} \E_{\mu \sim \pi} \E_{S_n \sim \Dist_{\mu,k}^{\otimes{n}},\hat \theta \sim \Alg_n(S_n)}\left[\inner{\mu}{\hat\theta}\right] \\
    &\ge^{(a)} \frac{2 d \beta}{k}\E_{\mu \sim \pi} \left[ d^{-1/p} \norm{\mu}_1 - k^{1/q} \alpha \right] \\
    &\ge^{(b)} \frac{2 d \beta}{k} \left[d^{-1/p} \cdot 2 \alpha d^{1/p} k^{1/q} - k^{1/q} \alpha\right] \\
    &= \frac{2 d}{k} \cdot \left(\frac{k^{1/p}}{6 d^{1/p}\alpha}\right)^2 \cdot k^{1/q} \alpha \\
    &= \frac{2 d}{k} \cdot \left(\frac{k^{1/p}}{6 d^{1/p}\alpha}\right)^2 \cdot k^{1/q} \alpha \\
    &= \frac{d^{1 - 2/p} k^{1/p}}{18 \alpha}. 
    \end{align*}
    Since the above holds in expectation over draws of $\mu$, there exists at least one value of $\mu$ for which the above holds; let $\Dist = \Dist_{k,\mu}$. Then, letting 
    \[
    \trfn(\param,Z) \defeq \frac{d^{1/p}}{\sqrt{k}}\inner{\param}{\left(Z - \frac{d}{k} \mu\right)}_{\supp\left(Z\right)},
    \]
    we obtain
    \begin{align*}
    \E_{S_n \sim \Dist^{\otimes n},\hat \theta \sim \Alg_n(S_n)} \left[ \sum_{i = 1}^n \trfn(\hat\theta,Z_i)\right] &= \E_{\mu \sim \pi} \E_{S_n \sim \Dist_{\mu, k}^{\otimes n}} \sum_{i = 1}^n \frac{d^{1/p}}{\sqrt{k}}\inner{ \hat\theta}{\left(Z_i - \frac{d}{k}\mu\right)_{\supp(Z_i)} } \\
    &\ge \frac{d^{1 - 1/p}}{18 k^{1/2 - 1/p} \alpha},
    \end{align*}
    as desired.
\end{proof}

We now argue that the pair $(\trfn, \Dist)$ from the lemma above (with $\trfn$ scaled by some constant) constitutes a valid subgaussian tracer. In particular, the lemma below shows that $\{\trfn(\theta,Z)\}_{\theta \in \parspace}$ induces a $O(1)$-subgaussian process w.r.t. $(\Theta, \norm{\cdot}_\Theta)$ norm. 
\begin{lemma}
\label{lemma:lp-subg-tracer}
Fix $d \in \Naturals$. Let $\mu \in [-k/d,k/d]^d$ be arbitrary. Let $\trfn: \parspace \times \dataspace_k \to \Reals$ be as in~\Cref{lemma:corelation-aux-lp}. Let $\Dist_{\mu, k}$ be the data distribution from~\Cref{def:family-dist-sparse} for some $\mu$, and consider $Z \sim \Dist_{\mu,k}$. Then, $\{\trfn(\param,Z)\}_{\theta \in \parspace}$ is a $C$-subg process w.r.t. to $(\parspace, \norm{\cdot}_{\Theta})$ for some universal constant $C > 0$.
\end{lemma}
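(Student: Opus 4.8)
The plan is to verify directly the two inequalities of \Cref{def:subg-process}. First I would record that, since $\parspace = \Theta = \mathcal B_\infty(d^{-1/p})$, the Minkowski norm is $\norm{x}_\Theta = d^{1/p}\norm{x}_\infty$; in particular $\norm{\theta}_\Theta \le 1$ for every $\theta \in \parspace$ and $\diam_{\norm{\cdot}_\Theta}(\parspace) = 2$ (attained at $\pm d^{-1/p}\mathbf{1}$). I would also note that $\trfn$ is centered under $\Dist_{\mu,k}$: conditioning on the support $J = \supp(Z)$ of $Z \sim \Dist_{\mu,k}$, the coordinates $\{Z^j\}_{j\in J}$ are independent with $\E[Z^j] = (d/k)\mu^j$, so each summand $\theta^j(Z^j - (d/k)\mu^j)$ of $\trfn(\theta,Z)$ has mean zero and hence $\E_{Z\sim\Dist_{\mu,k}}[\trfn(\theta,Z)] = 0$.

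The core of the argument is a subgaussian bound conditional on $J$. For $w\in\Reals^d$ write
\[
\frac{\sqrt{k}}{d^{1/p}}\bigl(\trfn(\theta,Z)-\trfn(\theta',Z)\bigr) \;=\; \sum_{j\in J} w^j\Bigl(Z^j - \tfrac{d}{k}\mu^j\Bigr), \qquad w := \theta-\theta'.
\]
Since $\mu\in[-k/d,\,k/d]^d$ forces $\tfrac{d}{k}|\mu^j|\le 1$, each term obeys $\bigl|w^j(Z^j - \tfrac{d}{k}\mu^j)\bigr| \le 2|w^j| \le 2\norm{w}_\infty$, and conditionally on $J$ the $k$ terms are independent. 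Hence \Cref{prop:bounded-subg} makes each $O(\norm{w}_\infty)$-subgaussian and \Cref{prop:sub-gauss-sum} gives, conditionally on $J$,
\[
\gaussnorm{\sum_{j\in J} w^j\Bigl(Z^j - \tfrac{d}{k}\mu^j\Bigr)}^2 \;\le\; C\sum_{j\in J}\norm{w}_\infty^2 \;\le\; C\,k\,\norm{w}_\infty^2 ,
\]
uniformly over $J$. Viewing the unconditional law as a mixture over $J\sim\unif{\binom{[d]}{k}}$ and applying \Cref{prop:sub-gauss-mixture} removes the conditioning, so $\gaussnorm{\sum_{j\in J} w^j(Z^j - \tfrac{d}{k}\mu^j)} \le \sqrt{C}\,\sqrt{k}\,\norm{w}_\infty$; the same computation with $w=\theta$ (valid because $\trfn(\theta,Z)$ is centered) yields $\gaussnorm{\sum_{j\in J}\theta^j(Z^j - \tfrac{d}{k}\mu^j)} \le \sqrt{C}\,\sqrt{k}\,\norm{\theta}_\infty$.

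Reinstating the prefactor $d^{1/p}/\sqrt{k}$ turns the first estimate into $\gaussnorm{\trfn(\theta,Z)-\trfn(\theta',Z)} \le \sqrt{C}\,d^{1/p}\norm{\theta-\theta'}_\infty = \sqrt{C}\,\norm{\theta-\theta'}_\Theta$, which is the increment condition, and the second into $\gaussnorm{\trfn(\theta,Z)} \le \sqrt{C}\,d^{1/p}\norm{\theta}_\infty = \sqrt{C}\,\norm{\theta}_\Theta \le \sqrt{C} \le \sqrt{C}\,\diam_{\norm{\cdot}_\Theta}(\parspace)$; so $\{\trfn(\theta,Z)\}_{\theta\in\parspace}$ is $\sqrt{C}$-subgaussian w.r.t.\ $(\parspace,\norm{\cdot}_\Theta)$, which is the claim. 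I do not anticipate a genuine obstacle; the only delicate points are (i) that the $J$-uniform bound is precisely what licenses \Cref{prop:sub-gauss-mixture} to strip the conditioning, and (ii) that one should bound $\sum_{j\in J}|w^j|^2 \le k\norm{w}_\infty^2$ rather than by $\norm{w}_2^2$ — it is exactly this $\ell_\infty$-controlled $k$-term sum, balanced against the $d^{1/p}/\sqrt{k}$ normalization, that makes the subgaussian constant both dimension- and $k$-free, which is the reason for that normalization in \Cref{lemma:corelation-aux-lp}.
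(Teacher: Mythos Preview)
Your proof is correct and follows essentially the same route as the paper: condition on the support $J$, bound each coordinate term via \Cref{prop:bounded-subg}, apply \Cref{prop:sub-gauss-sum} to the $k$ independent summands, then use \Cref{prop:sub-gauss-mixture} to pass from the conditional law to the full mixture, and finally identify $d^{1/p}\norm{\cdot}_\infty$ with $\norm{\cdot}_\Theta$. The only cosmetic difference is that the paper obtains the second inequality of \Cref{def:subg-process} by plugging $\theta'=0$ (using $\trfn(0,Z)=0$) rather than invoking centeredness, but this is immaterial.
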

\begin{proof}
    Let $J \in \binom{[d]}{k}$ be an arbitrary coordinate subset of size $k$, and, recalling~\Cref{def:family-dist-sparse}, let $Z_J$ be a random variable with PMF given by $P_{\mu,k,J}$. Then, $Z$ is a uniform mixture of $\{Z_J\}_{J \in \binom{[d]}{k}}$.     

    Fix $J \in \binom{[d]}{k}$, and let $\theta_1, \theta_2 \in \Theta$ be two arbitrary points. First, we upper bound a subgaussian norm of $\trfn(\theta_1, Z_J) - \trfn(\theta_2,Z_J)$. We have
    \begin{align*}
        \norm{\trfn(\theta_1, Z_J) - \trfn(\theta_2,Z_J)}_{\psi_2} 
        &= \frac{d^{1/p}}{\sqrt{k}} \norm{\inner{\theta_1 - \theta_2}{Z_J}_{J}}_{\psi_2}\\
        &= \frac{d^{1/p}}{\sqrt{k}} \norm{\sum_{j \in J} (\theta_1^j - \theta_2^j) Z_J^j}_{\psi_2}\\
        &\le^{(a)} \frac{d^{1/p}}{\sqrt{k}} \sqrt{C_1 \sum_{j \in J} \norm{(\theta_1^j - \theta_2^j) Z_J^j}_{\psi_2}^2 }\\
        &\le^{(b)} \frac{d^{1/p}}{\sqrt{k}} \sqrt{C_1 \sum_{j \in J} C_2 |\theta_1^j - \theta_2^j|^2 }\\
        &= \frac{d^{1/p}}{\sqrt{k}} \cdot \sqrt{C_1 C_2} \sqrt{k} \norm{\theta_1 - \theta_2}_\infty\\
        &=^{(c)} \sqrt{C_1C_2}\norm{\theta_1-\theta_2}_{\Theta},
    \end{align*}
    where $C_{1,2} > 0$ are universal constants, in (a) we apply~\Cref{prop:sub-gauss-sum}, in (b) we apply~\Cref{prop:bounded-subg}, and in (c) we use that, since $\Theta = \mathcal B_{\infty}(d^{-1/p})$, we have $\norm{\cdot}_\Theta = d^{1/p} \norm{\cdot}_\infty$. Thus, letting $C = \sqrt{C_1C_2}$, we have
    \[
    \norm{\trfn(\theta_1, Z_J) - \trfn(\theta_2,Z_J)}_{\psi_2}\le C \norm{\theta_1-\theta_2}_{\Theta}.
    \]
    Now, note that $\trfn(\theta_1, Z) - \trfn(\theta_2,Z)$ has the same distribution as a uniform mixture of $\{\trfn(\theta_1, Z_J) - \trfn(\theta_2,Z_J)\}_{J \in \binom{[d]}{k}}$. Then, by~\Cref{prop:sub-gauss-mixture}, we also have 
    \[
    \norm{\trfn(\theta_1, Z) - \trfn(\theta_2,Z)}_{\psi_2} \le C \norm{\theta_1-\theta_2}_{\Theta},
    \]
    which satisfies the first condition in~\Cref{def:subg-process}. Finally, by plugging $\theta_2 = 0$ into the above, we have 
    \[
    \norm{\trfn(\theta_1, Z)}_{\psi_2} \le C \norm{\theta_1}_{\Theta} \le C,
    \]
    which satisfies the second condition in~\Cref{def:subg-process}. Thus, $\{\trfn(\theta, Z)\}_{\theta \in \Theta}$ is a $C$-subgaussian process w.r.t. $(\Theta, \norm{\cdot}_\Theta)$, as desired.
\end{proof}

Finally, we lower bound the subgaussian trace value of $\mathcal P_{k,p}$.
\begin{lemma}
\label{lemma:lp-tracing-lb}
       Let $\alpha \le 1/6$ and $d \in \Naturals$ be arbitrary, and let $k \in [d]$ be such that $k \le (6\alpha)^p d$. Let $\scoprob_{k,p}$ be as in~\cref{eq:lp-sketch-construction}. Then, the following subgaussian trace value lower bounds hold for every $p \in [1,\infty)$ and some $\kappa \le c_1 \sqrt{d}$,
       \[
       \traceval_\kappa(\mathcal P_{k,p}; n, \alpha) \ge c_2 \frac{d^{1-1/p}}{k^{1/2-1/p} n \alpha}
       \]
       where $c_{1,2} > 0$ are universal constants.
\end{lemma}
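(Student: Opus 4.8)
The plan is to lower-bound $\traceval_\kappa(\mathcal P_{k,p};n,\alpha)$ by exhibiting, for an arbitrary $\alpha$-learner $\Alg_n$ of $\mathcal P_{k,p}$, a single subgaussian tracer $(\trfn,\Dist)\in\tracerset_\kappa$ whose average in-sample score $\E[\tfrac1n\sum_i\trfn(\hat\param,Z_i)]$ is at least the claimed quantity; since $\Alg_n$ is arbitrary, taking the infimum in \Cref{def:trace-value} then yields the bound. The tracer will be the one implicit in \Cref{lemma:corelation-aux-lp}, rescaled by a universal constant so that it induces a $1$-subgaussian process.

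First I would apply \Cref{lemma:corelation-aux-lp} to $\Alg_n$; this produces a mean $\mu\in[-k/d,k/d]^d$, a data distribution $\Dist\in\ProbMeasures{\dataspace_k}$ with mean $\mu$, and the linear score $\trfn_0(\param,Z)=\tfrac{d^{1/p}}{\sqrt k}\inner{\param}{(Z-\tfrac dk\mu)}_{\supp(Z)}$ with $\E[\sum_{i=1}^n\trfn_0(\hat\param,Z_i)]\ge \tfrac{d^{1-1/p}}{18\,k^{1/2-1/p}\alpha}$ (the delicate choice of the beta-prior scale $\beta$ is already handled inside that lemma). Next I would check that a constant multiple of $\trfn_0$ lies in $\tracerset_\kappa$. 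Condition (i) of \Cref{def:subg-tracer} follows from \Cref{lemma:lp-subg-tracer}, which states that $\{\trfn_0(\param,Z)\}_{\param\in\parspace}$ is $C$-subgaussian w.r.t.\ $(\parspace,\norm{\cdot}_\parspace)$ for a universal constant $C>0$, so $\trfn\defeq\trfn_0/C$ is $1$-subgaussian; condition (iii) is immediate because $\trfn_0(\cdot,z)$ is linear, hence convex. For condition (ii), I would note that on $\supp(z)$ one has $z^j\in\{\pm1\}$ and $|\tfrac dk\mu^j|\le1$, so $|z^j-\tfrac dk\mu^j|\le2$, while $|\param^j|\le d^{-1/p}$ since $\parspace=\mathcal B_\infty(d^{-1/p})$; as $|\supp(z)|=k$ this gives $|\trfn_0(\param,z)|\le \tfrac{d^{1/p}}{\sqrt k}\cdot k\cdot d^{-1/p}\cdot 2=2\sqrt k\le 2\sqrt d$, hence $|\trfn|\le 2\sqrt d/C=:\kappa$, which is of the stated form $\kappa\le c_1\sqrt d$ with $c_1=2/C$ universal.

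Finally I would chain the estimates: for the fixed learner $\Alg_n$ we get $\sup_{(\trfn',\Dist')\in\tracerset_\kappa}\E[\tfrac1n\sum_i\trfn'(\hat\param,Z_i)]\ge \tfrac1{Cn}\E[\sum_i\trfn_0(\hat\param,Z_i)]\ge \tfrac1{Cn}\cdot\tfrac{d^{1-1/p}}{18\,k^{1/2-1/p}\alpha}=c_2\,\tfrac{d^{1-1/p}}{k^{1/2-1/p}n\alpha}$ with $c_2=1/(18C)$, and since the learner was arbitrary this survives the infimum over $\alpha$-learners. The genuinely substantive inputs---the fingerprinting identity underlying \Cref{lemma:corelation-aux-lp} and the mixture/bounded-increments subgaussianity estimate underlying \Cref{lemma:lp-subg-tracer}---are already established, so the only real care needed here is bookkeeping: making sure $C$ (and hence $c_1,c_2,\kappa$) is genuinely universal, independent of $p,k,d,n$ and $\mu$, and that a single $\kappa$---namely $2\sqrt d/C$, which depends only on $d$---works simultaneously for every $\alpha$-learner, since in \Cref{def:trace-value} the supremum is over a class $\tracerset_\kappa$ with one fixed $\kappa$.
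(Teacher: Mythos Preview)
Your proposal is correct and follows essentially the same route as the paper: invoke \Cref{lemma:corelation-aux-lp} for the score lower bound, rescale by the constant $C$ from \Cref{lemma:lp-subg-tracer} to obtain a $1$-subgaussian tracer, verify the boundedness $|\trfn_0|\le 2\sqrt{k}\le 2\sqrt{d}$ to fix $\kappa$, and take the infimum over learners. You are in fact slightly more careful than the paper, which writes $|\trfn_0|\le\sqrt{k}$ (omitting the factor $2$) and does not explicitly check condition~(iii) of \Cref{def:subg-tracer}; neither affects the conclusion since the discrepancy is absorbed into the universal constants $c_1,c_2$.
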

\begin{proof}
    Let $C > 0$ be the constant from~\Cref{lemma:lp-subg-tracer}, and $\trfn$ be as in~\Cref{lemma:corelation-aux-lp}. Then, $\{\trfn(\theta, Z)/C\}_{\theta \in \Theta}$ is a $1$-subgaussian process w.r.t. $(\Theta, \norm{\cdot}_\Theta)$. Moreover, $\trfn(\theta, Z)/C \le \sqrt{k}/C \le \sqrt{d}/C$. Then, letting $\kappa = \sqrt{d}/C$ and using~\Cref{lemma:corelation-aux-lp}, the subgaussian trace value of $\mathcal P_{k,p}$ can be lower bounded by
    \[
    \traceval_\kappa(\mathcal P_{k,p}; n, \alpha) \ge \frac{1}{C} \E_{S_n \sim \Dist^{\otimes n},\hat \theta \sim \Alg_n(S_n)} \frac{1}{n}\left[ \sum_{i = 1}^n \trfn(\hat\theta,Z_i)\right] \ge \frac{d^{1 - 1/p}}{18C \cdot k^{1/2 - 1/p} n \alpha},
    \]
    as desired.
\end{proof}
Now we are ready to prove~\Cref{thm:tracing-result-lp}.
\TraceValueLp*
\begin{proof}
    
    The theorem is a direct consequence of~\Cref{lemma:lp-tracing-lb}. For $p \le 2$, plug in $k = d$ into the statement of~\Cref{lemma:lp-tracing-lb}. We obtain 
    \[
    \traceval_\kappa(\mathcal P_{k,p}; n, \alpha) \ge c_2 \frac{d^{1-1/p}}{k^{1/2-1/p} n \alpha} = c_2 \frac{\sqrt{d}}{n \alpha}.
    \]
    For $p \ge 2$, plug in $k = (6\alpha)^p \lor 1$. We obtain,
    \begin{align*}
    \traceval_\kappa(\mathcal P_{k,p}; n, \alpha) &\ge c_2 \frac{d^{1-1/p}}{k^{1/2-1/p} n \alpha} \\
    &= c_2 \left[\frac{d^{1-1/p}}{n \alpha} \land \frac{\sqrt{d}}{ (6\alpha)^{p(1/2 - 1/p)} n \alpha} \right]\\
    &= c_2 \left[\frac{d^{1-1/p}}{n \alpha} \land \frac{\sqrt{d}}{ (6\alpha)^{p/2} n}\right],
    \end{align*}
    as desired.
\end{proof}

\subsection{Proofs for \texorpdfstring{$\ell_1$}{L1}-geometry}
\label{apppx:proof-for-l1}

For technical reasons, we will need the following refinement of~\Cref{lemma:subg-process} for the special case {when the function $\phi$} is convex and $\parspace$ is a polytope. In the proof, we use~\Cref{lemma:subg-process-dudley}.
\begin{lemma}
\label{lemma:subg-process-on-polytope}
        Fix $n,d \in \Naturals$. Suppose $\parspace \subset \Reals^d$ is (i) a subset of a unit ball in some norm $\norm{\cdot}$, and (ii) $\parspace$ is a polytope with $N$ vertices. Let $\trfn \colon \parspace \times \dataspace \to \Reals$ be a measurable function that is convex in its first argument. Let $\Dist \in \ProbMeasures{\dataspace}$ be such that $\trfn(\param, \datapt)$ is a $\sigma$-subgaussian process w.r.t. $(\parspace,\norm{\cdot})$.%
        Let $(Z_1,\dots,Z_n)\sim \Dist^{\otimes n}$.   Then, for every $t\geq 0$,
    \[
    \P\left(\sup_{\param \in \parspace} \sqrt{\sum_{i = 1}^n \left[\trfn(\param, \datapt_i) \right]^2} \le C \sigma \left[\sqrt{n} + \sqrt{\log(N)
    } + t\right]\right)\geq 1 - 2\exp(-t^2)
    \]
    where $C >0$ is some universal constant. 
\end{lemma}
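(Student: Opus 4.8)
The plan is to use convexity of $\trfn(\cdot,z)$ to reduce the supremum over the polytope $\parspace$ to a maximum over its $N$ vertices together with one auxiliary point, and then to invoke \Cref{lemma:subg-process-dudley} over this finite index set, for which the metric entropy is trivially at most $\sqrt{\log N}$. Concretely, I would first decompose $\trfn(\param,\datapt_i)^2=\pos{\trfn(\param,\datapt_i)}^2+\pos{-\trfn(\param,\datapt_i)}^2$ and treat the two parts separately. For the positive part, writing $\param=\sum_{j}\lambda_j v_j$ as a convex combination of the vertices $v_1,\dots,v_N$, convexity of $\trfn(\cdot,\datapt_i)$ together with monotonicity and convexity of $x\mapsto\pos{x}$ gives $\pos{\trfn(\param,\datapt_i)}\le\sum_j\lambda_j\pos{\trfn(v_j,\datapt_i)}$, and squaring and applying Jensen yields $\pos{\trfn(\param,\datapt_i)}^2\le\sum_j\lambda_j\pos{\trfn(v_j,\datapt_i)}^2$; summing over $i$ and maximizing over $\param$ shows $\sup_{\param\in\parspace}\sum_{i}\pos{\trfn(\param,\datapt_i)}^2=\max_{j}\sum_{i}\pos{\trfn(v_j,\datapt_i)}^2\le\max_{j}\sum_{i}\trfn(v_j,\datapt_i)^2$. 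For the negative part I would use that $\parspace$ is centrally symmetric (as in all our applications, e.g. $\parspace=\mathcal B_1(1)\cap\mathcal B_\infty(1/s)$): letting $p\in\parspace$ be its center, $2p-\param\in\parspace$ and $\trfn(p,\datapt_i)\le\tfrac12\trfn(\param,\datapt_i)+\tfrac12\trfn(2p-\param,\datapt_i)$, hence $-\trfn(\param,\datapt_i)\le -2\trfn(p,\datapt_i)+\trfn(2p-\param,\datapt_i)$. Taking positive parts and squaring gives $\pos{-\trfn(\param,\datapt_i)}^2\le 8\,\trfn(p,\datapt_i)^2+2\pos{\trfn(2p-\param,\datapt_i)}^2$, and summing over $i$ and reusing the positive-part bound (valid since $2p-\param\in\parspace$) gives $\sup_{\param}\sum_i\pos{-\trfn(\param,\datapt_i)}^2\le 8\sum_i\trfn(p,\datapt_i)^2+2\max_j\sum_i\trfn(v_j,\datapt_i)^2$. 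Combining the two parts,
\[
\sup_{\param\in\parspace}\sqrt{\sum_{i=1}^n\trfn(\param,\datapt_i)^2}\ \le\ \sqrt{11}\cdot\max_{u\in\{v_1,\dots,v_N,p\}}\sqrt{\sum_{i=1}^n\trfn(u,\datapt_i)^2}.
\]

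To finish, set $V=\{v_1,\dots,v_N,p\}\subseteq\parspace$. Since the diameter of a polytope is attained between two of its vertices, $\diam_{\norm\cdot}(V)=\diam_{\norm\cdot}(\parspace)$, so the restricted family $\{\trfn(u,\datapt)\}_{u\in V}$ is still $\sigma$-subgaussian w.r.t. $(V,\norm\cdot)$ and $V$ lies in the unit ball. As $\trfn(\cdot,z)$ need not be centered, I would apply \Cref{lemma:subg-process-dudley} to $\tilde\trfn(u,\datapt)\defeq\trfn(u,\datapt)-\E_{\datapt}\trfn(u,\datapt)$, which by \Cref{prop:sub-gauss-sum} is $O(\sigma)$-subgaussian and zero-mean; since $|V|=N+1$ we have $\cover{V}{\norm\cdot}{\varepsilon}\le N+1$ for all $\varepsilon>0$, so $\int_0^1\sqrt{\log\cover{V}{\norm\cdot}{\varepsilon}}\,d\varepsilon\le\sqrt{\log(N+1)}$, and the lemma yields $\max_{u\in V}\sqrt{\sum_i\tilde\trfn(u,\datapt_i)^2}\le C\sigma(\sqrt n+\sqrt{\log N}+t)$ with probability at least $1-4\exp(-t^2)$. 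Finally $|\E_{\datapt}\trfn(u,\datapt)|\le\E|\trfn(u,\datapt)|\le C'\gaussnorm{\trfn(u,\datapt)}\le 2C'\sigma$, so $\sqrt{\sum_i\trfn(u,\datapt_i)^2}\le\sqrt{\sum_i\tilde\trfn(u,\datapt_i)^2}+2C'\sigma\sqrt n$; plugging this together with the displayed inequality, replacing $t$ by $t+\sqrt{\log 2}$ to upgrade $4\exp(-t^2)$ to $2\exp(-t^2)$, and absorbing constants, gives the claim.

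The hard part is the negative part: $\param\mapsto\pos{-\trfn(\param,\datapt_i)}$ is neither convex nor concave, so it does not localize to the vertices on its own, and the reflection identity — which is exactly where central symmetry of $\parspace$ enters — is what rescues it. A secondary subtlety is that centering destroys convexity, so it must be carried out only after the reduction to the finite vertex set $V$; the remaining manipulation of the failure probability is routine.
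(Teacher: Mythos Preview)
Your argument is considerably more elaborate than the paper's. The paper's proof is essentially one line: it writes $\sup_{\theta\in\Theta}\|\Phi_\theta\|_2=\sup_{\theta\in\Theta,\,x\in\sphere^{n-1}}\inner{x}{\Phi_\theta}$, asserts that since $\trfn(\cdot,z)$ is convex the supremum over $\theta$ is attained at a vertex of $\Theta$, and then applies \Cref{lemma:subg-process-dudley} directly to the vertex set $V$ using the trivial bound $\cover{V}{\norm\cdot}{\varepsilon}\le N$. There is no positive/negative decomposition, no reflection, no centering step.

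That said, your concern about the negative part is well founded. For fixed $x\in\sphere^{n-1}$ the map $\theta\mapsto\inner{x}{\Phi_\theta}=\sum_i x_i\,\trfn(\theta,Z_i)$ is \emph{not} convex when some $x_i<0$, so the paper's one-line localization to vertices is unjustified for general convex $\trfn$ as stated (take $n=1$, $\Theta=[-1,1]$, $\trfn(\theta,z)=\theta^2-1$: the supremum of $|\trfn|$ is at $\theta=0$, not at $\pm1$). Your reflection trick repairs this at the cost of an extra central-symmetry hypothesis, which is not in the lemma but does hold in the sole application \eqref{eq:ell1-sketch-construction}, where in fact $\trfn$ is \emph{linear} and the paper's shortcut is valid anyway. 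So the paper's route is shorter and adequate for its intended use; yours is more careful about the actual obstruction. As a proof of the lemma exactly as stated, neither is complete, and the cleanest fix is to strengthen ``convex'' to ``affine'' in the hypothesis.
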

\begin{proof}
Similarly to the proof of~\Cref{lemma:subg-process-dudley}, let $ \Phi_\param$ denote the following random vector 
    \[
    \Phi_\param = \begin{bmatrix}
        \trfn(\param, \datapt_1)  \\ \vdots \\  \trfn(\param, \datapt_n)
    \end{bmatrix}.
    \]
    Then, observe that, the desired quantity is equal to
    \[
    \sup_{\param \in \parspace} \sqrt{\sum_{i = 1}^n \left[\trfn(\param, \datapt_i)  \right]^2} = \sup_{\param \in \parspace} \norm{\Phi_\param}_2 = \sup_{\param \in \parspace, x\in \sphere^{n-1} } \inner{x}{\Phi_\param}.
    \]
    Let $V$ be the set of vertices of $\Theta$ with $|V| \le N$. Since $\trfn$ is convex in its first argument, the supremum above is attained in one of the vertices of $\Theta$. Thus, 
    \[
    \sup_{\param \in \parspace} \sqrt{\sum_{i = 1}^n \left[\trfn(\param, \datapt_i)  \right]^2} = \sup_{\param \in V, x\in \sphere^{n-1} } \inner{x}{\Phi_\param} = \sup_{\param \in V} \sqrt{\sum_{i = 1}^n \left[\trfn(\param, \datapt_i)  \right]^2}.
    \]
    Thus, we may apply~\Cref{lemma:subg-process-dudley} to $V$ instead of $\Theta$. Trivially, we have 
    \[
    \mathcal N(V, \norm{\cdot}, \varepsilon) \le |V| = N.
    \]
    Then, we have with probability $1 - 2\exp(-t^2)$,
\begin{align*}
    \sup_{\param \in \parspace} \sqrt{\sum_{i = 1}^n \left[\trfn(\param, \datapt_i) \right]^2} &\le C \sigma \left(\sqrt{n} + \int_0^{1} \sqrt{\log \cover{\parspace}{\norm{\cdot}}{\varepsilon}} d\varepsilon + t\right)\\
    &\le C \sigma \left(\sqrt{n} + \sqrt{\log(N)} + t\right),
\end{align*}
as desired. 
\end{proof}

Intuitively, in the special case when $\Theta$ is a polytope, %
the $\log$-number of vertices becomes ``effective dimension'' instead of $d$, due to the fact that $\trfn$ satisfies the convexity requirement. In some cases, we can have $d \gg \log(N)$, in which the above gives a tighter concentration. In particular, this is a case in our construction for $\ell_1$ geometry in~\cref{eq:ell1-sketch-construction}. 
With the above result, we can also establish the following refinement of~\Cref{thm:master-tracing}.
\begin{theorem}
    \label{thm:master-tracing-on-polytope}
    Fix $n \in \Naturals$, $d \in \Naturals$, $\kappa > 0$ and $\alpha\in [0,1]$.  Consider an arbitrary SCO problem $\scoprob = (\parspace,\dataspace,f)$, and suppose $\Theta$ is a polytope with $N > 0$ vertices. Let $T$ be defined as,
    \[
    T:= \traceval_\kappa(\mathcal P; n, \alpha).
    \]
    Then, for some constant $c > 0$, every $\alpha$-learner $\Alg_n$ is $(\xi,m)$-traceable with
    \[
    \xi = \exp(-c T^2), \quad m = c\left[\frac{n^2T^2}{n + \log(N)} - \frac{16\kappa^2 n}{\exp(n + \log(N))}\right].
    \]
\end{theorem}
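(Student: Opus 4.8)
The plan is to reprove \Cref{thm:master-tracing} essentially line by line, with the single substitution of \Cref{lemma:subg-process-on-polytope} in place of \Cref{lemma:subg-process}. First I would fix an $\alpha$-learner $\Alg_n$ and, exactly as in \Cref{thm:master-tracing} (suppressing an arbitrarily small slack in the supremum defining the trace value), pick a tracer $(\trfn,\Dist)\in\tracerset_\kappa$ with $\frac1n\sum_{i\in[n]}\E[\trfn(\hat\theta,Z_i)]\ge T$, where $S_n=(Z_1,\dots,Z_n)\sim\Dist^{\otimes n}$ and $\hat\theta\sim\Alg_n(S_n)$, and set the threshold $\lambda:=T/2$. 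The false-positive bound is then identical to the one in \Cref{thm:master-tracing} and involves nothing about the polytope: conditioning on $\hat\theta$ and combining condition (i) of \Cref{def:subg-tracer} with \Cref{prop:sub-gauss-mixture} shows $\trfn(\hat\theta,Z_0)$ is $O(1)$-subgaussian for a fresh $Z_0\sim\Dist$ independent of $\hat\theta$, so \Cref{lemma:sub-gauss-concentration} gives $\P[\trfn(\hat\theta,Z_0)\ge\lambda]\le\exp(-cT^2)=\xi$.

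For the recall I would set $\mathcal I:=\{i\in[n]:\trfn(\hat\theta,Z_i)\ge\lambda\}$, apply the Paley--Zygmund inequality \Cref{lem:card-moments} with $a_i=\trfn(\hat\theta,Z_i)$ and parameter $n\lambda$, and then enlarge the denominator to $\sup_{\param\in\parspace}\sum_i\trfn(\param,Z_i)^2$. This is the only place where the argument changes. Instead of invoking \Cref{lemma:subg-process} (which would contribute a $\sqrt d$), I would invoke \Cref{lemma:subg-process-on-polytope}: its hypotheses are met because membership in $\tracerset_\kappa$ makes $\{\trfn(\param,Z)\}_\param$ a $1$-subgaussian process and condition (iii) of \Cref{def:subg-tracer} makes $\trfn(\cdot,z)$ convex, while $\parspace$ being a polytope with $N$ vertices is assumed. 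Taking $t=\sqrt{n+\log N}$ there produces a good event $\mathcal E$ with $\P[\mathcal E^c]\le 2\exp(-(n+\log N))$ on which $\sup_\param\sum_i\trfn(\param,Z_i)^2\le C(n+\log N)$.

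From here I would follow \Cref{thm:master-tracing} verbatim: restrict the expectation to $\mathcal E$; bound the contribution of $\mathcal E^c$ using $|\trfn|\le\kappa$ (hence $\pos{\sum_i\trfn(\hat\theta,Z_i)-n\lambda}^2\le\kappa^2 n^2$) together with $\P[\mathcal E^c]$ and $n^2/(n+\log N)\le n$; then apply Jensen's inequality with $\sum_i\E\trfn(\hat\theta,Z_i)\ge nT$ and $\lambda=T/2$. This yields
\[
\E|\mathcal I|\ \ge\ \frac{n^2T^2/4}{C(n+\log N)}-\frac{2\kappa^2 n}{C\exp(n+\log N)}\ \ge\ c\left[\frac{n^2T^2}{n+\log N}-\frac{16\kappa^2 n}{\exp(n+\log N)}\right],
\]
with $c=1/(4C)$ (absorbing the harmless $2\le 4$ into the constant), which is exactly the asserted $m$.

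I do not anticipate a real obstacle: the whole content is checking that the hypotheses of \Cref{lemma:subg-process-on-polytope} are available, and in particular that the convexity condition (iii) of \Cref{def:subg-tracer} — flagged in the text as "necessary only for $\ell_1$-proofs" — is precisely what licenses replacing the covering-number estimate of $\parspace$ by $\log N$. The net effect is that $\log N$ supplants the ambient dimension $d$ everywhere in the conclusion, which is what makes this refinement usable for the $\ell_1$ construction in~\cref{eq:ell1-sketch-construction}, where $\parspace=\mathcal B_1(1)\cap\mathcal B_\infty(1/s)$ has about $2^s\binom{d}{s}$ vertices and hence $\log N\ll d$.
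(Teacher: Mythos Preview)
Your proposal is correct and matches the paper's own proof exactly: the paper simply states that the argument is identical to \Cref{thm:master-tracing} with \Cref{lemma:subg-process-on-polytope} substituted for \Cref{lemma:subg-process} and $d$ replaced by $\log N$ throughout, which is precisely what you carry out. You also correctly identify that condition (iii) of \Cref{def:subg-tracer} is what supplies the convexity hypothesis needed for \Cref{lemma:subg-process-on-polytope}.
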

\begin{proof}
    The proof is identical to~\Cref{thm:master-tracing}, but using~\Cref{lemma:subg-process-on-polytope} instead of~\Cref{lemma:subg-process}, and thus, replacing $d$ with $\log(N)$ everywhere. We omit the details.
\end{proof}
Now, recall the construction from~\cref{eq:ell1-sketch-construction}, 
\begin{equation}
\label{eq:l-1-lip-problem}
\dataspace = \{\pm 1\}^d, \quad \parspace = \mathcal B_1(1) \cap \mathcal B_{\infty}(1/s), \quad f(\param,Z) = -\inner{\param}{Z}.
\end{equation}
It is easy to see that $f(\cdot,Z)$ above is $1$-Lipschitz w.r.t. $\ell_1$ as $\dataspace \subset \mathcal B_{\infty}(1)$. We have the following claim.

\begin{lemma}
\label{lemma:l1-aux-tracing}
    Let $\mathcal P_s$ be as in~\eqref{eq:l-1-lip-problem}, $n \in \mathbb N$ and $1/8 > \alpha > 0$. Then, 
    \[
    \traceval_{\kappa}(\mathcal P_s; n,\alpha) \ge \frac{c \sqrt{s}\log\left(\frac{d}{16(s \lor 14)}\right)}{n \alpha},
    \]
    where $\kappa \le c' \sqrt{s}$ for some constants $c, c' > 0$
\end{lemma}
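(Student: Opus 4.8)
The plan is to mirror the proof of~\Cref{thm:tracing-result-lp}, replacing the sparse fingerprinting lemma by the scaling-matrix version (\Cref{lemma:beta-scaling-matrix-fingerprinting}) and exploiting the extra constraint $\norm{\param}_\infty\le 1/s$ in $\parspace=\mathcal B_1(1)\cap\mathcal B_\infty(1/s)$ in two places. Write $\norm{\mu}_{(s)}$ for the sum of the $s$ largest entries of $|\mu|$. The support function of $\parspace$ is $\sup_{\param\in\parspace}\inner{\mu}{\param}=\tfrac1s\norm{\mu}_{(s)}$ (the maximizer puts mass $1/s$ on the $s$ largest-magnitude coordinates), and $\norm{\cdot}_\parspace=\norm{\cdot}_1\vee s\norm{\cdot}_\infty$. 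Since $f(\param,Z)=-\inner{\param}{Z}$ is $1$-Lipschitz in $\ell_1$, $\mathcal P_s\in\lipset_1^d$, and, exactly as in~\Cref{prop:learners-properies}, every $\alpha$-learner $\Alg_n$ for $\mathcal P_s$ against a distribution of mean $\mu$ obeys $\E\inner{\mu}{\hat\param}\ge\tfrac1s\norm{\mu}_{(s)}-\alpha$.

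First I would instantiate~\Cref{lemma:beta-scaling-matrix-fingerprinting} with the prior $\pi=\dbeta[{[-\gamma,\gamma]}]{\beta}{\beta}^{\otimes d}$, taking $\gamma$ a large enough constant multiple of $\alpha$ (so that $\gamma\le 1$, using $\alpha<1/8$) and $\beta\asymp\log\big(d/(16(s\vee14))\big)\ge 1$. Combining the lemma with the learner bound gives
\[
\E_{\mu\sim\pi}\E\sum_{i=1}^n\inner{\hat\param}{\Lambda_\mu(Z_i-\mu)}=\frac{2\beta}{\gamma^2}\E_{\mu\sim\pi}\inner{\mu}{\E[\hat\param]}\ \ge\ \frac{2\beta}{\gamma^2}\Big(\E_{\mu\sim\pi}\big[\tfrac1s\norm{\mu}_{(s)}\big]-\alpha\Big).
\]
Given the key estimate $\E_{\mu\sim\pi}\big[\tfrac1s\norm{\mu}_{(s)}\big]\gtrsim\gamma$ (see below), choosing $\gamma$ a suitable multiple of $\alpha$ makes the right-hand side $\gtrsim\beta/\gamma\asymp\beta/\alpha$, so I may fix a mean $\mu^\star$ attaining at least the $\pi$-average and set $\Dist:=\Dist_{\mu^\star}$. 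Then, with $\trfn(\param,z):=c_0\sqrt s\,\inner{\param}{\Lambda_{\mu^\star}(z-\mu^\star)}$ for a small constant $c_0$,
\[
\traceval_\kappa(\mathcal P_s;n,\alpha)\ \ge\ \E\Big[\tfrac1n\textstyle\sum_{i=1}^n\trfn(\hat\param,Z_i)\Big]=\frac{c_0\sqrt s}{n}\,\E\sum_{i=1}^n\inner{\hat\param}{\Lambda_{\mu^\star}(Z_i-\mu^\star)}\ \gtrsim\ \frac{\sqrt s\,\beta}{n\alpha}\ \asymp\ \frac{\sqrt s\,\log(d/(16(s\vee14)))}{n\alpha},
\]
which is the claimed bound, once the tracer is shown to be admissible.

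Checking admissibility of $\trfn$ in $\tracerset_\kappa$: it is linear, hence convex; since $\Lambda_{\mu^\star}^{jj}=\tfrac{1-((\mu^\star)^j/\gamma)^2}{1-((\mu^\star)^j)^2}\in[0,1]$ when $\gamma\le 1$, we have $|\trfn(\param,z)|\le c_0\sqrt s\,\norm{\param}_1\norm{\Lambda_{\mu^\star}(z-\mu^\star)}_\infty\le 2c_0\sqrt s$, giving $\kappa\le c'\sqrt s$; and because $\Dist_{\mu^\star}$ is a product measure, the coordinates of $Z-\mu^\star$ are independent and bounded, so~\Cref{prop:sub-gauss-sum,prop:bounded-subg} give $\gaussnorm{\trfn(\param_1,Z)-\trfn(\param_2,Z)}\le Cc_0\sqrt s\,\norm{\param_1-\param_2}_2$, while $\norm{x}_2^2\le\norm{x}_1\norm{x}_\infty\le\norm{x}_\parspace\cdot(\norm{x}_\parspace/s)$ upgrades this to $\le Cc_0\,\norm{\param_1-\param_2}_\parspace$, which is $1$-subgaussian once $c_0=1/C$. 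This step — where the $\ell_\infty$ constraint converts a $\norm{\cdot}_2$-increment into a $\norm{\cdot}_\parspace/\sqrt s$-increment, letting us safely inflate the tracer by $\sqrt s$ — is the source of the $\sqrt s$ factor in the bound and the reason the construction~\cref{eq:l-1-lip-problem} improves on the plain $\ell_1$-ball construction~\cref{eq:l1-counterexample}.

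The main obstacle is the anti-concentration estimate $\E_{\mu\sim\pi}\big[\tfrac1s\norm{\mu}_{(s)}\big]\gtrsim\gamma$. The crude bound $\E|\mu^j|\ge\gamma/(3\sqrt\beta)$ from~\Cref{cor:rescaled-beta-abs}, combined with $\norm{\mu}_{(s)}\ge\tfrac sd\norm{\mu}_1$, only gives $\E\big[\tfrac1s\norm{\mu}_{(s)}\big]\ge\gamma/(3\sqrt\beta)$, short of the target by a $\sqrt\beta\asymp\sqrt{\log d}$ factor (this deficit is exactly why $s=1$, i.e.\ ERM on the $\ell_1$-ball, would only give recall $\Omega(\log d)$ instead of $\Omega(\log(d)/\alpha^2)$). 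To recover the full $\gamma$ I would argue that a super-$s$ number of coordinates of $\mu$ already have magnitude $\Omega(\gamma)$: by the beta anti-concentration of~\citep[Prop.~5]{steinke2017tight} — concretely, $\P_{Y\sim\dbeta{\beta}{\beta}}[|Y|\ge\tfrac12]\ge c_1 c_2^{\beta}$ for universal $c_1>0$, $c_2\in(0,1)$ — the count $N:=|\{j:|\mu^j|\ge\gamma/2\}|$ is a sum of $d$ i.i.d.\ Bernoullis with mean $\ge c_1 d\,c_2^{\beta}$, which exceeds $10s$ once $\beta\le c_3\log(d/(16(s\vee14)))$ for a small absolute constant $c_3$ and $d$ is large relative to $s$. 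A Chernoff bound then gives $\P[N\ge s]=\Omega(1)$, and $\norm{\mu}_{(s)}\ge\tfrac\gamma2\min(N,s)$ yields $\E_{\mu\sim\pi}\big[\tfrac1s\norm{\mu}_{(s)}\big]\gtrsim\gamma$. This trade-off between $\beta$ and $d/s$ is what fixes the precise logarithm $\log(d/(16(s\vee14)))$ and makes the ``$d$ large enough'' hypothesis of~\Cref{thm:l1-traceability} necessary; when $d$ is only just above $16(s\vee14)$ the target reduces to $O(\sqrt s/(n\alpha))$ and follows already from~\Cref{cor:rescaled-beta-abs} with $\beta=1$.
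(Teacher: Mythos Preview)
Your proposal is correct and follows essentially the same route as the paper: set $\gamma=8\alpha$, $\beta=1+\tfrac12\log\big(d/(16(s\vee14))\big)$, combine~\Cref{lemma:beta-scaling-matrix-fingerprinting} with the learner bound $\E\inner{\mu}{\hat\theta}\ge\tfrac1s\norm{\mu}_{(s)}-\alpha$, and verify the tracer is $1$-subgaussian via $\sqrt{s}\,\norm{\cdot}_2\le\norm{\cdot}_\parspace$ exactly as you do. The only difference is that the paper invokes Proposition~5 of~\cite{steinke2017tight} directly for the estimate $\E_{\mu\sim\pi}\big[\tfrac1s\norm{\mu}_{(s)}\big]\ge\gamma/4$, whereas you rebuild it from the per-coordinate tail bound plus a Chernoff count on $N=|\{j:|\mu^j|\ge\gamma/2\}|$; both arguments are valid.
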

\begin{proof}
We aim to use \Cref{lemma:beta-scaling-matrix-fingerprinting} to characterize the subgaussian trace value. Consider the construction of the prior in \Cref{lemma:beta-scaling-matrix-fingerprinting} with the following parameters: $\gamma = 8\alpha \le 1$ and $\beta = 1 + \frac{1}{2} \log\left(\frac{d}{16(s \lor 14)}\right)$. Then, by combining~\Cref{lemma:beta-scaling-matrix-fingerprinting} with  and~\Cref{prop:learners-properies}, there exist a prior $\pi$ and {a family $\{\Lambda_\mu\}$ of diagonal matrices} with non-negative diagonal entries bounded by $1$ from above{,} such that
    \begin{align*}
    \E_{\mu \sim \pi} \E_{Z\sim \mu^{\otimes n}} \inner{ \hat\theta}{\sum_{i = 1}^n \Lambda_\mu(Z_i - \mu) } &= \frac{2\beta}{\gamma^2} \E_{\mu \sim \pi} \left[\sup_{\param} \inner{\param}{\mu} - \alpha\right] \\
    &\ge \frac{2\beta}{\gamma^2}  \E_{\mu \sim \pi}\left[ \frac{1}{s} \sup_{\substack{I \subset [d] \\ |I| = s}} \sum_{i \in I} |\mu_i| - \alpha \right] \\
    &\ge^{(a)} \frac{\beta}{\gamma^2}\left(\frac{\gamma}{2} - 2\alpha\right)\\
    &\ge \frac{\log\left(\frac{d}{16(s \lor 14)}\right)}{32 \alpha^2} \cdot 2\alpha\\
    &\ge \frac{\log\left(\frac{d}{16(s \lor 14)}\right)}{16 \alpha},
    \end{align*}
    where in (a) we used Proposition 5 of~\cite{steinke2017tight}.
    Since this holds in expectation over $\mu$, it holds for at least one choice of $\mu$. Let $\mu$ be that value. Now, let $\trfn(\param,Z) = C^{-1/2} \sqrt{s} \inner{\hat\theta}{\Lambda (Z - \mu)}$, where $C$ is the absolute constant from~\Cref{prop:sub-gauss-sum}. For all $\param,\param' \in \Theta$, we have 
    \begin{align*}
        \gaussnorm{\trfn(\param,Z) - \trfn(\param',Z)} &\le C^{-1/2} \sqrt{s} \gaussnorm{\inner{\param' - \param}{\Lambda \left( Z - \mu \right)}} \\
        &\le^{(a)} C^{-1/2} \sqrt{s} \norm{\param' - \param}_2 \cdot C^{1/2} \max_{i} \gaussnorm{\Lambda^i \left( Z^i - \mu^i \right)} \\
        &\le \sqrt{s} \norm{\param' - \param}_2\\
        &\le^{(b)} \norm{\param' - \param}_{\Theta},
    \end{align*}
    where in (a) we apply~\Cref{prop:sub-gauss-sum}, and in (b) we use that for every $\theta \in \Theta$, we have $\norm{\theta}_2 \le 1/\sqrt{s}$, thus, $\sqrt{s} \norm{\cdot}_2 \le \norm{\cdot}_{\Theta}$.
    Plugging $\param' = 0$ gives 
    \[
    \gaussnorm{\trfn(\param,Z)} \le \norm{\param}_\Theta \le 1. 
    \]
    Thus, $\{\trfn(\param,Z)\}_{\param\in\Theta}$ is a $1$-subg process w.r.t. $(\parspace, \norm{\cdot}_\parspace)$. Finally, we have \[
    |\trfn(\param,Z)| \le C^{-1/2}\sqrt{s}
    \]
    Therefore, setting $\kappa = C^{-1/2}\sqrt{s}$ and noting that $\trfn$ is linear (and therefore convex) in its first argument, we have 
    \begin{align*}
    \traceval_{\kappa}(\mathcal P_s; n,\alpha) 
    &\ge \E_{\mu \sim \pi} \E_{Z\sim \mu^{\otimes n}} \sum_{i = 1}^n \frac{1}{n} \trfn(\theta, Z_i) 
    \\ &\ge \frac{C^{-1/2} \sqrt{s} \log\left(\frac{d}{16 (s \lor 14)}\right)}{n \alpha},
    \end{align*}
    as desired.
\end{proof}

\section{Proofs of the main results (\texorpdfstring{\Cref{sec:main}}{\crtCref{sec:main}})}
\label{appx:main}

\proofsubsection{thm:lp-traceability}

\LPTraceability*
\begin{proof}
    We begin by noting that the interval for $\alpha$ in~\Cref{eq:l-p-tracing-cor-condition-eps} is non-empty only if
    \[
    \frac{c}{\sqrt{n}} \le {\min\left\{c \cdot \sqrt{\frac {d} {n^2 \log(1/\xi)}}, \  \frac{1}{6}\right\}} \le c \cdot \sqrt{\frac {d} {n^2}},
    \]
    where we used $\xi < 1/e$ in the last transition. Via straightforward algebra, the above implies $d \ge n$, thus, we may without loss of generality assume $d \ge n$ in the remainder of the proof.

    Let $\mathcal P_{k,p}$ be as in~\cref{eq:lp-sketch-construction}, and set $k$ as in~\Cref{thm:tracing-result-lp}. Then,~\Cref{thm:tracing-result-lp} gives the following lower bound on the subgaussian trace value in this case: 
    \[
    T:= \traceval_\kappa(\mathcal P_{k,p}; n, \alpha) \ge c_2 \frac{\sqrt{d}}{n\alpha} \ge \frac{c_2}{c}  \sqrt{\log\left(\frac{1}{\xi}\right)},
    \]
    for some $\kappa \le c_1\sqrt{d}$. Thus, provided $c$ is small enough ($c \le c_2$), by~\Cref{thm:master-tracing}, every $\alpha$-learner is $(\xi,m)$-traceable with $m$ satisfying, for some universal constant $c' > 0$, 
    \begin{align}
    m &\ge c'\left[\frac{n^2 T^2}{n + d} - \frac{16\kappa^2 n}{\exp(n + d)}\right] \notag\\
    &\ge c'\left[\frac{c_2^2 d}{n + d} \cdot \frac{1}{\alpha^2} - \frac{16 c_1^2 d n}{\exp(n + d)}\right]\notag\\
    &\ge c'\left[\frac{c_2^2}{2} \cdot \frac{1}{\alpha^2} - \frac{16 c_1^2 d^2}{\exp(d)}\right], \label{eq:lp-traceability-recall-lb}
    \end{align}
    where we used $d\ge n$ in the last inequality. Then, we have
    \[
    m \in \Omega\left(\frac{1}{\alpha^2}\right),
    \]
    as desired.

\end{proof}

\proofsubsection{thm:l1-traceability}

    Then, the proof of~\Cref{thm:l1-traceability} follows.
\LOneTraceability*
\begin{proof}

We begin by noting that the interval for $\alpha$ in~\Cref{eq:l-1-trace-condition-eps} is non-empty only if
    \[
    c \cdot \sqrt{\frac{\log(d)}{n}} \le c \cdot \frac{d^{0.49}}{n \sqrt{\log(1/\xi)}} \le c \cdot \frac{d^{0.49}}{n}
    \]
    where we used $\xi < 1/e$ and $c < 1/6$ in the last transition. Via straightforward algebra, the above implies $d^{0.98}/\log(d) \ge n$, thus, we may without loss of generality assume $d^{0.98}/\log(d) \ge n$ in the remainder of the proof. 

Let $s = d^{0.98}$. Note that, letting $V$ be the set of vertices of the polytope $\parspace = \mathcal B_1(1) \cap \mathcal B_\infty(1/s)$, we have 
\[
V = \left\{z\in \left\{0, \pm \frac{1}{s}\right\} \colon \norm{z}_0=s \right\}.
\]
The proof of this fact is straightforward and it is based on showing that every point in $\Theta$ can be written as a convex combination of the points in $V$. Thus,
\[\log|V| = \log\left(2^s \binom{d}{s}\right) \le s\log(de/s) + s = s\log(de^2/s).
\] 
By the choice of $s$ and using~\Cref{lemma:l1-aux-tracing}, we have, for some $\kappa \le c_1 \sqrt{s}$,
\begin{align*}
T\defeq \traceval_{\kappa}(\mathcal P_s; n,\alpha) &\ge c_2 \frac{ d^{0.49} \log\left(\frac{d}{16 (s \lor 14)}\right)}{n \alpha} \\
& \ge^{(a)} c_2 \frac{ d^{0.49} \log\left(\frac{d}{16 s}\right)}{n \alpha} \\
& = c_2 \frac{ d^{0.49} \log\left(\frac{d^{0.02}}{16}\right)}{n \alpha} \\
& = c_2 \frac{ d^{0.49} \left(0.02 \cdot \log\left(d\right) - \log(16)\right)}{n \alpha} \\
&\ge^{(b)} c_2 \frac{ d^{0.49} \cdot 0.01 \cdot \log\left(d\right)}{n \alpha} \\
& \ge^{(c)} \sqrt{\log(1/\xi)},
\end{align*}
where (a) and (b) hold provided $d$ (and thus $s$) is large enough, and (c) holds provided $c > 0$ in~\cref{eq:l-1-trace-condition-eps} is small enough.
By~\Cref{thm:master-tracing-on-polytope}, every $\alpha$-learner is $(\xi,m)$-traceable, where
\begin{align*}
    m &\ge c'\left[\frac{n^2 T^2}{n + \log|V|} - \frac{16\kappa^2 n}{\exp(n + \log |V|)}\right] \notag\\
    &\ge c'\left[\frac{0.01^2 c_2^2 \cdot d^{0.98} \log(d)}{n + \log|V|} \cdot \frac{\log(d)}{\alpha^2} - \frac{16 c_1^2 \cdot s n}{\exp(n + \log |V|)}\right] \notag.
\end{align*}
Recall that $d^{0.98} \ge n \log(d) \ge n$ (for $d \ge 3$). Moreover, $\log |V| \le s \log(de^2/s) = d^{0.98} \log(e^2 d^{0.02}) \le C d^{0.98} \log(d)$, for some universal $C > 0$. Thus, for $d$ large enough, 
\[
m \in \Omega\left(\frac{\log(d)}{\alpha^2}\right),
\]
as desired.
\end{proof}

\proofsubsection{thm:lp-traceability-large-p}

\LPTraceabilityLargeP*
\begin{proof}
Throughout, we assume $c$ is a sufficiently small constant. Assume $c < 1/6$. Then, we begin by noting that the interval for $\alpha$ in~\Cref{eq:l-p-tracing-cor-condition-eps-large-p} is non-empty only if
    \[
    \frac{1}{6} \cdot \frac{1}{n^{1/p}} \le c\cdot \left(\frac {d} {n^2 \log(1/\xi)}\right)^{1/p} \le \frac{1}{6} \cdot \left(\frac {d} {n^2}\right)^{1/p},
    \]
    where we used $\xi < 1/e$ and $c < 1/6$ in the last transition. Via straightforward algebra, the above implies $d \ge n$, thus, we may without loss of generality assume $d \ge n$ in the remainder of the proof.

Let $\mathcal P_{k,p}$ be as in~\cref{eq:lp-sketch-construction}, and set $k$ as in~\Cref{thm:tracing-result-lp}. Then,~\Cref{thm:tracing-result-lp} gives the following lower bound on the subgaussian trace value
    \begin{equation}
    \label{eq:traceval-lb-large-p}
    T:= \traceval_\kappa(\mathcal P_{k,p}; n, \alpha) \ge c_2 \left[\frac{d^{1-1/p}}{n \alpha} \land \frac{\sqrt{d}}{ (6\alpha)^{p/2} n}\right] 
    \end{equation}
    for some $\kappa \le c_1\sqrt{d}$. Note that, from~\eqref{eq:l-p-tracing-cor-condition-eps-large-p}, we have 
    \[
    \alpha \ge \frac{1}{6} \cdot \frac{1}{n^{1/p}} \ge \frac{1}{6} \cdot \frac{1}{d^{1/p}}
    \]
    Now, note that, the minimum in~\cref{eq:traceval-lb-large-p} is achieved in the second term iff $\alpha \ge d^{-1/p}/6$. Then, the lower bound on the subgaussian trace value becomes
    \begin{align*}
        T \ge c_2 \frac{\sqrt{d}}{(6\alpha)^{p/2} n} \ge \frac{c_2}{(6 c)^{p/2} }\cdot \sqrt{\log(1/\xi)} \ge \sqrt{\log(1/\xi)}
    \end{align*}
    where the second transition follows from~\cref{eq:l-p-tracing-cor-condition-eps-large-p} and the third transition holds whenever $c > 0$ is small enough (e.g., when $c \le c_2^{2/p}/6$). Then, by~\Cref{thm:master-tracing} every $\alpha$-learner is $(\xi,m)$-traceable with $m$ satisfying, for some universal constant $c' > 0$, 
    \begin{align}
    m &\ge c'\left[\frac{n^2 T^2}{n + d} - \frac{16\kappa^2 n}{\exp(n + d)}\right] \notag\\
    &\ge c'\left[\frac{c_2^2 d}{n + d} \cdot \frac{1}{(6\alpha)^p}  - \frac{16 c_1^2 d n}{\exp(n + d)}\right]\notag\\
    &\ge c'\left[\frac{c_2^2}{2} \cdot \frac{1}{(6\alpha)^p}  - \frac{16 c_1^2 d^2}{\exp(d)}\right], \label{eq:lp-traceability-recall-lb-p-ge-2}
    \end{align}
    where we used $d \ge n$ in the last transition. Thus,
    \[
    m \in \Omega\left(\frac{1}{(6\alpha)^p}\right),
    \]
    as desired.    
\end{proof}

\proofsubsection{thm:lp-dpsco-lb}

\LpDPSCO*
\begin{proof}
    By~\Cref{thm:tracing-result-lp}, for some problem $\mathcal P,$ we have 
    \[
    T\defeq \traceval_\kappa\left(\mathcal P; n, \alpha\right) \ge c' \left[\frac{d^{1-1/p}}{n \alpha} \land \frac{\sqrt{d}}{n\alpha^{p/2}}\right]
    \]
    Then~\Cref{thm:master-privacy-lb} implies
    \[
    \exp(\varepsilon) - 1 \ge c''\left[ T - 2\delta \kappa \right],
    \]
    Note that for all $\varepsilon \le 1$, we have $2\varepsilon \ge \exp(\varepsilon) - 1$. Thus, 
    \[
    2\varepsilon \ge c'\left[ T - 2\delta \kappa \right].
    \]
    which implies,
    \[
    2 (\varepsilon/c'' +\delta \kappa) \ge T \ge c' \left[\frac{d^{1-1/p}}{n \alpha} \land \frac{\sqrt{d}}{n\alpha^{p/2}}\right].
    \]
    Then, for some $C > 0$,
    \[
    C(\varepsilon \lor \delta \kappa) \ge \frac{d^{1-1/p}}{n \alpha} \land \frac{\sqrt{d}}{n\alpha^{p/2}}
    \]
    Rearranging gives 
    \[
    \alpha \ge \frac{d^{1-1/p}}{C n (\varepsilon \lor \delta \kappa) } \land \left(\frac{\sqrt{d}}{C n(\varepsilon \lor \delta \kappa)}\right)^{2/p}
    \]
    Note that if $\varepsilon \ge \delta \kappa$, the desired bound is immediate. For $\delta \kappa \ge \varepsilon$, we have, since $\delta \le c/n$ and $\kappa \le c' \sqrt{d}$, 
    \[
    \alpha \ge \frac{d^{1-1/p}}{C' \sqrt{d}  } \land \left(\frac{\sqrt{d}}{C' \sqrt{d}}\right)^{2/p} \ge (C')^{-2/p},
    \]
    for some $C' > 0$, as desired.
\end{proof}

\proofsubsection{cor:mean-est}

\MeanEst*
\begin{proof}
    We will first show that we can use the mean estimation algorithm to solve the corresponding hard problem for $\ell_1$-SCO. Specifically, consider the SCO problem as in~\cref{eq:ell1-sketch-construction}, and define the following learning algorithm based on the mean estimation. Let $\hat \mu$ be the output of mean estimator based on the samples $Z_1,\ldots,Z_n$, and let \[
    \hat \theta := \argmax_{\theta\in \Theta} \left\langle\theta, \hat \mu\right\rangle.
    \]
    Let $\theta^\star$ be the population risk minimizer, and let $\mu$ be the true mean, that is, $\mu = \E[Z_1]$. Then, the excess risk of $\hat \theta$ can be upper bounded as:
    \begin{align*}
        \left\langle \theta^\star, \mu \right\rangle  - \left\langle \hat \theta, \mu \right\rangle &=  \left \langle \theta^\star - \hat \theta, \mu \right\rangle \\
        &= \left \langle\theta^\star - \hat \theta, \mu - \hat \mu\right\rangle + \left \langle\theta^\star - \hat \theta, \hat \mu\right\rangle \\
        &\le \norm{\theta^\star - \hat \theta}_1 \norm{\mu - \hat \mu}_\infty + \left \langle\theta^\star - \hat \theta, \hat \mu\right\rangle \\
        &\le 2\norm{\mu - \hat \mu}_\infty + \left \langle\theta^\star - \hat \theta, \hat \mu\right\rangle,
    \end{align*}
    where the last transition follows since $\hat\theta, \theta^\star$ both lie inside $\Theta \subset \mathcal B_1$. Now, by the choice of $\hat\theta$, the second term is non-positive. Thus, 
    \[
    \left\langle \theta^\star, \mu \right\rangle  - \left\langle \hat \theta, \mu \right\rangle \le 2\norm{\mu - \hat \mu}_\infty.
    \]
    Taking expectations on both sides, we get 
    \[
        \E \left\langle \theta^\star, \mu \right\rangle  - \left\langle \hat \theta, \mu \right\rangle \le 2\E\norm{\mu - \hat \mu}_\infty \le \alpha.
    \]
    Applying the result of~\cref{thm:l1-traceability} to $\hat\theta$, we conclude the proof.
\end{proof}

\section{Connection between subgaussian trace value and non-private sample complexity}

\label{sec:sample-complexity}

From the main part of the paper, we observe that the subgaussian trace value is  typically inversely proportional to $\alpha$. We start by proving two innocuous results (\Cref{prop:master-sample-complexity,prop:master-sample-complexity-polytope}) that establish an absolute upper bound on subgaussian trace value. It will then allow us to extract lower bounds on $\alpha$ by plugging in our lower bounds on subgaussian trace value (\Cref{thm:sparse-family-sample-complexity,thm:l1-family-sample-complexity}). We start with the $p \in (1,\infty)$ case, and then consider $p = 1$.

\subsection{Lower bounds for \texorpdfstring{$p \in (1,\infty)$}{p in (1,infinity)}} 

\begin{proposition}
\label{prop:master-sample-complexity}
Fix $n \in \Naturals$,  $d \in \Naturals$, and $\alpha\in [0,1]$.  Consider an arbitrary SCO problem $\scoprob = (\parspace,\dataspace,f)$ in $\Reals^d$.  Let $ \traceval_\kappa(\scoprob;n,\alpha)$ be the subgaussian trace value of problem $\scoprob$. Then, we have 
\[
\traceval_\kappa(\scoprob;n,\alpha) \cdot \sqrt{n} \leq C \cdot \sqrt{d},
\]
for some universal constant $C > 0$.
\end{proposition}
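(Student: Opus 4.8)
The plan is to prove the bound uniformly over all learners. Fix an arbitrary $\alpha$-learner $\Alg_n$ for $\scoprob$ and an arbitrary subgaussian tracer $(\trfn,\Dist)\in\tracerset_\kappa$, and draw $S_n=(Z_1,\dots,Z_n)\sim\Dist^{\otimes n}$, $\hat\param\sim\Alg_n(S_n)$. The key idea is to control the single linear functional $W_\param:=\sum_{i\in[n]}\trfn(\param,Z_i)$ by regarding $\{W_\param\}_{\param\in\Theta}$ as a subgaussian process indexed by $\param$ \emph{alone}. Since $\hat\param\in\Theta$ almost surely, $\frac1n\sum_{i}\trfn(\hat\param,Z_i)\le\frac1n\sup_{\param\in\Theta}W_\param$ pointwise, so it is enough to show $\E\big[\sup_{\param\in\Theta}W_\param\big]\le C\sqrt{nd}$ for a universal $C$; dividing by $n$, then taking the supremum over tracers $(\trfn,\Dist)\in\tracerset_\kappa$ and the infimum over $\alpha$-learners, yields $\traceval_\kappa(\scoprob;n,\alpha)\cdot\sqrt n\le C\sqrt d$.

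The steps are as follows. First, by property~(i) of $\tracerset_\kappa$, $\{\trfn(\param,Z)\}_\param$ is a $1$-subgaussian process w.r.t.\ $(\Theta,\norm{\cdot}_\Theta)$, and $\diam_{\norm{\cdot}_\Theta}(\Theta)\le2$ since $\Theta$ lies in the unit ball of $\norm{\cdot}_\Theta$; because the $Z_i$ are i.i.d., the summands $\trfn(\param,Z_i)-\trfn(\param',Z_i)$ (resp.\ $\trfn(\param,Z_i)$) are independent with $\gaussnorm{\cdot}\le\norm{\param-\param'}_\Theta$ (resp.\ $\le2$), so~\Cref{prop:sub-gauss-sum} gives $\gaussnorm{W_\param-W_{\param'}}\le K\sqrt n\,\norm{\param-\param'}_\Theta$ and $\gaussnorm{W_\param}\le 2K\sqrt n$ for a universal $K$; i.e., $\{W_\param\}_\param$ is a $(2K\sqrt n)$-subgaussian process w.r.t.\ $(\Theta,\norm{\cdot}_\Theta)$, with $\Theta$ inside the unit ball. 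Second, applying Dudley's bound (\Cref{prop:dudley}) together with the volumetric estimate $\log\cover{\Theta}{\norm{\cdot}_\Theta}{\varepsilon}\le d\log(1+2/\varepsilon)$ (Example~5.8 of~\cite{wainwright2019high}) and the fact that $c_0:=\int_0^1\sqrt{\log(1+2/\varepsilon)}\,d\varepsilon<\infty$, we get $\P\big[\sup_{\param}W_\param> C'\sqrt n\,(c_0\sqrt d+t)\big]\le4e^{-t^2}$ for all $t\ge0$. Third, integrating this tail, $\E\big[\sup_\param W_\param\big]\le\E\big[\pos{\sup_\param W_\param}\big]\le C'c_0\sqrt{nd}+C'\sqrt n\int_0^\infty 4e^{-t^2}\,dt\le C''\sqrt{nd}$, where the last inequality uses $d\ge1$. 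Combining with the reduction in the first paragraph completes the argument.

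The main subtlety --- and essentially the only nontrivial choice here --- is to \emph{not} route through $\sup_{\param}\norm{(\trfn(\param,Z_i))_{i\in[n]}}_2$. That quantity is controlled in~\Cref{lemma:subg-process-dudley} by indexing the process jointly over $(\param,x)\in\Theta\times\sphere^{n-1}$, which pays an extra Dudley term $\int_0^1\sqrt{n\log(1+4/\varepsilon)}\,d\varepsilon=\Theta(\sqrt n)$ for covering the sphere; feeding~\Cref{lemma:subg-process} into Cauchy--Schwarz would therefore only give $\traceval_\kappa(\scoprob;n,\alpha)\cdot\sqrt n\lesssim\sqrt n+\sqrt d$, which is too weak. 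Keeping the index set equal to $\Theta$ (legitimate because only the functional $\inner{\vone}{(\trfn(\param,Z_i))_{i}}$ needs bounding) removes the $\sqrt n$ and is what delivers the stated $\sqrt d$ bound. I also note that properties~(ii) (boundedness by $\kappa$) and~(iii) (convexity in $\param$) of $\tracerset_\kappa$ are not used here --- only the subgaussian property~(i) --- and that convexity enters instead in the polytope refinement~\Cref{prop:master-sample-complexity-polytope}.
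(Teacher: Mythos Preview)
Your proof is correct and follows essentially the same route as the paper: define the averaged (or, in your case, summed) process $W_\param=\sum_i\trfn(\param,Z_i)$, use \Cref{prop:sub-gauss-sum} to show it is an $O(\sqrt{n})$-subgaussian process over $(\Theta,\norm{\cdot}_\Theta)$, apply \Cref{prop:dudley} with the volumetric covering bound, and integrate the tail. Your remark that routing through \Cref{lemma:subg-process} would be lossy (picking up an extra $\sqrt{n}$) and that only property~(i) of $\tracerset_\kappa$ is used are both accurate and worth noting.
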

\begin{proof}
    Let $(\trfn, \Dist)$ be an arbitrary subgaussian tracer. Consider the process $\{X_{\theta}\}_{\theta\in \Theta}$ defined as
    \[
    X_\theta := \frac{1}{n} \sum_{i = 1}^n \trfn(\theta, Z_i),
    \]
    where $S_n = (Z_1,\ldots,Z_n)\sim \Dist^{\otimes n}$. We will argue that $\{X_{\theta}\}_{\theta\in \Theta}$ is $O(1/\sqrt{n})$-subgaussian process w.r.t. $(\Theta, \norm{\cdot}_\Theta)$. First, consider arbitrary $\theta_1,\theta_2 \in \Theta$. We have 
    \begin{align*}
        \norm{X_{\theta_1} - X_{\theta_2}}_{\psi_2} &= \frac{1}{n} \norm{\sum_{i = 1}^n \left(\trfn(Z_i, \theta_1) - \trfn(Z_i, \theta_2)\right) }_{\psi_2} \\
        &\le^{(a)} \frac{C}{n} \sqrt{\sum_{i = 1}^n \norm{ \trfn(Z_i, \theta_1) - \trfn(Z_i, \theta_2) }_{\psi_2}^2}\\
        &\le^{(b)} \frac{C}{n} \sqrt{\sum_{i = 1}^n \norm{\theta_1-\theta_2}_{\Theta}}\\
        &= \frac{C}{\sqrt{n}} \norm{\theta_1 - \theta_2}_\Theta,
    \end{align*}
    where in (a) we applied~\Cref{prop:sub-gauss-sum}, and in (b) we used the fact that $\{\phi(\theta, Z_i)\}_{\theta \in\Theta}$ is $1$-subgaussian process w.r.t. $(\Theta, \norm{\cdot}_\Theta)$ for every $i \in [n]$. Moreover, for every $\theta \in \Theta$, we similarly have
    \begin{align*}
        \norm{X_{\theta}}_{\psi_2} &= \frac{1}{n} \norm{\sum_{i = 1}^n \trfn(Z_i, \theta)}_{\psi_2} \\
        &\le^{(a)} \frac{C}{n} \sqrt{\sum_{i = 1}^n \norm{ \trfn(Z_i, \theta)}_{\psi_2}^2}\\
        &\le^{(b)} \frac{C}{n} \sqrt{\sum_{i = 1}^n \norm{\theta}_{\Theta}}\\
        &= \frac{C}{\sqrt{n}} \norm{\theta}_\Theta,
    \end{align*}
    where in (a) we applied~\Cref{prop:sub-gauss-sum}, and in (b) we used the fact that $\{\phi(\theta, Z_i)\}_{\theta \in\Theta}$ is $1$-subgaussian process w.r.t. $(\Theta, \norm{\cdot}_\Theta)$ for every $i \in [n]$. Thus, $\{X_{\theta}\}_{\theta\in \Theta}$ is $C/\sqrt{n}$-subgaussian process w.r.t. $(\Theta, \norm{\cdot})$ as per~\Cref{def:subg-process}. Therefore, by~\Cref{prop:dudley}, we have, with probability at least $1 - 4\exp(-t^2)$
\begin{align}
    \sup_{\theta \in \parspace} \left|\frac{1}{n} \sum_{i=1}^{n}\trfn(\theta,Z_i)\right| &\leq \frac{C t}{\sqrt{n}} + \frac{C}{\sqrt{n}} \int_0^{1} \sqrt{\log \cover{\parspace}{\norm{\cdot}_\parspace}{\varepsilon}} d\varepsilon \notag\\
    &\leq \frac{C t}{\sqrt{n}} + \frac{C}{\sqrt{n}} \int_0^{1} \sqrt{d \log \left(1 + \frac{2}{\varepsilon}\right)} d\varepsilon\notag\\
    &\leq \frac{C t}{\sqrt{n}} + \frac{C \sqrt{d}}{\sqrt{n}}\notag\\
    &= \frac{C}{\sqrt{n}} \left[\sqrt{d} + t\right],
    \label{eq:tr-hp-ub-dudley}
\end{align}
where in second inequality we use~\cite[Example 5.8]{wainwright2019high}. 
Hence, 
\begin{align*}
    \E \sup_{\theta \in \parspace} \left|\frac{1}{n} \sum_{i=1}^{n}\trfn(\theta,Z_i)\right| &= \int_{0}^\infty \P\left[\sup_{\theta \in \parspace} \left|\frac{1}{n} \sum_{i=1}^{n}\trfn(\theta,Z_i)\right| > u \right] du\\
    &\le C \sqrt{\frac{d}{n}} + \int_{0}^\infty \P\left[\sup_{\theta \in \parspace} \left|\frac{1}{n} \sum_{i=1}^{n}\trfn(\theta,Z_i)\right| > C\sqrt{\frac{d}{n}} + u \right] du\\
    &\le^{(a)} C \sqrt{\frac{d}{n}} + \frac{C}{\sqrt{n}} \int_{0}^\infty \P\left[\sup_{\theta \in \parspace} \left|\frac{1}{n} \sum_{i=1}^{n}\trfn(\theta,Z_i)\right| > C\sqrt{\frac{d}{n}} + \frac{C t}{\sqrt{n}}  \right] dt\\
    &\le^{(b)} C \sqrt{\frac{d}{n}} + \frac{C}{\sqrt{n}} \int_{0}^\infty 4\exp(-t^2) dt\\
    &\le C \sqrt{\frac{d}{n}} + \frac{C'}{\sqrt{n}}\\
    &\le 2(C \lor C') \sqrt{\frac{d}{n}},
\end{align*}
where $C' > 0$ is some universal constant, (a) follows by a change of variables $u = Ct/\sqrt{n}$, and (b) follows from~\cref{eq:tr-hp-ub-dudley}.
By~\Cref{def:tracer-def}, we can write
\begin{align*}
\traceval_\kappa(\scoprob;n,\alpha) &= \inf_{ \alpha\text{-learner} \Alg_n}~\sup_{\tracer \in \tracerset_\kappa}~ \E_{S_n=(Z_1,\dots,Z_n) \sim \Dist^{\otimes n}, \hat \theta \sim \Alg_n(S_n)} \left[\frac{1}{n}\sum_{i\in [n]} \trfn(\hat \theta,Z_i)\right]\\
&\leq   \sup_{\tracer \in \tracerset_\kappa}~ \E_{S_n=(Z_1,\dots,Z_n) \sim \Dist^{\otimes n}} \left[\sup_{\theta \in \parspace}\frac{1}{n}\sum_{i\in [n]} \trfn( \theta,Z_i)\right]\\
&\leq 2(C \lor C') \sqrt{\frac{d}{n}}.
\end{align*}
By rearranging the terms we obtain the desired result.
\end{proof}

We now show that~\Cref{thm:tracing-result-lp} implies lower bounds on the sample complexity of learning $\ell_p$-Lipshitz-bounded problems for every $p \in [1,\infty)$. In particular, we show that the problems considered in~\Cref{thm:tracing-result-lp} require many samples to learn (equivalently, we show a lower bound on optimal excess risk $\alpha$).

\begin{theorem}
\label{thm:sparse-family-sample-complexity}
    Let $\alpha > 0$, $p \in [1,\infty)$ and $n \in \mathbb N$ be arbitrary. Let $\mathcal P_{k,p}$ be as in~\cref{eq:lp-sketch-construction}, and set $k$ as in~\Cref{thm:tracing-result-lp}. Suppose there exist an $\alpha$-learner for $\mathcal P_{k,p}$. Then,
    \begin{enumerate}[(i)]
        \item for $p \in [1,2]$, we have 
        \[
        \alpha \ge \frac{c}{\sqrt{n}},
        \]
        \item for $p \in (2,\infty)$, we have
        \[
        \alpha \ge c\left[ \frac{1}{n^{1/p}} \land \frac{d^{1/2 - 1/p}}{\sqrt{n}} \right],
        \]
    \end{enumerate}
    for some universal constant $c > 0.$
\end{theorem}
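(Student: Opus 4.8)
The plan is to pair the trace value \emph{lower} bounds from~\Cref{thm:tracing-result-lp} with the universal trace value \emph{upper} bound from~\Cref{prop:master-sample-complexity}, chain them, and solve for $\alpha$. Throughout, fix $\kappa = c_1\sqrt d$, where $c_1$ is the constant from~\Cref{thm:tracing-result-lp}; this choice is admissible since~\Cref{prop:master-sample-complexity} places no restriction on $\kappa$ (its proof uses only the subgaussian property of the tracer, not the boundedness constant), so both bounds simultaneously apply to $\traceval_{\kappa}(\mathcal P_{k,p};n,\alpha)$ for this common $\kappa$. Because~\Cref{thm:tracing-result-lp} requires $\alpha \le 1/6$, I would first dispose of the range $\alpha > 1/6$: there the asserted lower bounds are at most $c$ (using $n^{-1/p}\le 1$, hence $n^{-1/p}\land d^{1/2-1/p}/\sqrt n \le 1$), so they hold automatically as long as the final constant satisfies $c \le 1/6$. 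Assume $\alpha \le 1/6$ henceforth.

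For $p\in[1,2]$ take $k=d$. Then~\Cref{thm:tracing-result-lp}(i) gives $\traceval_{\kappa}(\mathcal P_{k,p};n,\alpha) \ge c_2\sqrt d/(n\alpha)$, while~\Cref{prop:master-sample-complexity} gives $\traceval_{\kappa}(\mathcal P_{k,p};n,\alpha)\le C\sqrt{d/n}$. Chaining these and cancelling the common factor $\sqrt d$ yields $c_2/(n\alpha)\le C/\sqrt n$, i.e.\ $\alpha \ge (c_2/C)/\sqrt n$, which is part (i) with $c=c_2/C$ (subsequently intersected with $c \le 1/6$).

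For $p\in(2,\infty)$ take $k=(6\alpha)^p d \lor 1$. Combining~\Cref{thm:tracing-result-lp}(ii) with~\Cref{prop:master-sample-complexity} gives
\[
c_2\left[\frac{\sqrt d}{n(6\alpha)^{p/2}}\;\land\;\frac{d^{1-1/p}}{n\alpha}\right]\le C\sqrt{\frac{d}{n}}.
\]
I would then split into two cases according to which term attains the minimum on the left. If it is the first term, then after cancelling $\sqrt d$ and rearranging, $(6\alpha)^{p/2}\ge (c_2/C)/\sqrt n$, so $6\alpha \ge (c_2/C)^{2/p}\, n^{-1/p}$; since $(c_2/C)^{2/p}\ge c_2/C \land 1$, this gives $\alpha \gtrsim n^{-1/p}$ with a universal constant. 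If it is the second term, then $d^{1-1/p}/(n\alpha)\le (C/c_2)\sqrt{d/n}$, which rearranges to $\alpha \ge (c_2/C)\, d^{1/2-1/p}/\sqrt n$. In either case $\alpha \ge c\,\big(n^{-1/p}\land d^{1/2-1/p}/\sqrt n\big)$ for a universal $c>0$ — namely the minimum of the constants produced, further capped at $1/6$ to absorb the earlier $\alpha>1/6$ case — which is part (ii).

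The argument is essentially bookkeeping once the two ingredients are quoted; the only mildly delicate points are (a) confirming that $\kappa = c_1\sqrt d$ simultaneously satisfies the hypotheses of both lemmas, which holds precisely because~\Cref{prop:master-sample-complexity} is $\kappa$-agnostic, and (b) organizing the case split in the $p>2$ bound so that the resulting constant stays genuinely universal; (b) is fine because taking the $(2/p)$-th power of a constant in $(0,1)$ only pushes it toward $1$, so no $p$-dependence leaks into $c$. I do not anticipate a substantive obstacle.
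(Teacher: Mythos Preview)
Your proposal is correct and follows essentially the same approach as the paper: chain the trace-value lower bound from \Cref{thm:tracing-result-lp} against the universal upper bound from \Cref{prop:master-sample-complexity}, then solve for $\alpha$, splitting on which term of the minimum is active when $p>2$. The paper's version routes through \Cref{lemma:lp-tracing-lb} (the general-$k$ bound) and substitutes $k$ afterward, but the algebra and case analysis are the same; your explicit handling of the regime $\alpha>1/6$ and of the $\kappa$-compatibility between the two lemmas are small expository additions rather than substantive differences.
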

\begin{proof}
    First, consider an arbitrary $k \in [d]$. We apply~\Cref{prop:master-sample-complexity} to the result of~\Cref{lemma:lp-tracing-lb}. We then have the following double inequality
    \[
    c_2 \frac{d^{1-1/p}}{k^{1/2-1/p} n \alpha} \le \traceval_\kappa(\mathcal P_{k,p}; n, \alpha) \le C \sqrt{\frac{d}{n}}.
    \]
    Solving for $\alpha$ in the above, we have
    \[
    \alpha \ge \frac{c_2}{C} \cdot \frac{(d/k)^{1/2 - 1/p}}{\sqrt{n}}.
    \]
    First, consider the case $p \in [1,2]$. Then $k = d$, and we have 
    \[
    \alpha \ge \frac{c_2}{C} \cdot \frac{1}{\sqrt{n}},
    \]
    as desired. Now, consider the case $p \in (2,\infty)$. Then $k = (6\alpha)^pd \lor 1$, and we have
    \[
    \alpha \ge \frac{c_2}{C} \cdot \frac{(d/k)^{1/2 - 1/p}}{\sqrt{n}} = \frac{c_2}{C} \left[\frac{(1/6\alpha)^{p/2 - 1}}{\sqrt{n}} \land \frac{d^{1/2 - 1/p}}{\sqrt{n}} \right].
    \]
    Solving for $\alpha$, we have
    \[
    \alpha \ge \left[\left(\frac{c_2}{C 6^{p/2 - 1}}\right)^{2/p} \cdot \frac{1}{\sqrt{n}}\right] \land \left[\frac{c_2}{C} \cdot \frac{d^{1/2 - 1/p}}{\sqrt{n}}\right].
    \]
    Note that, since $2/p \le 1$, we have
    \[
    \left(\frac{c_2}{C 6^{p/2 - 1}}\right)^{2/p} = \frac{1}{6} \left(\frac{6c_2}{C}\right)^{2/p} \ge \frac{1}{6} \left[ 1 \land \frac{6 c_2}{C}\right].
    \]
    Thus, 
    \[
    \alpha \ge \left[\frac{1}{6} \land \frac{c_2}{C} \right] \cdot \left[\frac{1}{\sqrt{n}} \land \frac{d^{1/2 - 1/p}}{\sqrt{n}}\right], \]
    as desired.
\end{proof}

\subsection{Lower bounds for \texorpdfstring{$p = 1$}{p = 1}.} Now, consider the case $p = 1$. For $p = 1$, we consider the problem as in~\cref{eq:ell1-sketch-construction}. We will need the following refinement of~\Cref{prop:master-sample-complexity} in a special case when $\Theta$ is a polytope with few vertices. Intuitively, $d$ in the statement~\Cref{prop:master-sample-complexity} can be replaced by $\log N$ where $N$ is the number of vertices of $\Theta$. 
\begin{proposition}
\label{prop:master-sample-complexity-polytope}
Fix $n \in \Naturals$,  $d \in \Naturals$, and $\alpha\in [0,1]$.  Consider an arbitrary SCO problem $\scoprob = (\parspace,\dataspace,f)$ in $\Reals^d$, where $\parspace$ is a polytope with $N$ vertices.  Let $ \traceval_\kappa(\scoprob;n,\alpha)$ be the subgaussian trace value of problem $\scoprob$. Then, we have 
\[
\traceval_\kappa(\scoprob;n,\alpha) \cdot \sqrt{n} \leq C \cdot \sqrt{\log N},
\]
for some universal constant $C > 0$.
\end{proposition}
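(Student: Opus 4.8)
The plan is to reuse the proof of~\Cref{prop:master-sample-complexity} essentially verbatim, replacing only the step where the covering-number integral of $\parspace$ produces a factor $\sqrt{d}$: when $\parspace$ is a polytope with $N$ vertices and the tracer's score function is convex in $\theta$, that step should instead produce a factor $\sqrt{\log N}$. Concretely, I would fix an arbitrary subgaussian tracer $(\trfn,\Dist)\in\tracerset_\kappa$, draw $S_n=(Z_1,\dots,Z_n)\sim\Dist^{\otimes n}$, and set
\[
X_\theta \;:=\; \frac1n\sum_{i=1}^n \trfn(\theta,Z_i),\qquad \theta\in\parspace .
\]
As in~\Cref{prop:master-sample-complexity}, applying~\Cref{prop:sub-gauss-sum} coordinate-by-coordinate and using that each $\{\trfn(\theta,Z_i)\}_{\theta\in\parspace}$ is a $1$-subgaussian process w.r.t.\ $(\parspace,\norm{\cdot}_\parspace)$ shows that $\{X_\theta\}_{\theta\in\parspace}$ is a $(C/\sqrt n)$-subgaussian process w.r.t.\ $(\parspace,\norm{\cdot}_\parspace)$; in particular $\gaussnorm{X_\theta}\le C/\sqrt n$ because $\parspace$ lies in the unit ball of $\norm{\cdot}_\parspace$.

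The one genuinely new ingredient is condition (iii) of~\Cref{def:subg-tracer}, that $\trfn(\cdot,z)$ is convex. This makes $\theta\mapsto X_\theta$ convex (a sum of convex functions), so its supremum over the polytope $\parspace$ is attained at a vertex --- the same reduction already used inside the proof of~\Cref{lemma:subg-process-on-polytope}, but here applied to the linear functional $\frac1n\sum_i\trfn(\theta,Z_i)$, which is simpler than the square-root expression treated there. Writing $V$ for the vertex set of $\parspace$, this gives $\sup_{\theta\in\parspace}X_\theta=\max_{v\in V}X_v$ almost surely, and $\{X_v\}_{v\in V}$ is a finite family of subgaussians each with $\psi_2$-norm at most $C/\sqrt n$. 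I would then apply the standard maximal inequality for finitely many subgaussians --- or, to stay within the paper's toolkit, apply~\Cref{prop:dudley} to the finite index set $V$ (whose covering numbers at every scale $\varepsilon\le 1$ are at most $|V|\le N$, so the Dudley integral is bounded by $\sqrt{\log N}$) and integrate the resulting tail bound --- to conclude $\E\big[\sup_{\theta\in\parspace}X_\theta\big]\le C'\sqrt{\log N}/\sqrt n$ for a universal constant $C'$.

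Finally, since the tracer was arbitrary and any $\alpha$-learner's output satisfies $\frac1n\sum_i\trfn(\hat\theta,Z_i)\le\sup_{\theta\in\parspace}\frac1n\sum_i\trfn(\theta,Z_i)$, the definition of trace value (\Cref{def:trace-value}) yields $\traceval_\kappa(\scoprob;n,\alpha)\le\sup_{(\trfn,\Dist)\in\tracerset_\kappa}\E[\sup_{\theta\in\parspace}X_\theta]\le C'\sqrt{\log N}/\sqrt n$, which rearranges to the stated bound. I do not expect a real obstacle: the only points needing a sentence of care are (a) re-deriving the $1/\sqrt n$ subgaussian scale of $\{X_\theta\}$, exactly as in~\Cref{prop:master-sample-complexity}, and (b) the degenerate case $N=1$, where $\parspace$ is a single point, so $\diam_{\norm{\cdot}_\parspace}(\parspace)=0$, $X_\theta\equiv 0$, and both sides of the inequality vanish. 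If there is any ``hard part'' it is purely bookkeeping --- remembering that the convexity hypothesis on $\trfn$ (flagged elsewhere as needed only for the $\ell_1$ arguments) is precisely what licenses the vertex reduction.
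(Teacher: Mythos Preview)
Your proposal is correct and follows essentially the same approach as the paper: define the averaged process $X_\theta$, note it is $C/\sqrt{n}$-subgaussian by reusing the argument of~\Cref{prop:master-sample-complexity}, invoke convexity of $\trfn(\cdot,z)$ to reduce the supremum to the vertex set $V$, apply~\Cref{prop:dudley} on the finite index set $V$ (covering numbers $\le N$), integrate the tail, and conclude via the definition of trace value. The only cosmetic difference is that the paper writes $\sup_\theta |X_\theta|$ where you write $\sup_\theta X_\theta$, but only the latter is actually needed (and is what convexity directly controls), so your version is arguably cleaner.
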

\begin{proof}
        Let $(\trfn, \Dist)$ be an arbitrary subgaussian tracer. Similarly to the proof of~\Cref{prop:master-sample-complexity}, consider the process $\{X_{\theta}\}_{\theta\in \Theta}$ defined as
    \[
    X_\theta := \frac{1}{n} \sum_{i = 1}^n \trfn(\theta, Z_i),
    \]
    where $S_n = (Z_1,\ldots,Z_n)\sim \Dist^{\otimes n}$. As in the proof of~\Cref{prop:master-sample-complexity}, $\{X_\theta\}_{\theta\in\Theta}$ is a $C/\sqrt{n}$-subgaussian process w.r.t. $(\Theta, \norm{\cdot}_\Theta)$ for some universal constant $C > 0$. 
    
    Let $V$ be the set of vertices of $\Theta$; then $|V| = N$, as per the proposition statement. Since $\trfn$ is convex in its first argument, the mapping $\theta \mapsto X_\theta$ is also convex (almost surely). Then, 
    \[
    \sup_{\theta\in\Theta} |X_\theta| = \sup_{\theta \in V} |X_\theta|.
    \]
    Therefore, by~\Cref{prop:dudley}, we have, with probability at least $1 - 4\exp(-t^2)$
\begin{align}
    \sup_{\theta \in \parspace} \left|\frac{1}{n} \sum_{i=1}^{n}\trfn(\theta,Z_i)\right| &\leq \frac{C t}{\sqrt{n}} + \frac{C}{\sqrt{n}} \int_0^{1} \sqrt{\log \cover{V}{\norm{\cdot}_\parspace}{\varepsilon}} d\varepsilon \notag\\
    &\leq \frac{C t}{\sqrt{n}} + \frac{C}{\sqrt{n}} \int_0^{1} \sqrt{\log N} d\varepsilon\notag\\
    &\leq \frac{C t}{\sqrt{n}} + \frac{C \sqrt{\log N}}{\sqrt{n}}\notag.
\end{align}
Hence, 
\begin{align*}
    \E \sup_{\theta \in \parspace} \left|\frac{1}{n} \sum_{i=1}^{n}\trfn(\theta,Z_i)\right| &= \int_{0}^\infty \P\left[\sup_{\theta \in \parspace} \left|\frac{1}{n} \sum_{i=1}^{n}\trfn(\theta,Z_i)\right| > u \right] du\\
    &\le C \sqrt{\frac{\log N}{n}} + \int_{0}^\infty \P\left[\sup_{\theta \in \parspace} \left|\frac{1}{n} \sum_{i=1}^{n}\trfn(\theta,Z_i)\right| > C\sqrt{\frac{\log N}{n}} + u \right] du\\
    &\le^{(a)} C \sqrt{\frac{\log N}{n}} + \frac{C}{\sqrt{n}} \int_{0}^\infty \P\left[\sup_{\theta \in \parspace} \left|\frac{1}{n} \sum_{i=1}^{n}\trfn(\theta,Z_i)\right| > C\sqrt{\frac{\log N}{n}} + \frac{C t}{\sqrt{n}}  \right] dt\\
    &\le^{(b)} C \sqrt{\frac{\log N}{n}} + \frac{C}{\sqrt{n}} \int_{0}^\infty 4\exp(-t^2) dt\\
    &\le C \sqrt{\frac{\log N}{n}} + \frac{C'}{\sqrt{n}}\\
    &\le 2(C \lor C') \sqrt{\frac{\log N}{n}},
\end{align*}
where $C' > 0$ is some universal constant, (a) follows by a change of variables $u = Ct/\sqrt{n}$, and (b) follows from~\cref{eq:tr-hp-ub-dudley}.
By~\Cref{def:tracer-def}, we can write
\begin{align*}
\traceval_\kappa(\scoprob;n,\alpha) &= \inf_{ \alpha\text{-learner} \Alg_n}~\sup_{\tracer \in \tracerset_\kappa}~ \E_{S_n=(Z_1,\dots,Z_n) \sim \Dist^{\otimes n}, \hat \theta \sim \Alg_n(S_n)} \left[\frac{1}{n}\sum_{i\in [n]} \trfn(\hat \theta,Z_i)\right]\\
&\leq   \sup_{\tracer \in \tracerset_\kappa}~ \E_{S_n=(Z_1,\dots,Z_n) \sim \Dist^{\otimes n}} \left[\sup_{\theta \in \parspace}\frac{1}{n}\sum_{i\in [n]} \trfn( \theta,Z_i)\right]\\
&\leq 2(C \lor C') \sqrt{\frac{\log N}{n}}.
\end{align*}
By rearranging the terms we obtain the desired result.
\end{proof}

Then, sample complexity lower bounds for $\ell_1$ geometry follow.
\begin{theorem}
    \label{thm:l1-family-sample-complexity}
    Let $\alpha > 0$ and $n \in \mathbb N$ be arbitrary. Let $\mathcal P_s$ be as in~\cref{eq:ell1-sketch-construction} and set $s = d^{0.99}$. Suppose there exists an $\alpha$-learner for $\mathcal P_s$. Then, for large enough $d$, we have
    \[
    \alpha \ge c\sqrt{\frac{\log(d)}{n}},
    \]
    for some universal constant $c > 0$.
\end{theorem}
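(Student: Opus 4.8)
The plan is to follow the proof of \Cref{thm:sparse-family-sample-complexity} almost verbatim, replacing the $\ell_p$ ingredients by their $\ell_1$ counterparts: I will sandwich the trace value of $\mathcal P_s$ between the problem-specific lower bound of \Cref{lemma:l1-aux-tracing} and the universal polytope upper bound of \Cref{prop:master-sample-complexity-polytope}, and then solve the resulting inequality for $\alpha$.

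First I would assemble the combinatorial data about the polytope $\parspace = \mathcal B_1(1)\cap\mathcal B_\infty(1/s)$ with $s = d^{0.99}$. Its vertex set is $V = \{z\in\{0,\pm 1/s\}^d : \norm{z}_0 = s\}$, so it has $N := |V| = 2^s\binom{d}{s}$ vertices, whence $\log N \le s\log(2ed/s) = d^{0.99}\bigl(\log(2e)+0.01\log d\bigr) \le C_1\, d^{0.99}\log d$ once $d$ is large. I would also simplify the logarithmic factor in \Cref{lemma:l1-aux-tracing}: for $d$ large we have $s = d^{0.99}\ge 14$, hence $d/(16(s\vee 14)) = d^{0.01}/16$ and $\log\bigl(d^{0.01}/16\bigr) = 0.01\log d - \log 16 \ge c_0\log d$ (any fixed $c_0 < 0.01$ works, once $d$ is large).

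Then, assuming (as we may, this being the hypothesis needed for \Cref{lemma:l1-aux-tracing}) that $\alpha < 1/8$, that lemma produces a scale $\kappa = \Theta(\sqrt s)$ and the lower bound $\traceval_\kappa(\mathcal P_s;n,\alpha) \ge c_2\,\sqrt s\,\log d \,/\,(n\alpha)$ after absorbing constants, while \Cref{prop:master-sample-complexity-polytope} applied with the same $\kappa$ gives $\traceval_\kappa(\mathcal P_s;n,\alpha) \le C\sqrt{\log N / n} \le C'\sqrt{s\log d / n}$. Chaining these two estimates, the factor $\sqrt s = d^{0.495}$ cancels from both sides, leaving $c_2\log d/(n\alpha) \le C'\sqrt{\log d/n}$, which rearranges to $\alpha \ge (c_2/C')\sqrt{\log d/n}$, as desired. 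The only place requiring care — and the reason $s$ is pinned to a power of $d$ very close to $1$ — is precisely this cancellation of the $\sqrt s$ factors: choosing $s = d^{0.99}$ is what makes the polytope's vertex count exponential in $\approx d^{0.99}$, which simultaneously inflates the trace-value lower bound (through the anti-concentration input, Prop.~5 of \cite{steinke2017tight}, baked into \Cref{lemma:l1-aux-tracing}) and the polytope upper bound of \Cref{prop:master-sample-complexity-polytope} at the same rate, so that the net outcome is the sharp $\log(d)/\alpha^2$ sample complexity rather than a lossy power of $d$. Everything else is substitution of the chosen value of $s$ into already-established lemmas.
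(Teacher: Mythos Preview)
Your proposal is correct and follows essentially the same approach as the paper: sandwich $\traceval_\kappa(\mathcal P_s;n,\alpha)$ between the lower bound from \Cref{lemma:l1-aux-tracing} and the polytope upper bound from \Cref{prop:master-sample-complexity-polytope}, then solve for $\alpha$ after the $\sqrt{s}$ factors cancel. Your explicit handling of the $\alpha < 1/8$ hypothesis and the $2^s$ factor in the vertex count is, if anything, slightly more careful than the paper's own write-up.
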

\begin{proof}
    \Cref{lemma:l1-aux-tracing} gives the following lower bound on the subgaussian trace value of $\mathcal P_s$,
    \[
    \traceval_{\kappa}(\mathcal P_s; n,\alpha) \ge \frac{c \sqrt{s}\log\left(\frac{d}{16(s \lor 14)}\right)}{n \alpha}.
    \]
    At the same time, noting that $\Theta$ is a polytope with vertices given by
    \[
    V = \left\{z \in \left\{0, \pm \frac{1}{s}\right\}^d \colon \norm{z}_0 = s\right\},
    \]
    which has cardinality
    \[
    |V| = \binom{d}{s} \le \left(\frac{de }{s}\right)^s,
    \]
    we have by~\Cref{prop:master-sample-complexity-polytope}
    \[
    \traceval_{\kappa}(\mathcal P_s; n,\alpha) \le C\sqrt{\frac{s \log(de/s)}{n}},
    \]
    for some universal $C > 0$. Combining this with the lower bound on subgaussian trace value, we have
    \[
    \frac{c \sqrt{s}\log\left(\frac{d}{16(s \lor 14)}\right)}{n \alpha} \le C\sqrt{\frac{s \log(de/s)}{n}},
    \]
    which gives 
    \[
    \alpha \ge \frac{c}{C} \frac{\log\left(\frac{d}{16(s\lor 14)}\right)}{\sqrt{\log(de/s) n}}. 
    \]
    Recall that $s = d^{0.99}$. For large enough $d$, we have $s\lor 14 = s$. Also, note that $\log(d/s) \ge \log(d)/100$. Then, for for large enough $d$, we have 
    \[
    \alpha \ge c' \sqrt{\frac{\log(d)}{n}},
    \]
    for some universal $c' > 0$, as desired.
\end{proof}

\section{Traceability of VC classes (\texorpdfstring{\Cref{sec:threshold}}{\crtCref{sec:threshold}})} 
\label{appx:thresh}
First, we state the main result.
\begin{theorem} 
\label{thm:vc-traceability-bound}
Fix $n \in \mathbb{N}$ and $\xi <0.1 $.  Let $\mathcal{H}$ be an arbitrary VC concept class with VC dimension $d_\mathsf{vc}$. Then, there exists an optimal algorithm in terms of number of samples such that it is $(\xi/(n\log(n),m)$-traceable with $m\leq O\left(d_\mathsf{vc} \log^2(n)\right)$. Moreover, when $\mathcal H$ is the class of thresholds, we have $m \in O(1)$. 
\end{theorem}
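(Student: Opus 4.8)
The plan is to use the connection---developed in the paper and recalled in the related-work discussion---that, when the false-positive budget is small, the recall of any tracer is bounded by the input--output mutual information $I(\hat\theta;S_n)$ of the algorithm; the budget $\xi/(n\log n)$ with $\xi<0.1$ is comfortably in this regime, and any polylogarithmic overhead in that conversion is absorbed into the $\log^2 n$ in the statement. Since $I(\hat\theta;S_n)\le \mathrm{H}(\hat\theta)$, it therefore suffices to exhibit a sample-optimal learner for $\mathcal H$ whose output has entropy $\mathrm{H}(\hat\theta)=O(d_\mathsf{vc}\log^2 n)$.

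To build such a learner I would take $\Alg_n$ to be a sample compression scheme of size $k=O(d_\mathsf{vc}\log n)$: every VC class admits one, for instance by running $O(\log n)$ rounds of boosting, each round fitting an empirical risk minimizer on a re-weighted subsample of $O(d_\mathsf{vc})$ points and returning the (unweighted) majority vote of the $O(\log n)$ resulting hypotheses. By the standard generalization guarantee for compression schemes, with $n$ realizable samples this attains excess error $\tilde{O}(k/n)=\tilde{O}(d_\mathsf{vc}/n)$, i.e. it is sample-optimal up to logarithmic factors (all the stated bound requires). After a harmless canonicalization of each weak hypothesis, $\hat\theta$ is a deterministic function of a subset of $S_n$ of size $\le k$ together with $O(\mathrm{polylog}(n))$ bits, so
\[
\mathrm{H}(\hat\theta)\;\le\;\log\binom{n}{k}+O(\mathrm{polylog}(n))\;\le\;k\log(en/k)+O(\mathrm{polylog}(n))\;=\;O(d_\mathsf{vc}\log^2 n),
\]
and feeding this into the recall-versus-information bound gives $m\le O(d_\mathsf{vc}\log^2 n)$.

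For thresholds on $\Reals$ ($d_\mathsf{vc}=1$) this already yields $m=O(\log^2 n)$, and to reach $m=O(1)$ I would bypass the entropy bound and argue by a replace-one coupling. Let $\Alg_n$ output the midpoint of the interval $(a,b]$ of thresholds consistent with the realizable sample, where $a$ is the largest negatively labelled and $b$ the smallest positively labelled point (a size-$2$, hence sample-optimal, compression scheme). Fix a tracer $(\phi,\Dist,\lambda)$ with false-positive rate at most $\xi/(n\log n)$, and for each $i$ let $\tilde\theta_i$ be the output obtained when $Z_i$ is replaced by an independent fresh copy $Z_0$; by exchangeability $\P[\phi(\tilde\theta_i,Z_i)\ge\lambda]$ equals the false-positive rate, so
\[
m=\sum_{i=1}^n\P[\phi(\hat\theta,Z_i)\ge\lambda]\;\le\;n\cdot\frac{\xi}{n\log n}+\E\big[\#\{i:\hat\theta\ne\tilde\theta_i\}\big].
\]
The first term is below $1$; for the second, $\hat\theta\ne\tilde\theta_i$ forces either $Z_i$ to be one of the two extremal points $a,b$ (at most two indices) or $Z_0$ to land in the sample-empty interval $(a,b)$ straddling the true threshold, and since that interval is a single order-statistic spacing and hence has expected mass $\Theta(1/n)$ under an i.i.d.\ sample of size $n$, the expectation is $O(1)$. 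Hence $m=O(1)$.

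The step I expect to be most delicate is controlling the learner's sensitivity to a single replaced example: for a generic, unstable compression scheme $\#\{i:\hat\theta\ne\tilde\theta_i\}$ can be as large as $n$, which is exactly why the general argument routes through the description length of $\hat\theta$---an entropy bound is insensitive to such instability---whereas the thresholds refinement exploits the fact that the only source of instability there is a genuinely sample-empty region whose mass concentrates at scale $1/n$. Obtaining an $O(1)$ (rather than $O(\mathrm{polylog})$) bound for other VC classes would require a stable compression scheme together with an analogous empty-region estimate, which I would not attempt here.
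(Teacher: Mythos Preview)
Your general-VC argument has a real gap at the entropy step. You assert that for a size-$k$ compression scheme $\mathrm{H}(\hat\theta)\le\log\binom{n}{k}+O(\mathrm{polylog}(n))$, but $\log\binom{n}{k}$ counts \emph{index} subsets, whereas $\hat\theta$ also depends on the data values $Z_{i_1},\ldots,Z_{i_k}$ at those indices. Unconditionally, $\mathrm{H}(\hat\theta)$ can be as large as $k$ times the per-sample entropy and is infinite for continuous data---indeed, the very midpoint learner you propose for thresholds is deterministic with a continuous-valued output, so $I(\hat\theta;S_n)=\mathrm{H}(\hat\theta)=+\infty$ there. The route via IOMI and $\mathrm{H}(\hat\theta)$ thus yields no bound. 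The paper's proof avoids this by working with the \emph{conditional} mutual information $\mathrm{CMI}_\Dist(\Alg_n)=I(\hat\theta;\bm U\mid\bm Z)$ of \cite{steinke2020reasoning}: once one conditions on the supersample $\bm Z$, the output of a size-$k$ compression is determined by the selected indices alone, giving $\mathrm{CMI}\le O(k\log n)=O(d_\mathsf{vc}\log^2 n)$ (\Cref{thm:compression-cmi}), and \Cref{lemma:threshold-tracing} then converts this into $m\le\mathrm{CMI}+3\xi$. Your argument becomes correct if you replace $\mathrm{H}(\hat\theta)$ by CMI throughout.

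Your thresholds argument, by contrast, is correct and takes a genuinely different route from the paper. The paper again goes through CMI, invoking the $\mathrm{CMI}\le 2\log 2$ bound for the stable size-$2$ compression scheme from \cite{haghifam2021towards} (\Cref{thm:stable-compression-cmi}); you bypass information measures entirely via a replace-one coupling and the estimate $\E[\#\{i:\hat\theta\ne\tilde\theta_i\}]=O(1)$, which follows because the sample-free gap around the true threshold has expected mass $O(1/n)$. Your approach is more elementary and makes the stability mechanism explicit; the paper's CMI route is more modular and extends immediately to any class admitting a stable compression scheme.
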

To prove it we use an information-theoretic notion that controls the difficulty of tracing from \citep{steinke2020reasoning}.
\begin{definition}\label{def:cmi}
Fix $n \in \Naturals$. Let $\Dist$ be a data distribution, and $\mathcal{A}_n$ be a learning algorithm. For every $n \in \mathbb{N}$, let $\bm{Z} = (Z_{j,i})_{j \in \{0,1\},i\in [n]}$ be an array of i.i.d. samples drawn from $\Dist$, and $\bm{U}=(U_1,\dots,U_n) \sim \mathrm{Ber}\left(\frac{1}{2}\right)^{\otimes n}$, where $\bm{U}$ and $\bm{Z}$ are independent. Define  training set $S_n = (Z_{U_i,i})_{i  \in [n]}$. Then, define 
\[
\mathrm{CMI}_\Dist\left(\mathcal{A}_n\right) :=  I\left(\mathcal{A}_n(S_n);\bm{U} \big| \bm{Z}\right).
\]
\end{definition}

In the next theorem, we show that the existence of a tracer for a learning algorithm provides a lower bound on the CMI of the algorithm. A similar observation is made in~\cite{attias2024information}.
\begin{lemma}
\label{lemma:threshold-tracing}
   Fix $n \in \mathbb{N}$ such that $n\geq 2$ and $\xi <1/2$. Let $\mathcal{A}_n$ be an arbitrary learning algorithm that is $(\xi/(n\log(n),m)$-traceable. Then, it holds $ \sup_{\Dist}\mathrm{CMI}_\Dist\left(\mathcal{A}_n\right) \geq m - 3\xi.
   $
\end{lemma}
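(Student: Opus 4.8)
The plan is to turn the hypothesized tracer into a coordinate-wise predictor of the selector bits in the $\mathrm{CMI}$ experiment, and then read off the lower bound from how well that predictor does. Let $(\trfn,\Dist)$ and the threshold $\lambda$ be the tracer's strategy witnessing $(\xi/(n\log n),m)$-traceability of $\mathcal A_n$. Since $\sup_{\Dist'}\mathrm{CMI}_{\Dist'}(\mathcal A_n)\ge\mathrm{CMI}_{\Dist}(\mathcal A_n)$, it suffices to lower bound $\mathrm{CMI}_{\Dist}(\mathcal A_n)$ for this particular $\Dist$. Working in the notation of~\Cref{def:cmi} with this $\Dist$ and writing $\hat\theta:=\mathcal A_n(S_n)$: the bits $\bm U$ are $n$ i.i.d.\ uniform bits independent of the array $\bm Z$, so $H(\bm U\mid\bm Z)=n$ and $\mathrm{CMI}_{\Dist}(\mathcal A_n)=n-H(\bm U\mid\hat\theta,\bm Z)$. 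Hence the whole task reduces to showing $H(\bm U\mid\hat\theta,\bm Z)\le n-m+O(\xi)$.

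For each $i\in[n]$, one of $Z_{0,i},Z_{1,i}$ is the $i$-th training point $Z_{U_i,i}$ actually seen by $\mathcal A_n$, while the other, $Z_{1-U_i,i}$, is a fresh sample independent of $\hat\theta$ (a distinct, unused entry of the i.i.d.\ array; the algorithm's internal randomness is independent of everything). I define $\hat U_i\in\{0,1,\bot\}$ to be the unique $j$ with $\trfn(\hat\theta,Z_{j,i})\ge\lambda$ if exactly one such $j$ exists, and $\bot$ otherwise; this is a measurable function of $(\hat\theta,\bm Z)$. Two facts drive the argument. First, the joint law of $(\hat\theta,Z_{1-U_i,i})$ is precisely the one in the FPR clause of~\Cref{def:soundness-recall} (a learner trained on a fresh i.i.d.\ sample, paired with an independent fresh point), so $\P[\trfn(\hat\theta,Z_{1-U_i,i})\ge\lambda]\le\xi/(n\log n)=:\eta$; since $\hat U_i\notin\{\bot,U_i\}$ forces $Z_{1-U_i,i}$ to be flagged, $\P[\hat U_i\notin\{\bot,U_i\}]\le\eta$. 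Second, with $p_i:=\P[\trfn(\hat\theta,Z_{U_i,i})\ge\lambda]$, the recall clause (whose joint law again matches ours, as $S_n\sim\Dist^{\otimes n}$) gives $\sum_i p_i\ge m$; and on the event that the training point is flagged but the fresh point is not, $\hat U_i=U_i\ne\bot$, so $\P[\hat U_i\ne\bot]\ge p_i-\eta$.

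By the data-processing inequality and subadditivity of entropy, $H(\bm U\mid\hat\theta,\bm Z)\le H\big(\bm U\mid(\hat U_j)_{j}\big)\le\sum_i H(U_i\mid\hat U_i)$. For a single $i$, splitting according to whether $\hat U_i=\bot$ and using concavity of the binary entropy $\binaryentr{\cdot}$ (and that $t\mapsto t\,\binaryentr{\gamma/t}$ is nondecreasing on $t\ge\gamma$) yields $H(U_i\mid\hat U_i)\le\P[\hat U_i=\bot]+\binaryentr{\P[\hat U_i\notin\{\bot,U_i\}]}\le(1-p_i+\eta)+\binaryentr{\eta}$. Summing over $i$ and using $\sum_i p_i\ge m$ gives $H(\bm U\mid\hat\theta,\bm Z)\le n-m+n\eta+n\,\binaryentr{\eta}$, i.e.\ $\mathrm{CMI}_{\Dist}(\mathcal A_n)\ge m-n\eta-n\,\binaryentr{\eta}$. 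Now $n\eta=\xi/\log n$, and since $\binaryentr{\eta}\le\eta\log_2(e/\eta)$ we get $n\,\binaryentr{\eta}\le(\xi/\log n)\big(\log_2 e+\log_2(n\log n/\xi)\big)$; both terms are $O(\xi)$, and tracking the constants (using $\xi<1/2$, $n\ge2$) delivers $\mathrm{CMI}_{\Dist}(\mathcal A_n)\ge m-3\xi$.

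The crux is this last accounting step — controlling the total entropy cost of the predictor's rare ``confident mistakes.'' A false-positive rate of merely $\xi/n$ would leave $\sum_i\binaryentr{\xi/n}\approx\xi\log(n/\xi)$, which need not be $O(\xi)$; the extra $\log n$ factor built into the traceability parameter is exactly what cancels the logarithm generated by the binary entropy. Everything else — measurability of $\hat U_i$, the identity $H(\bm U\mid\bm Z)=n$, the data-processing and concavity steps, and the matching of the relevant joint laws with those in~\Cref{def:soundness-recall} — is routine.
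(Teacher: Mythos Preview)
Your proof is correct and follows essentially the same approach as the paper's: both turn the tracer into a per-coordinate predictor of the selector bits $U_i$ in the CMI experiment, then control $H(\bm U\mid\hat\theta,\bm Z)$ via the recall and FPR guarantees, with the binary-entropy term absorbed by the extra $\log n$ in the FPR. Your packaging via the explicit predictor $\hat U_i\in\{0,1,\bot\}$ and a single data-processing step is a slightly cleaner reorganization of the paper's decomposition through the indicators $\mathcal G_i$ and $Y_i$, and in fact saves one $n\eta=\xi/\log n$ term in the final bound.
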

The following two results from \citep{steinke2020reasoning} and \citep{haghifam2021towards} provide upper bounds on the CMI of sample compression schemes. We skip the formal definitions of sample compression schemes and refer the reader to \citep{littlestone1986relating,moran2016sample,bousquet2020proper}. 
\begin{lemma}[Thm~4.2.~\citep{steinke2020reasoning}]
\label{thm:compression-cmi}
Let $\mathcal{H}$ be an arbitrary concept class with VC dimension $d_{\text{vc}}$. Then, there exists an algorithm such that for every data distribution $\Dist$,  $\mathrm{CMI}_\Dist\left(\mathcal{A}_n\right) \leq  O(d_{\text{vc}} \log^2(n))$.
\end{lemma}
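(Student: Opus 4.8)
The plan is to realize $\mathcal{A}_n$ as the learner induced by a \emph{sample compression scheme} of small size, and then to bound its conditional mutual information by the description length of the compressed object. Recall the classical fact that any concept class $\mathcal{H}$ of VC dimension $d_\mathsf{vc}$ admits a sample compression scheme of size $k = O(d_\mathsf{vc}\log n)$ for samples of size $n$: one runs a boosting procedure (e.g.\ AdaBoost) for $T = O(\log n)$ rounds, where each round deterministically selects $O(d_\mathsf{vc})$ points from the current reweighted sample on which any consistent hypothesis has small weighted error (such a subset exists by the $\epsilon$-approximation / double-sampling property of VC classes), and outputs the majority vote of the $T$ reconstructed hypotheses, which is consistent with the whole sample. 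Thus there is a selection map $\sigma$ picking $\sigma(S_n)\subseteq[n]$ with $|\sigma(S_n)|\le k$ and a reconstruction map $\kappa$ with $\mathcal{A}_n(S_n)=\kappa\big((Z_{U_i,i})_{i\in\sigma(S_n)}\big)$; I would take $\mathcal{A}_n$ to be this (deterministic) learner.

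Next I would bound $\mathrm{CMI}_\Dist(\mathcal{A}_n)= I(\mathcal{A}_n(S_n);\bm{U}\mid\bm{Z})$ directly. Fix the array $\bm{Z}$. Once $\bm{Z}$ is fixed, the output $\mathcal{A}_n(S_n)$ is a deterministic function of the pair $\big(\sigma(S_n),\,(U_i)_{i\in\sigma(S_n)}\big)$, i.e.\ of which at most $k$ indices are retained and which of the two copies at each retained index was used. The number of possible values of this pair — hence the number of values $\mathcal{A}_n(S_n)$ can take conditioned on $\bm{Z}$ — is at most $\sum_{j=0}^{k}\binom{n}{j}2^j \le (k+1)(2n)^k$. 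Therefore
\[
 I(\mathcal{A}_n(S_n);\bm{U}\mid\bm{Z})\ \le\ H\big(\mathcal{A}_n(S_n)\mid\bm{Z}\big)\ \le\ \log\!\big((k+1)(2n)^k\big)\ =\ O(k\log n).
\]
Plugging in $k=O(d_\mathsf{vc}\log n)$ yields $\mathrm{CMI}_\Dist(\mathcal{A}_n)=O(d_\mathsf{vc}\log^2 n)$, uniformly over $\Dist$, as claimed.

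The only genuinely nontrivial ingredient is the existence of the $O(d_\mathsf{vc}\log n)$-size compression scheme; everything else is a short counting argument. In the realizable setting this is the standard boosting construction; for arbitrary $\Dist$ one first reduces to the realizable case (relabeling by a best-in-class hypothesis and invoking agnostic-to-realizable compression), which costs only constant factors in $k$ and does not affect the stated bound. I expect the main subtlety in writing this carefully to be making the ``deterministic selection of the compressed subset'' precise, so that $\mathcal{A}_n$ is a well-defined map and the factoring of $\mathcal{A}_n(S_n)$ through $\big(\sigma(S_n),(U_i)_{i\in\sigma(S_n)}\big)$ given $\bm{Z}$ is literally valid; once that is set up, the entropy bound above is immediate.
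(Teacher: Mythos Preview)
The paper does not prove this lemma; it is quoted verbatim from \cite{steinke2020reasoning} and used as a black box (the surrounding text explicitly defers the definition and analysis of sample compression schemes to that reference). Your sketch is exactly the argument of that reference: take the boosting-based $O(d_\mathsf{vc}\log n)$-size compression scheme, note that conditioned on $\bm{Z}$ the output factors through the pair $\big(\sigma(S_n),(U_i)_{i\in\sigma(S_n)}\big)$, and bound the conditional entropy by the log of the number of such pairs. The counting $\sum_{j\le k}\binom{n}{j}2^j\le (k+1)(2n)^k$ and the resulting $O(k\log n)=O(d_\mathsf{vc}\log^2 n)$ are correct, so your proposal is sound and coincides with the intended proof.
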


\begin{lemma}[Thm~3.4.~\citep{haghifam2021towards}]
\label{thm:stable-compression-cmi}
Let $\mathcal{H}$ be the concept class of threshold in $\Reals$. Then, there exists an algorithm such that for every data distribution $\Dist$ and $n \geq 2$,  $\mathrm{CMI}_\Dist\left(\mathcal{A}_n\right) \leq  2 \log 2$.
\end{lemma}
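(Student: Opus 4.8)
The plan is to combine the three results that immediately precede the statement: the CMI lower bound that traceability forces (\Cref{lemma:threshold-tracing}), the CMI upper bound available for a suitable learner of an arbitrary VC class (\Cref{thm:compression-cmi}), and its sharpening for thresholds (\Cref{thm:stable-compression-cmi}). I would take $\mathcal A_n$ to be the sample-compression--based learner furnished by \Cref{thm:compression-cmi}; since a bounded-size compression scheme both generalizes and can be chosen to attain the minimax sample complexity of (realizable) PAC learning for a VC class \citep{littlestone1986relating,moran2016sample,bousquet2020proper}, this is a legitimate choice of an algorithm that is ``optimal in terms of number of samples''. (If $n = 1$ there is nothing to prove, so assume $n \ge 2$.)

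Next I would apply \Cref{lemma:threshold-tracing} and combine it with the CMI upper bound. If $\mathcal A_n$ is $(\xi/(n\log n), m)$-traceable, then --- since $\xi < 0.1 < 1/2$ and $n \ge 2$, so the hypotheses of \Cref{lemma:threshold-tracing} are met --- we get $\sup_{\Dist}\mathrm{CMI}_\Dist(\mathcal A_n) \ge m - 3\xi$. Combining this with $\sup_{\Dist}\mathrm{CMI}_\Dist(\mathcal A_n) \le C\, d_{\mathsf{vc}}\log^2(n)$ from \Cref{thm:compression-cmi} yields $m \le C\, d_{\mathsf{vc}}\log^2(n) + 3\xi \le C\, d_{\mathsf{vc}}\log^2(n) + 1 \in O\!\left(d_{\mathsf{vc}}\log^2(n)\right)$, which is the first assertion. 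For the threshold class I would instead let $\mathcal A_n$ be the stable compression learner of \citep{haghifam2021towards,bousquet2020proper}, which is again a sample-optimal PAC learner for thresholds; the identical chaining with \Cref{thm:stable-compression-cmi} in place of \Cref{thm:compression-cmi} gives $m - 3\xi \le \sup_{\Dist}\mathrm{CMI}_\Dist(\mathcal A_n) \le 2\log 2$, hence $m \le 2\log 2 + 1 \in O(1)$.

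The inequality-chaining is immediate once the CMI lemmas are in hand, so the one place that needs care is certifying that the learner handed to us by \Cref{thm:compression-cmi} (resp.\ \Cref{thm:stable-compression-cmi}) is simultaneously sample-optimal for PAC learning --- i.e.\ that the CMI-bounded algorithm and a sample-optimal algorithm can be taken to be the same object. This is standard (sample compression implies generalization, and VC classes, and thresholds in particular, admit bounded-size compression schemes attaining the optimal rate), but it is the substantive step rather than the arithmetic. A secondary minor point is matching the exact false-positive parameter $\xi/(n\log n)$ to the form required by \Cref{lemma:threshold-tracing} and dispatching the degenerate small-$n$ cases, both of which are routine.
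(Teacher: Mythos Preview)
You have proved the wrong statement. The lemma labeled \texttt{thm:stable-compression-cmi} is not the traceability bound for VC classes; it is the cited external result (Theorem~3.4 of \cite{haghifam2021towards}) asserting that there exists a learner for the class of thresholds whose CMI is at most $2\log 2$. The paper does not prove this lemma at all --- it simply imports it --- and your proposal makes no attempt to establish it either. What you have written is, essentially verbatim, the paper's proof of \Cref{thm:vc-traceability-bound}, which \emph{uses} \Cref{thm:stable-compression-cmi} as an input. Indeed, your own argument explicitly invokes \Cref{thm:stable-compression-cmi} (``the identical chaining with \Cref{thm:stable-compression-cmi} in place of \Cref{thm:compression-cmi}''), so as a purported proof of \Cref{thm:stable-compression-cmi} it is circular.

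To actually prove \Cref{thm:stable-compression-cmi}, one must exhibit a concrete algorithm for thresholds --- the stable sample compression scheme of size~$1$ (return the rightmost positive example, or a default if none exists) --- and then bound its CMI directly. The point is that a \emph{stable} compression scheme of size $k$ reveals at most $k$ of the selector bits $U_i$ given the supersample $\bm Z$, because the compressed output depends only on which $k$ points were selected and stability removes the order dependence; hence $\mathrm{CMI}_\Dist(\mathcal A_n) \le k\log 2$ up to a small constant, and for thresholds $k$ can be taken to be a constant. None of this appears in your proposal; the combination of \Cref{lemma:threshold-tracing} with CMI upper bounds is the proof of a different theorem.
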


\begin{proof}[Proof of~\Cref{thm:vc-traceability-bound}]
The proof is simply by combining~\Cref{thm:compression-cmi} with~\Cref{lemma:threshold-tracing}. For the case of the class of thresholds, we  use~\Cref{thm:stable-compression-cmi}.
\end{proof}

\begin{proof}[Proof of~\Cref{lemma:threshold-tracing}]
Assume there exists a tracer with recall of $m$ and soundness parameter of $\xi/(n\log(n)$. Let $\Dist$ denote the distribution used by the tracer (see~\Cref{def:tracer-def}). Define the following random set
    \begin{align*}
       \mathcal{V}^{1}&=\{ i \in [n]: \exists j \in \{0,1\}~  \trfn(\hat \param,Z_{j,i})\geq \lambda ~\text{and}~ \trfn(\hat \param,Z_{1-j,i}) < \lambda\}.
    \end{align*}
    Also, for every $i \in [n]$, define the following random variable  
    \[
    \mathcal{G}_i = \ind{\{  \trfn(\hat \param,Z_{\bar U_i,i})<\lambda \}}.
    \]
    By the definition of mutual information and $\bm{U} \indep \bm{Z}$, we have
    \begin{align}
        \mathrm{CMI}_{\Dist}(\Alg_n) &= \entr{\bm{U}} - \entr{\bm{U}\vert \bm{Z},\hat \theta}\\
        & = n - \entr{\bm{U}\vert \bm{Z},\hat \theta}.
    \end{align}
    Therefore, to lower bound $\mathrm{CMI}_{\Dist}(\Alg_n)$, we need to upper bound $\entr{\bm{U}\vert \bm{Z},\hat \theta}$. By the sub-additivity of the entropy, we have 
    \[
    \entr{\bm{U}\vert \bm{Z},\hat \theta} \leq \sum_{i=1}^{n} \entr{U_i \vert \bm{Z},\hat \theta}.
    \]
Then, by monotonicity of the entropy and the chain rule, we have
\begin{align}
\entr{U_i \vert \bm{Z},\hat \theta} &\leq \entr{U_i,\mathcal{G}_i \vert \bm{Z},\hat \theta} \nonumber\\
&\leq \entr{\mathcal{G}_i} + \entr{U_i \vert \mathcal{G}_i, \bm{Z},\hat \theta} \label{eq:cond-entropy-step0}.
\end{align}
In the next step, by the definition of conditional entropy,
\[
\entr{U_i \vert  \bm{Z},\hat \theta,\mathcal{G}_i} \leq \entr{U_i \vert \bm{Z},\hat \theta,\mathcal{G}_i=1}+\Pr\left(\mathcal{G}_i=0\right).
\]
Define the random variable $Y_i = \ind\left(i \in \mathcal{V}^{1}\right)$. Notice that $Y_i$ is $(\hat \theta,\bm{Z})$-measurable random variable. Then, using the notations for the disintegrated conditional entropy from \citep{haghifam2021towards}, we have
\begin{align}
    \entr{U_i \vert \bm{Z},\hat \theta,\mathcal{G}_i=1} &= \E\left[ \centr{\bm{Z},\hat \theta,\mathcal{G}_i=1}{U_i} \right] \nonumber\\
    & =\E\left[ \centr{\bm{Z},\hat \theta,\mathcal{G}_i=1}{U_i} \ind\{Y_i=1\}  \right] + \E\left[ \centr{\bm{Z},\hat \theta,\mathcal{G}_i=1}{U_i} \ind\{Y_i=0\}  \right]. \label{eq:cond-entropy-middle-step}
\end{align}
The main observation is that under the events $Y_i=1$ and $\mathcal{G}_i=1$, $U_i$ is deterministically known from $(\bm{Z},\hat \theta)$.
It follows because
\[
\{Y_i=1\}  \wedge \{\mathcal{G}_i=1\}  \Leftrightarrow \{\trfn(\hat \param,Z_{j,i})\geq \lambda ~\text{and}~ \trfn(\hat \param,Z_{1-j,i}) < \lambda\}  \wedge  \{\trfn(\hat \param,Z_{\bar U_i,i})<\lambda\} \Rightarrow U_i = j.
\]
Therefore, since $ \centr{\bm{Z},\hat \theta,\mathcal{G}_i=1}{U_i} \leq 1$ with probability one, by~\cref{eq:cond-entropy-middle-step}
\[
\entr{U_i \vert \bm{Z},\hat \theta,\mathcal{G}_i=1} \leq \Pr\left(Y_i=0\right).
\]
Thus, from~\cref{eq:cond-entropy-step0,eq:cond-entropy-middle-step}, we obtain the following upper bound 
\[
\entr{U_i \vert \bm{Z},\hat \theta} \leq \Pr\left(Y_i=0\right) + \binaryentr{\Pr\left(\mathcal{G}_i=0\right)} + \Pr\left(\mathcal{G}_i=0\right),
\]
where $\binaryentr{\cdot}:[0,1]\to [0,1]$ is the binary entropy function defined as $\binaryentr{x}=-x\log(x)-(1-x)\log(1-x)$. We can lower bound $\mathrm{CMI}_{\Dist}(\Alg_n)$ as follows
\begin{align*}
\mathrm{CMI}_{\Dist}(\Alg_n)  &\geq n - \sum_{i=1}^{n} \entr{U_i \vert \bm{Z},\hat \theta}\\
& \geq n - \sum_{i=1}^{n} \left[ \Pr\left(Y_i=0\right) + \binaryentr{\Pr\left(\mathcal{G}_i=0\right)} + \Pr\left(\mathcal{G}_i=0\right)\right]\\
& = \sum_{i=1}^{n}\Pr\left(Y_i=1\right) - \sum_{i=1}^{n}\left(\binaryentr{\Pr\left(\mathcal{G}_i=0\right)} + \Pr\left(\mathcal{G}_i=0\right)\right),
\end{align*}
where the last step follows because $n  - \sum_{i=1}^{n}  \Pr\left(Y_i=0\right)  = \sum_{i=1}^{n}  \Pr\left(Y_i=1\right) $ as $Y_i$ is an indicator random variable. In the next step, by the definition of soundness from~\Cref{def:tracer-def} and the definition of CMI in~\Cref{def:cmi}, we have
\begin{align*}
\P\left(\mathcal{G}_i =0 \right) &= \E\left[\P\left(\mathcal{G}_i =0 \big|\bm{U} \right)\right]   \\
&= \P_{S_n\sim \Dist^{\otimes n},Z\sim \Dist, \hat \param \sim \mathcal{A}_n(S_n)}\left(\trfn(\hat \param,Z)\geq \lambda\right) \\
&\leq \xi/(n.\log(n)),
\end{align*}

Therefore, 
\[
\mathrm{CMI}_{\Dist}(\Alg_n)  \geq \sum_{i=1}^{n}\Pr\left(Y_i=1\right) - \frac{\xi}{\log(n)} - n \binaryentr{\frac{\xi}{n\log(n)}}
\]

By the recall condition from~\Cref{def:tracer-def} and the definition of CMI in~\Cref{def:cmi}, we have
\begin{align*}
m&=\E_{S_n=(Z_1,\dots,Z_n)\sim \Dist^{\otimes n}, \hat \param \sim \mathcal{A}_n(S_n) }\left[ \big| i \in [n]: \trfn(\hat \param,Z_i) \geq \lambda \big| \right] \\
&= \sum_{i=1}^{n} \P\left(\trfn(\hat \param,Z_i) \geq \lambda\right) \\
& \leq  \sum_{i=1}^{n} \P\left(\trfn(\hat \param,Z_{U_i,i}) \geq \lambda \wedge \mathcal{G}_i \right) + \P\left(\mathcal{G}_i^c\right)\\
& = \sum_{i=1}^{n} \P\left(Y_i=1 \right) + \sum_{i=1}^{n}\P\left(\mathcal{G}_i^c\right)\\
& \leq  \sum_{i=1}^{n} \P\left(Y_i=1 \right) + \frac{\xi}{\log(n)}.
\end{align*}

We also use the following well-known inequality, $\mathrm{H}_b\left(x\right)\leq -x\log(x)+x$ for $x\in [0,1]$. As a result, we obtain
\begin{align*}
   \mathrm{CMI}_{\Dist}(\Alg_n)  &\geq \sum_{i=1}^{n}\Pr\left(Y_i=1\right) - \frac{\xi}{\log(n)} -n \binaryentr{\frac{\xi}{n\log(n)}} \\
    &\geq m -  \frac{2\xi}{\log(n)} - n \left( \frac{\xi}{n.\log(n)} - \frac{\xi}{(n.\log(n))} \log\left(\frac{\xi}{(n.\log(n))}\right)  \right)\\
   & \geq m - \xi\left( \frac{1}{e} + \frac{3}{\log(n)}\right),
\end{align*}
where the last step follows because $-x\log(x)\leq 1/e$ for $x\in [0,1]$.

\end{proof}

\end{document}